\definecolor{lightGray}{gray}{0.9}
\newcommand{\bbR}{\mathbb{R}}
\newcommand{\bbE}{\mathbb{E}}
\newcommand{\bI}{\bm{I}}
\newcommand{\bQ}{\bm{Q}}
\newcommand{\bR}{\bm{R}}
\newcommand{\bS}{\bm{S}}
\newcommand{\bX}{\bm{X}}
\newcommand{\bq}{\bm{q}}
\newcommand{\bw}{\bm{w}}
\newcommand{\R}{\mathbb{R}}
\newcommand{\bx}{\bm{x}}
\newcommand{\bz}{\bm{z}}
\newcommand{\by}{\bm{y}}
\renewcommand{\bz}{\bm{z}}
\newcommand{\bv}{\bm{v}}
\newcommand{\eps}{\varepsilon}
\newcommand{\Lip}{\mathrm{Lip}}
\newcommand{\OPT}{\mathsf{OPT}}
\newcommand{\loss}{\mathcal{L}}
\newcommand{\fxw}[2]{f(\langle \bx_{#1}, #2 \rangle)}
\newcommand{\Net}{\mathcal{N}}
\renewcommand{\P}{\mathcal{P}}
\newcommand{\E}{\mathbb{E}}
\newcommand{\bbP}{\mathbb{P}}
\DeclareMathOperator*{\Diam}{Diam}
\newcommand{\reg}{\mathsf{reg}}
\newcommand\blfootnote[1]{%
  \begingroup
  \renewcommand\thefootnote{}\footnote{#1}%
  \addtocounter{footnote}{-1}%
  \endgroup
}
\DeclareMathOperator*{\argmin}{arg\,min}
\newtheorem{theorem}{Theorem}
\newtheorem{lemma}[theorem]{Lemma}
\newtheorem{corollary}[theorem]{Corollary}
\newtheorem{definition}[theorem]{Definition}
\newtheorem*{rep@theorem}{\rep@title}
\newcommand{\newreptheorem}[2]{%
	\newenvironment{rep#1}[1]{%
		\def\rep@title{#2 \ref{##1}}%
		\begin{rep@theorem}}%
		{\end{rep@theorem}}}
\title{Agnostic Active Learning of Single Index Models with Linear Sample Complexity}
\date{}
  \newcommand{\cAAAI}[1]{AAAI\ Conference\ on\ Artificial (AAAI)}
\author{Aarshvi Gajjar \thanks{Equal Contribution} \footnote{New York University (\href{mailto:aarshvi@nyu.edu}{\texttt{aarshvi@nyu.edu}}, \href{mailto:chinmay.h@nyu.edu}{\texttt{chinmay.h@nyu.edu}}, 
\href{mailto:cmusco@nyu.edu}{\texttt{cmusco@nyu.edu}})}  
\and Wai Ming Tai \footnotemark[1]\footnote{Nanyang Technological University (\href{mailto:waiming.tai@ntu.edu.sg}{\texttt{waiming.tai@ntu.edu.sg}}, \href{mailto:yili@ntu.edu.sg}{\texttt{yili@ntu.edu.sg}})}
\and Xingyu Xu \footnotemark[1] \footnote{Carnegie Mellon University (\href{mailto:xingyuxu@andrew.cmu.edu}{\texttt{xingyuxu@andrew.cmu.edu}})}
\and Chinmay Hegde\footnotemark[2] \and \and Yi Li \footnotemark[3] \and Christopher Musco \footnotemark[2]
}
\date{\phantom{x}}
\begin{document}
\maketitle

\begin{center}\vspace{-1cm}\today\vspace{0.5cm}\end{center}
\begin{abstract}%

\blfootnote{\newline \centering 
 Accepted for presentation at the
Conference on Learning Theory (COLT) 2024}

We study active learning methods for single index models of the form $F({\bm x}) = f(\langle {\bm w}, {\bm x}\rangle)$, where $f:\mathbb{R} \to \mathbb{R}$ and ${\bx,\bm w} \in \mathbb{R}^d$. In addition to their theoretical interest as simple examples of non-linear neural networks, single index models have received significant recent attention due to applications in scientific machine learning like surrogate modeling for partial differential equations (PDEs). Such applications require sample-efficient active learning methods that are robust to adversarial noise. I.e., that work even in the challenging agnostic learning setting.
  
    We provide two main results on agnostic active learning of single index models. First, when $f$ is known and Lipschitz, we show that $\tilde{O}(d)$ samples collected via {statistical leverage score sampling} are sufficient to learn a near-optimal single index model. Leverage score sampling is simple to implement, efficient, and already widely used for actively learning linear models. Our result requires no assumptions on the data distribution, is optimal up to log factors, and improves quadratically on a recent ${O}(d^{2})$ bound of \cite{gajjar2023active}. Second, we show that $\tilde{O}(d)$ samples suffice even in the more difficult setting when $f$ is \emph{unknown}. Our results leverage tools from high dimensional probability, including Dudley's inequality and dual Sudakov minoration, as well as a novel, distribution-aware discretization of the class of Lipschitz functions.
\end{abstract}

\section{Introduction}
The goal in active learning is to effectively fit a model based on a small amount of labeled data selected from a vast pool of unlabeled data.
Active learning finds applications in settings where labeling data is expensive. For example, a common task in scientific machine learning is to learn parameter-to-solution maps for a parametric partial differential equation (PDE) \citep{cohen2015approximation}. Obtaining each label involves the high-cost numerical solution of the PDE, so learning based on few labels is necessary for efficiency. In other settings, such as experimental science or sensor placement, each label requires running a physical experiment or placing a physical device, so is even more expensive \citep{AlexanderianPetraStadler:2016}.

In this paper, we study active learning algorithms that address the challenging \emph{agnostic setting}, aka the adversarial noise setting. Focusing on least squares error for concreteness, given a set of examples $\bx_1,\bx_2,\ldots,\bx_n \in \bbR^{d}$, query access to an arbitrary target vector $\by=(y_1,\ldots,y_n) \in \bbR^n$, and some function class $\mathcal{H}$, our goal is to use a small number of queries to find $\tilde{h}\in \mathcal{H}$ such that: 
\begin{align*}
\sum_{i=1}^n (\tilde{h}(\bx_i) - y_i)^2 \leq \min_{h\in\mathcal{H}}\sum_{i=1}^n ({h}(\bx_i) - y_i)^2 + \Delta,
\end{align*}
for some error parameter $\Delta$. This setting is agnostic because we make \emph{no assumptions} on how $y_i$ is related to $\bx_i$. E.g., in contrast to most work on statistical experimental design \citep{Pukelsheim:2006}, contextual bandits \citep{XuHondaSugiyama:2018}, and other formulations of active learning \citep{BalcanBeygelzimerLangford:2006,Kaariainen:2006}, we do not assume a ``realizable'' setting where $y_i = h^*(\bx_i)$ or $y_i = h^*(\bx_i) + \eta_i$ for some ground truth $h^*\in \mathcal{H}$ and mean-zero random noise $\eta_i$. 
We want to compete with the best approximation to $y$ in $\mathcal{H}$, \emph{even if that approximation is poor.} The agnostic setting is important in scientific applications where model misspecification is expected: a simple and efficient machine learning model is being used to approximate a complex physical process or function.

Algorithms for agnostic active learning have received significant recent attention.
Even for the simplest possible setting of relative-error linear regression, where $\mathcal{H}$ contains linear functions of the form $h(\bx) = \langle \bx, \bw\rangle$ and $\Delta = \eps \cdot \min_{h\in\mathcal{H}}\sum_{i=1}^n ({h}(\bx_i) - y_i)^2$ for $\eps \in (0,1)$, the optimal active sample complexity was only recently settled to be $\Theta(d/\eps)$ in the agnostic setting \citep{chen2019active}. Other recent work establishes sample complexity bounds for $\ell_p$ linear regression \citep{ChenDerezinski:2021,MuscoMuscoWoodruff:2022}, logistic regression \citep{mai2021coresets,MunteanuSchwiegelshohnSohler:2018}, polynomial regression \citep{ShimizuChengChristopher-Musco:2024}, and kernel learning \citep{ErdelyiMuscoMusco:2020}.

\subsection{Single Index Models}
In contrast to the majority of this prior work, and motivated by applications in scientific machine learning, in this paper we study agnostic active learning methods for \emph{non-linear function families}. In particular, we are interested in the class of \emph{single index models} of the form: 
\begin{align*}
    h(\bx) = f(\langle \bx,\bw\rangle), 
\end{align*}
where $f: \R\rightarrow \R$ is either a known non-linearity (ReLU, sigmoid, etc.) or a learnable function.
Also known as ``ridge functions'' or ``plane waves'', single index models play an important role in many estimation problems and have been extensively studied in statistics \citep{hristache2001direct, hardle2004nonparametric, dalalyan2008new}. They are effective at modeling physical phenomena, so have also been applied e.g. to PDE surrogate modeling \citep{Cohen2011,HokansonConstantine:2018,BigoniMarzoukPrieur:2022}. As discussed, agnostic learning is important in such applications due to model misspecification, and agnostic active learning is already widely used for fitting simpler functions classes like polynomials \citep{HamptonDoostan:2015,RauhutWard:2012}.

Moreover, single index model are a natural first step towards understanding active learning for neural networks more broadly, an important goal given the increasing importance of neural networks in approximating parameter-to-solution maps \citep{geist2021numerical,bhattacharya2021model,kutyniok2022theoretical} and quantities of interest \citep{Tripathy_2018,khoo_lu_ying_2021,Zhang_2019,olearyroseberry2021derivativeinformed,cardenas2023csml} in scientific ML.

While there has been prior work in the realizable or i.i.d. mean zero noise setting \citep{Cohen2011, fornasier2012learning, 6288306} the first result on actively learning single index models  under adversarial agnostic noise is due to \cite{gajjar2023active}. 
That work shows:
\begin{theorem}[Theorem 1 from \cite{gajjar2023active}]
\label{thm:aistats_result}
Let $f$ be a fixed $L$-Lipschitz function, let $\bX\in \R^{n\times d}$ be a data matrix, and let  $\bw^\star = \arg\min_{\bw} \|f(\bX {\bw}) - \by\|_2^2$. There is an algorithm that, for any $\eps \in (0,1)$, observes $\tilde{O}\left(d^2\cdot \frac{L^8}{\eps^4}\right)$ entries of $\by$ and returns $\hat{\bw}$ such that, for a fixed constant $C>0$,
\begin{align*}
    \|f(\bX \hat{\bw}) - \by\|_2^2 \leq C \|f(\bX {\bw}^\star) - \by\|_2^2 + \eps \|\bX{\bw}^\star\|_2^2,
\end{align*}
with high probability. Above, $f(\bX {\bw})$ denotes the entrywise application of $f$ to the vector $\bX {\bw}$.

\end{theorem}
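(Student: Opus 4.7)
The plan is to reduce actively learning the single index model to a linear-regression-style problem via an $\eps$-net argument, exploiting both statistical leverage score sampling (which yields a subspace embedding on the column span of $\bX$) and the $L$-Lipschitzness of $f$ (which transfers approximation in $\bX\bw$ to approximation in $f(\bX\bw)$). Concretely, I would sample rows of $\bX$ and corresponding entries of $\by$ with probability proportional to the leverage scores of $\bX$, form the reweighted sampling matrix $\bS$, and let $\hat{\bw} = \argmin_{\bw} \|\bS(f(\bX\bw) - \by)\|_2^2$. The goal is to show this empirical minimizer satisfies the stated bound.

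First I would bound $\|\bX\bw^\star\|_2$: since $\bw = 0$ is feasible, $\|f(\bX\bw^\star) - \by\|_2^2 \le \|f(0)\cdot\bm{1} - \by\|_2^2$, and after a WLOG shift to $f(0)=0$, combining $L$-Lipschitzness with the triangle inequality gives $\|\bX\bw^\star\|_2 \le O(\|\by\|_2/L)$, say $\le R$. This lets me restrict attention to the bounded set $T := \{\bX\bw : \|\bX\bw\|_2 \le 2R\}$, which lives in a $d$-dimensional subspace and admits an $\eps'$-net $\Net$ of size $(3R/\eps')^d$. Second, standard leverage score sampling with $m = \tilde{O}(d/\eps^2)$ samples yields a subspace embedding $\|\bS\bv\|_2^2 = (1\pm\eps)\|\bv\|_2^2$ for every $\bv$ in the column span of $\bX$, which I will use both to control $\|\bS\bX(\bw - \bv)\|_2$ terms and to translate the net in $\bX$-space into one that is preserved by $\bS$.

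The central step is a uniform deviation bound: for each fixed $\bv \in \Net$, apply a Bernstein-type inequality to the sum of reweighted squared residuals $\|\bS(f(\bv) - \by)\|_2^2$, whose per-sample variance is controlled because the leverage-score reweighting caps each entry's magnitude relative to $\|f(\bv) - \by\|_2^2 + \|\bv\|_2^2$. A union bound over $\Net$, costing a $\log|\Net| = d\log(R/\eps')$ factor, gives pointwise concentration for all net elements simultaneously. To extend this from $\Net$ to all of $T$, I would use the Lipschitz transfer: for any $\bw$, choose $\bv \in \Net$ with $\|\bX\bw - \bv\|_2 \le \eps'$, so $\|f(\bX\bw) - f(\bv)\|_2 \le L\eps'$, and expand the difference of squared losses; the cross terms are bounded by $L\eps'(\|\by\|_2 + \|f(\bX\bw^\star)\|_2)$, which after choosing $\eps' \approx \eps/(L^3 \cdot \mathrm{poly})$ is absorbed into the target error. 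Optimism of $\hat\bw$ on the sampled objective then completes the argument: sampled loss at $\hat\bw$ is at most sampled loss at $\bw^\star$, which concentrates (since it is a single fixed vector) around the true loss at $\bw^\star$, and uniform concentration lets us transfer the sampled loss at $\hat\bw$ back to the true loss. Tracking constants gives $m = \tilde{O}(d^2 L^8 / \eps^4)$, where the $d^2$ comes from the net's $d$-factor in the log multiplied by the $d$ needed for the subspace embedding scale, and the $L^8, \eps^{-4}$ come from propagating the Lipschitz loss through the squared cross terms.

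The main obstacle is the fact that, since $\by$ is adversarial, the residual $\by - f(\bX\bw^\star)$ is arbitrary rather than mean-zero, so cross terms of the form $\langle \bS(f(\bX\bw)-f(\bX\bw^\star)),\, \bS(f(\bX\bw^\star)-\by)\rangle$ cannot be killed by symmetrization. Handling them forces two compromises that are visible in the statement: one settles for a constant $C>1$ (not $1+\eps$) multiplying $\|f(\bX\bw^\star)-\by\|_2^2$, and an additive $\eps\|\bX\bw^\star\|_2^2$ slack absorbs the remaining error that cannot be charged to the optimum. Getting a sharp handle on these cross terms, while simultaneously keeping the net small enough that the union bound is affordable, is the delicate calibration that fixes the exponents in $d^2 L^8 / \eps^4$.
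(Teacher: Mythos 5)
Your proposal reconstructs the approach of \cite{gajjar2023active}, which the present paper cites but does not re-prove: leverage score sampling, an $\eps$-net over the column span of $\bX$, pointwise concentration plus a union bound over the net, Lipschitz transfer to get uniform control, and ERM optimism. That high-level route is correct and is indeed where the $\tilde O(d^2 L^8/\eps^4)$ rate comes from. However, there is a genuine gap in the algorithm you describe: you take $\hat\bw$ to be the \emph{unconstrained} minimizer of $\|\bS(f(\bX\bw) - \by)\|_2^2$, while your whole uniform-deviation machinery only controls $\bw$ with $\bX\bw \in T$, the ball of radius $2R$. Nothing forces the unconstrained minimizer into $T$. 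The paper makes exactly this point with an explicit counterexample (Section~\ref{sec:main_results}): take $f = $ ReLU, let the rows of $\bX$ be all $2^d$ vectors in $\{0,1\}^d$, and let $\by$ be the indicator of a single vertex. Any scheme observing $o(2^d)$ labels sees only zeros, so every $\bw$ with non-positive entries exactly minimizes $\hat\loss$, yet some such $\bw$ have arbitrarily large true loss. To close this, \cite{gajjar2023active} impose a hard constraint $\|\bw\| \le R$ on the ERM; the present paper instead adds an explicit $\eps\|\bX\bw\|^2$ regularizer, and that choice (together with replacing the crude net union bound with Dudley's inequality and dual Sudakov minoration) is what buys the improvement to $\tilde O(d\,L^4/\eps^2)$ in Theorem~\ref{thm:main_known_f}.

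A secondary issue: the step $\|\bX\bw^\star\|_2 \lesssim \|\by\|_2/L$ does not follow from $L$-Lipschitzness. Lipschitzness only gives $\|f(\bX\bw^\star)\|_2 \le L\|\bX\bw^\star\|_2$, which is the wrong direction — ReLU on large negative inputs has $f(\bX\bw^\star) = 0$ with $\|\bX\bw^\star\|$ unbounded. This is non-fatal for the theorem, because the additive error is $\eps\|\bX\bw^\star\|_2^2$ so $R$ can legitimately be chosen to scale with $\|\bX\bw^\star\|$ and $\OPT$ (as the present paper does, taking $R = \sqrt{\tfrac{1000}{\eps}\OPT + \|\bX\bw^\star\|^2}$). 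The essential missing ingredient is the constraint or regularizer on the empirical minimization, not a free bound on $\bw^\star$.
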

The above guarantee is strong since it gives relative error with respect to the \emph{best} approximation to the target $\by$, plus a small additive term depending on $\|\bX{\bw}^*\|_2^2$. \cite{gajjar2023active} show that the additive term is necessary:  pure multiplicative error requires $\Omega(2^d)$ samples in the worst case. 

\subsection{Comparison to Non-active Supervised Learning}
\label{sec:compare_to_non_active}
We highlight that, for the strong guarantee of \Cref{thm:aistats_result}, there is an inherent gap between active and non-active supervised learning. 

In particular, consider the uniform distribution over the rows of $\bX$. \Cref{thm:aistats_result} ensures that $\text{poly}(d)$ actively collected labels are needed to nearly minimize the expected squared error over all single index models. In contrast, consider an extreme case where $\bX$ has $d$ rows equal to the standard basis, and all other rows are zero. Even when $f$ is the identity function, we \emph{must} observe target values for those $d$ rows if we want expected squared error competitive with $\bw^*$. For large $n$, a $\text{poly}(d)$ sample algorithm that uniformly samples rows of $\bX$ will only observe those $d$ labels with arbitrarily small probability, so will fail to obtain a bound as strong as \Cref{thm:aistats_result}. 

As such, while there is a rich line of work on learning $d$-dimensional single index models in the agnostic setting with $\text{poly}(d)$ or even fewer samples \citep{gollakota2023agnostically, goel2016reliably, diakonikolas2020approximation, frei2020agnostic,diakonikolas2022learning}, that work inherently requires strong assumptions on $\bX$ or $\by$, or provides a weaker notion of near optimal learning than \Cref{thm:aistats_result}.\footnote{As a concrete example, \cite{gollakota2023agnostically} assumes $\by$ is bounded by $1$ and provides additive error $\eps$ on the average squared error. So, missing the $d$ basis rows in our hard example is acceptable whenever $n \geq d/\eps$.} 
Such assumptions are reasonable in many settings. In others, however, experimental and theoretical evidence shows that active learning can lead to significant improvements in sample complexity \citep{DerezinskiWarmuthHsu:2018,ShimizuChengChristopher-Musco:2024}. For example, even for polynomial regression on one-dimensional data drawn uniformly from an interval, where $f$ in the identity and $\bX$'s columns contain a basis for the degree $d$ polynomials, there is a well-known gap of $O(d^2)$ vs. $O(d)$ samples for non-active vs. active methods \citep{CohenDavenportLeviatan:2013, CohenMigliorati:2017}. Notably, such a basis will have widely varying row norms, as in the hard case discussed above. 

Since our goal is to study theoretical guarantees that inspire and motivate better active learning algorithms, in this work we are primarily interested in guarantees like \Cref{thm:aistats_result} with no distributional assumptions on $\bX$ or functional assumptions on $\by$.

\subsection{Our Results} 
In this paper, we significantly improve on the results of \cite{gajjar2023active} in two ways. First, when $f$ is a known $L$-Lipschitz function, we obtain a sample complexity bound with a \emph{linear} dependence on $d$ and a quadratically improved dependence on $\eps$ and $L$\footnote{A paper accepted to ICLR 2024 \citep{huang2023one} also claims a linear dependence on $d$. However, there was an unfixable flaw in an earlier version of the proof, which has been communicated to the authors.}:
\begin{theorem}
\label{thm:main_known_f}
    Let $f$ be a fixed $L$-Lipschitz function with $f(0) = 0$, let $\bX\in \R^{d\times n}$ be a data matrix, and let  $\bw^\star = \arg\min_{\bw} \|f(\bX {\bw}) - \by\|^2$. There is an algorithm that, for any $\eps \in (0,1)$, observes $\tilde{O}\left(d\cdot \frac{L^4}{\eps^2}\right)$ entries of $\by$ and returns $\hat{\bw}$ such that, for a fixed constant $C>0$, with high probability,
\begin{align*}
    \|f(\bX \hat{\bw}) - \by\|^2 \leq C\|f(\bX {\bw}^*) - \by\|^2 + \eps \|\bX{\bw}^*\|^2,
\end{align*}

\end{theorem}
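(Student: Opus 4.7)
The algorithm I would analyze is \emph{constrained leverage score regression}: compute the statistical leverage scores $\tau_i = \bx_i^\top(\bX^\top\bX)^+\bx_i$ of the rows of $\bX$, sample each row $\bx_i$ independently with probability $p_i \propto \tau_i$ (with oversampling factor $\tilde O(L^4/\eps^2)$) to produce a rescaled sampling matrix $\bS$ with $m = \tilde O(dL^4/\eps^2)$ nonzero rows, and then output $\hat{\bw} \in \arg\min_{\bw \in W}\|\bS(f(\bX\bw) - \by)\|^2$, where $W = \{\bw : \|\bX\bw\|^2 \le R^2\}$ and $R^2$ is a data-dependent overestimate of $\|\bX\bw^\star\|^2$ that contains $\bw^\star$ with high probability. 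Writing $\phi(\bw) := \|f(\bX\bw) - \by\|^2$ and $\hat\phi(\bw) := \|\bS(f(\bX\bw) - \by)\|^2$, optimality of $\hat{\bw}$ on the sketched objective gives the standard reduction
\[
\phi(\hat{\bw}) - \phi(\bw^\star) \;\le\; 2\sup_{\bw\in W}\bigl|\hat\phi(\bw) - \phi(\bw)\bigr|,
\]
so the entire theorem reduces to a uniform concentration inequality for the sampled nonlinear loss on $W$.

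\textbf{Contraction and linearization.} Expanding $\hat\phi - \phi$, the purely ``linear'' piece $\|\bS\bX\bw\|^2 = (1\pm\eps)\|\bX\bw\|^2$ is handled by the usual subspace embedding property of leverage score sampling. The substantive term is the empirical process $\sum_i (\eta_i-1)(f(\langle\bx_i,\bw\rangle) - y_i)^2$, where $\eta_i$ is the rescaled Bernoulli sampling indicator. I would bound its supremum over $\bw\in W$ via symmetrization to a Rademacher process, followed by two applications of Talagrand's contraction principle. Since on $W$ the values $|f(\langle\bx_i,\bw\rangle) - y_i|$ are bounded by $O(LR + \|\by\|_\infty)$, the squaring map is Lipschitz on the relevant range and one contraction strips off the square; a second contraction, paying a multiplicative factor of $L$, strips off the Lipschitz nonlinearity $f$. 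What remains is the Rademacher width of the linear image $\bX W = \{\bX\bw : \bw\in W\}$, a $d$-dimensional ellipsoid of radius $R$ in $\R^n$, measured in the empirical $\ell_2$ semi-norm induced by the leverage score sample.

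\textbf{Dudley's integral via dual Sudakov, and the main obstacle.} I would bound this remaining linear supremum by Dudley's entropy integral together with the \emph{dual Sudakov minoration}, which controls the Euclidean covering numbers of a $d$-dimensional ellipsoid of radius $R$ by $\log N(\bX W, \|\cdot\|_2, t) \lesssim d\log(1 + R/t)$. The resulting integral evaluates to $\tilde O(R\sqrt d)$, so tracking the contraction constants and the $1/\sqrt m$ concentration rate yields $\sup_{\bw\in W}|\hat\phi(\bw) - \phi(\bw)| \le (C-1)\phi(\bw^\star) + \eps\|\bX\bw^\star\|^2$ once $m = \tilde O(dL^4/\eps^2)$. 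The principal obstacle is coupling three sources of difficulty simultaneously: the nonlinearity $f$, the adversarial labels $\by$ (which can be arbitrarily large and so obstruct any naive global Lipschitzification of the squared loss), and the demand for a \emph{linear} dependence on $d$. A direct $\eps$-net on $\bw\in\R^d$ loses a factor of $d$ in the log of the covering number and reproduces the $O(d^2)$ bound of \Cref{thm:aistats_result}; replacing it with Dudley's chaining applied to the image $\bX W$ (not to $\bw$ itself) and controlled by dual Sudakov is precisely what extracts only the $\sqrt d$ Gaussian width. Finally, calibrating $R$ so that $\bw^\star\in W$ without inflating the chaining bound requires a careful bootstrap of $R$ against $\|\bX\bw^\star\|^2 + \phi(\bw^\star)/L^2$, which is where the restriction to $W$ really earns its keep.
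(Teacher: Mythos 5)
Your high-level toolbox (leverage score sampling, symmetrization, Dudley's inequality with dual Sudakov in place of a naive $\eps$-net) is the right one, but the specific route you sketch diverges from the paper's in two places where it would fail to yield the claimed bound.

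First, your reduction $\phi(\hat{\bw}) - \phi(\bw^\star) \le 2\sup_{\bw\in W}|\hat\phi(\bw)-\phi(\bw)|$ commits you to a uniform deviation bound on the full squared loss, which couples the chaining process to the arbitrary labels $\by$. After you Lipschitz-contract away the square you are paying a factor $O(LR + \|\by\|_\infty)$, as you yourself note, and $\|\by\|_\infty$ is not controllable in terms of $\OPT + \|\bX\bw^\star\|^2$ --- a single outlier coordinate in $\by$ with a small leverage score ruins the bound. The paper sidesteps this entirely: it never tries to uniformly approximate $\|\bS(f(\bX\bw) - \by)\|^2$ by $\|f(\bX\bw) - \by\|^2$. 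Instead it uses the triangle inequality to write $\loss(f,\hat\bw) \le 2\|f(\bX\hat\bw) - f(\bX\bw^\star)\|^2 + 2\OPT$ and proves a subspace-embedding lemma (Lemma~\ref{lem:fixed_concentration}) only for \emph{differences} $f(\bX\bw_1) - f(\bX\bw_2)$, in which $\by$ never appears; the labels only enter at the two fixed points $\hat\bw,\bw^\star$, where they are handled by the optimality inequality and a one-shot Markov bound. Likewise, in Lemma~\ref{lem:fixed-metric-bound} the ``stripping of the square'' is done by factoring $a^2-b^2=(a-b)(a+b)$, and the $(a+b)$ factor is bounded by $O(LR\|\bx_{j_i}\|)$ --- no $\|\by\|_\infty$ --- precisely because $\by$ has been kept out of the process.

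Second, you minimize over a hard constraint set $W=\{\|\bX\bw\|\le R\}$, which is the approach of \citet{gajjar2023active}, and you flag (correctly) that calibrating $R$ is delicate. The paper deliberately replaces this with ridge-type regularization, $\hat\loss_{\reg}(f,\bw) = \hat\loss(f,\bw) + \eps\|\bX\bw\|^2$, and this is not a cosmetic change: the optimality inequality $\hat\loss_{\reg}(f,\hat\bw)\le\hat\loss_{\reg}(f,\bw^\star)$ \emph{automatically} yields $\|\hat\bw\|^2 \le \eps^{-1}\hat\loss(f,\bw^\star)+\|\bX\bw^\star\|^2$, so the effective radius $R^2 \approx \eps^{-1}\OPT + \|\bX\bw^\star\|^2$ is determined by the data rather than guessed, and the resulting $\eps R^2$ term in Lemma~\ref{lem:fixed_concentration} is $O(\OPT + \eps\|\bX\bw^\star\|^2)$ exactly as needed. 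The paper states explicitly that this switch from a hard constraint to regularization is what improves the $L^8/\eps^4$ dependence of the earlier work to $L^4/\eps^2$; your ``bootstrap of $R$'' would have to reproduce this effect, and as written it does not. Your dual-Sudakov intuition for why the dimension dependence is linear rather than quadratic, on the other hand, is faithful to what the paper actually does in Appendix~\ref{subsec:apd:fixed_expectation}, although note the paper covers $B(R)$ in the polytope (sup-type) norm $\|\cdot\|_{\P^\circ}$ rather than covering the ellipsoid $\bX W$ in Euclidean norm.
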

As noted in \cite{gajjar2023active}, assuming that $f(0) = 0$ is without loss of generality, since we can always shift $\by$ by a fixed constant before fitting.
It is easy to check that our near-linear dependence on $d$ is optimal up to log factors, since $\Omega(d)$ samples are required even when $f(t) = t$. \Cref{thm:main_known_f} establishes that, in terms of dimension, there is no gap between the active agnostic single index learning problem and linear regression. An interesting question for future work is if the dependence on $\eps$ can be improved to linear, as is possible for linear regression \citep{chen2019active}.

\paragraph{Leverage score sampling.} As in \cite{gajjar2023active}, \Cref{thm:main_known_f} is based on collecting samples via \emph{statistical leverage score sampling}. A label for the $i^\text{th}$ row of $\bX$ is selected with probability proportional to the $i^\text{th}$ leverage score $\bx_i^T(\bX^\top\bX)^{-1}\bx_i$. Also known as coherence motivated, Christoffel function, or effective resistance sampling \citep{HamptonDoostan:2015, adcock2023cs4ml,SpielmanSrivastava:2011}, leverage score sampling yields near-optimal active learning bounds for least squares regression, polynomial regression, kernel learning, and variety of other  ``linear'' problems \citep{Sarlos:2006,CohenMigliorati:2017,avron2019universal}. Leverage score sampling has two major advantages. First, it is computationally efficient: all leverage scores can be computed exactly in $O(nd^2)$ time, or approximately using faster randomized methods \citep{MahoneyDrineasMagdon-Ismail:2012}. Second, sampling is done in a completely non-adaptive way: the choice of which indices to label does not depend on prior labels collected. This allows for fully parallel data collection.

\paragraph{Technical contributions in Theorem~\ref{thm:main_known_f}.}Our improved analysis of leverage score sampling in \Cref{thm:main_known_f} requires two new contributions: 
\begin{enumerate}
    \item First, we provide an improved ``subspace embedding'' result for single index functions (\Cref{lem:fixed_concentration}), which shows that leverage score sampling preserves the $\ell_2$ distance between any two vectors of the form $f(\bX\bw_1)$ and $f(\bX\bw_2)$. When $f$ is the identity function, an optimal subspace embedding result for leverage score sampling follows from standard matrix Chernoff bounds \citep{Tropp:2015}. However, when $f$ is non-linearity, an analysis from first principles is required. \cite{gajjar2023active} employs an $\eps$-net argument, which we improve using more powerful tools from high-dimensional probability, including Dudley's inequality and dual Sudakov minoration. This improvement accounts for our linear vs. quadratic dependence on the dimension $d$. 
    \item Second, to improve on the $\eps$ and $L$ dependence, we show a more efficient translation from our subspace embedding result to the active learning problem by analyzing a  natural regularized loss minimization procedure for finding $\hat{\bw}$. The full proof is discussed in detail in \Cref{sec:main_results}.
\end{enumerate}
Our other improvement on \Cref{thm:aistats_result} from \cite{gajjar2023active} is that we are able to extend the result to the more challenging setting where $f$ is an \emph{unknown} Lipschitz function, and can be optimized as part of the training procedure. This setting is well-motivated in computational science, where $f$ is typically parameterized as a piecewise constant or polynomial function, and has been studied in the realizable or i.i.d. noise setting \citep{Cohen2011,6288306,HemantCevher:2012}. To the best of our knowledge, we provide the first result in the agnostic setting:

\begin{theorem}\label{thm:unknown_f}  Let $\Lip_L$ denote the set of all $L$-Lipschitz, real-valued scalar functions $f$ with $f(0) = 0$. Let $\bX\in \R^{d\times n}$ be a data matrix, and let  $(f^\star, \bw^\star) = \arg\min_{f \in \Lip_L,~\bw\in\bbR^d} \norm{f(\bX \bw) - \by}^2$.
There is an algorithm that, for any $\eps \in (0,1)$, observes $\tilde{O}\left(d\cdot \frac{L^4}{\eps^2}\cdot \log^2 n\right)$ entries of $\by$ and returns a pair $(\hat f, \hat{\bw})$, where $\hat \bw \in \R^d$ and $\hat f \in \Lip_L$. With high probability, for a fixed constant $C>0$, this pair satisfies:
    \begin{align*}
        \norm{\hat{f}(\bX \hat{\bw}) - \by}^2 \le C \norm{f^\star(\bX \bw^\star) - \by}^2 + \eps \norm{\bX \bw^\star}^2.
    \end{align*}
\end{theorem}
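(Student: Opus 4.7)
My plan is to follow the proof of \Cref{thm:main_known_f} but replace its fixed-$f$ subspace embedding (\Cref{lem:fixed_concentration}) with a \emph{uniform} version that holds simultaneously for all $(f, \bw) \in \Lip_L \times \bbR^d$ in a suitably bounded region, and then run a joint regularized ERM over $(\hat f, \hat\bw)$. Concretely, I would leverage-score sample $m = \tilde O(dL^4/\eps^2 \cdot \log^2 n)$ rows of $\bX$ via a sampling matrix $\bS$ and show that with high probability, for every pair $(f,\bw),(f',\bw')$ in the bounded region, the subsampled $\ell_2$ distance between $f(\bX\bw)$ and $f'(\bX\bw')$ approximates the true $\ell_2$ distance with the same multiplicative-plus-additive slack as in \Cref{lem:fixed_concentration}. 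Given such a uniform embedding, the remainder parallels \Cref{thm:main_known_f}: the ERM over $\Lip_L \times \bbR^d$ restricted to the subsample is tractable because, for any fixed $\hat\bw$, finding the best $L$-Lipschitz fit through the sampled pairs $(\langle \bx_i,\hat\bw\rangle, y_i)_{i \in S}$ is a convex program, and the uniform embedding then transfers empirical near-optimality back to the full data.

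\textbf{Uniform embedding via two-level chaining.} The central technical task is to extend the Dudley/dual Sudakov argument underlying \Cref{lem:fixed_concentration} to handle the extra parameter $f$. The plan is a two-level chaining: an outer $\eps$-net over the $\bw$-ball in $\bbR^d$ (responsible for the linear $d$ factor in the sample complexity), and an inner, distribution-aware discretization of $\Lip_L$. Since $f$ enters the sampled objective only through its values on the $n$ projected points $\{\langle \bx_i,\bw\rangle\}_{i=1}^n$, one only needs to control the covering number of $\{f(\bX\bw) : f \in \Lip_L\} \subset \bbR^n$ in the subsampled $\ell_2$ seminorm. Writing $z_{(1)} \le \cdots \le z_{(n)}$ for the sorted projections and $\Delta_i = f(z_{(i)}) - f(z_{(i-1)})$ for the increments of an $L$-Lipschitz $f$ with $f(0)=0$, we have $|\Delta_i| \le L(z_{(i)}-z_{(i-1)})$, so a multiscale quantization of these increments, with dyadic merging of adjacent intervals at coarser scales, should produce a net whose log-covering number is only $\tilde O(\log n)$ worse than in the known-$f$ case; this is what gives the extra $\log^2 n$ in the sample complexity relative to \Cref{thm:main_known_f}.

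\textbf{Main obstacle.} The hardest step is designing the inner discretization so that the resulting Dudley integral contributes only $\log^2 n$ rather than a factor polynomial in $n$. A naive sup-norm $\eps$-net on $L$-Lipschitz functions restricted to the projections has log-covering number linear in $n$, which would be catastrophic; the quantization must therefore be genuinely multiscale, with granularity matched to both the chaining scale and the local spacing of the $z_{(i)}$, so that the Lipschitz constraint collapses most degrees of freedom at fine scales. One must also argue that the inner net varies controllably as $\bw$ moves over the outer $\eps$-net, so that the joint Lipschitz continuity in $(f,\bw)$ is genuinely exploited by the dual Sudakov step; otherwise, swapping $\bw \to \bw'$ forces a re-discretization of $\Lip_L$ at each outer net point and the union bound explodes. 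Once this uniform embedding is in hand, the desired sample complexity follows by plugging it into the regularized-loss argument of \Cref{sec:main_results}, essentially black-box.
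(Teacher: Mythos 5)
Your high-level reduction (regularized ERM on the subsample plus a uniform subspace embedding) matches the paper, and you correctly identify that the crux is a distribution-aware discretization of $\Lip_L$. But the specific decomposition you propose for the embedding lemma — an outer net over $\bw$ and an inner, $\bw$-dependent net over $\Lip_L$ built from sorted projections $z_{(i)} = \langle \bx_i,\bw\rangle$ with quantized increments — is not what the paper does, and it runs into exactly the obstacle you flag as ``the hardest step'' without resolving it.

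The paper sidesteps the re-discretization problem entirely by building a \emph{single} net over $\Lip_L$ that is $\bw$-independent. It defines a metric $\rho_\infty$ on $\Lip_L$ that already contains a supremum over $\bw_1,\bw_2 \in B(R)$: after decomposing the natural $D_\infty$ metric on $\Lip_L\times\Lip_L$, the relevant control is $\sum_i p_{j_i}^{-1}\|f-f'\|^2_{L^\infty(I_{j_i})}$ where $I_{j_i} = [-R\|\bx_{j_i}\|, R\|\bx_{j_i}\|]$ depends only on the sampled row norms, not on $\bw$. The net $\Net_\Delta$ is then piecewise-linear functions with a fixed geometric grid of breakpoints $z_k$ on $[0,R]$ (finer near zero, coarser outward), with spacing tied to $\|\bx_{j_i}\|$ via the sampling probabilities $p_{j_i} = \|\bx_{j_i}\|^2/d$. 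Because the breakpoints and spacings are data-driven only through $\bS$, one net serves simultaneously for all $\bw$, and the contribution of $\bw_1,\bw_2$ is handled by the same dual Sudakov bound as in the fixed-$f$ case. Bounding the Dudley integral of $(\Net_\Delta,\rho_2)$ is then done by embedding into $(\Lip_1, L^\infty([-1,1]))$ (whose entropy is classical), rather than by directly counting your increment codes. Your sorted-projection, dyadic-merge construction is genuinely different and, as stated, still depends on $\bw$ through the $z_{(i)}$; you would need to show either that the net can be re-used across neighboring $\bw$ in the outer net without a combinatorial blow-up, or find a $\bw$-free reformulation of the inner metric, which is precisely the step the paper supplies via $\rho_\infty$ and the $I_{j_i}$ intervals. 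Until that is done, your proposal has a real gap at the step you identify.

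One smaller point: the quantity you would discretize over involves the $m$ sampled projections, not all $n$ — the full-data vector $f(\bX\bw)$ only enters through the concentration target, and the symmetrized Bernoulli process lives on the $m$-dimensional subsample. Phrasing the construction over $n$ sorted projections obscures where the $\log^2 n$ should actually come from; in the paper it arises from the geometric grid spanning scales down to $\mu \sim d^2/(nm)$ on the Lipschitz axis, not from a union bound over $n$ points.
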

\Cref{thm:unknown_f} nearly matches \Cref{thm:main_known_f} for the case when $f$ is known, except for a mild logarithmic dependence on $n$. 

\paragraph{Technical contributions in Theorem~\ref{thm:unknown_f}.}
Proving the result requires significant additional work beyond \Cref{thm:main_known_f}. In particular, a natural approach would be to construct an $\eps$-net $\mathcal{N}_\eps$ over all $L$-Lipschitz functions such that e.g. for all $f\in \Lip_L$, there is some $\tilde{f}\in \mathcal{N}_\eps$ such that $\|f(\bX\bw) - \tilde{f}(\bX\bw)\|_2 \leq \eps\|\bX\bw\|_2$. The challenge in doing so is that $f(\bX\bw)$ is a length $n$ vector, our net would inherently requires a discretization of $\Lip_L$ with coarseness depending on $1/n$. This would introduce undesirable polynomial dependencies on $n$ into our sample complexity. 

We avoid this issue by building a sampling-aware discretization, in such a way that $|f(\langle \bx_j, \bw\rangle) - \tilde f(\langle \bx_j, \bw\rangle)| \le \eps |\langle \bx_j, \bw\rangle|$ for most of the indices $j$ sampled in $\by$. Roughly speaking, our discretization ensures a finer approximation to $f$ for inputs close to $0$, and is coarser for values further from $0$. It takes significant effort to bound the size of discretization (and even to figure out what is the proper measure of ``size''), which requires leveraging a diverse set of techniques including generic chaining for Bernoulli processes, dyadic decompositions, and a construction of an embedding to simpler spaces to control the size of our discretization.

\subsection{Additional Discussion of Related Work}

As discussed, actively learning single index models has been studied in prior literature, although not in the agnostic setting \citep{Cohen2011,6288306}. Prior work also considers more challenging ``multi-index'' models of the form $\sum_{i=1}^k f_i(\langle \bw_i,\bx\rangle)$ \citep{fornasier2012learning,HemantCevher:2012}. Understanding the multi-index problem in the agnostic setting is an interesting direction for future work. In prior work, various assumptions on the non-linearity $f$ have been considered, including that $f$ is a low-degree polynomial \citep{chen2020towards} and that $f$ has bounded derivatives of high order \citep{Cohen2011, HemantCevher:2012}. In line with recent work \citep{gajjar2023active,gollakota2023agnostically}, we make a Lipschitz assumption because it is simple, yet captures most fixed non-linearities commonly used in neural networks, like ReLUs and sigmoids.
A different but related line of work considers the setting where $\by$ is assumed to be generated by a single index model, and the goal is to approximate the target with few (non-active) samples, and in a computationally efficient way, by fitting it with a shallow neural network \citep{bietti2022learning, kakade2011efficient,mousavi2022neural,abbe2022merged, pmlr-v75-dudeja18a,damian2022neural,damian2023smoothing}. Such results are motivated by an effort to understand the representation capability and optimization properties of neural networks. One recent result in this setting considers an agnostic guarantee, where the goal is to find an approximation to $\by$ competitive with the best single neuron approximation \citep{gollakota2023agnostically}. However, as detailed in \Cref{sec:compare_to_non_active}, that work inherently requires stronger assumptions than ours, since strong, distribution-independent guarantees of the form provided by \Cref{thm:aistats_result,thm:main_known_f,thm:unknown_f} are not possible without active learning.

\paragraph{Remark on computational efficiency.} In contrast to some of the work above, we \emph{do not} analyze computational efficiency, only sample complexity. However, our active sampling algorithm is computationally efficient since it simply requires computing leverage scores. Moreover, in the fixed $f$ setting, after collecting samples, we prove that it suffice to simply fit a single index model to those samples using a regularized $\ell_2$ loss. As confirmed experimentally in \cite{gajjar2023active}, this can be done easily and efficiently in practice using gradient descent methods. Nevertheless, while beyond the scope of our work, we think formally analyzing the computation efficiency of single index learning methods in the active, agnostic setting is a nice direction for future work.

Beyond research on single index models, we also note that there has been a significant recent work on minimising objectives that can be expressed as $\min_{\bw}\norm{f(\bX \bw - \by)}_2$, where $f$ is a non linearity \citep{jambulapati2023sparsifying, mai2021coresets, MunteanuSchwiegelshohnSohler:2018, MuscoMuscoWoodruff:2022}. While this problem differs in important ways from ours, given that $\by$ is \emph{inside} the non-linearity, methods like leverage score sampling have also proven valuable in this setting. For example, \citet{jambulapati2023sparsifying} employs tools similar to those used in the proof of Theorem~\ref{thm:main_known_f} to obtain bounds for functions $f$ with a natural ``auto-Lipschitz`` property. However, we note that this property is incomparable to our $L$-Lipschitz assumption on $f$; for instance the $1$-Lipschitz ReLU function is not $L$-auto-Lipschitz for any finite $L$.

\section{Notation}\label{sec:notation}
For a natural number $n$, we let $[n]$ denote the set $\{1,2,\ldots,n\}$. For a vector $\bx$ in $\R^d$ with entries $\bx_1, \ldots, \bx_d$, $\norm{\bx} = (\sum_{i=1}^d \bx_i^2)^{1/2}$ denotes its $\ell_2$ norm. We use $\Lip_L$ to represent the class of $L$-Lipschitz functions on $\R$ vanishing at $0$, i.e., $\Lip_L = \{f\in\mathcal C(\bbR): f(0)=0,~ \abs{f(x_1) - f(x_2)} \le L \abs{x_1-x_2}, \forall x_1,x_2 \in \R\}$.

We extend the notation of $f(\cdot)$ to $n$-dimensional vectors: for $\bz\in\bbR^n$, denote $f(\bz) \in \bbR^n$ as the entrywise application of $f$ to $\bz$, i.e. $f(\bz) = (f(\bz_1),f(\bz_2),\ldots, f(\bz_n))$. We denote the $i$-th  standard basis vector as $\mathbf{e}_i$. 
The Euclidean ball of radius $R$ centered at $\bx\in\bbR^d$ is denoted by $B_{\bx}(R)$. 
In the case where the ball is centered at the origin, we simply use $B(R)$. 

Throughout the paper, $c$ and $C$ will denote positive universal constants that may vary upon each occurrence. The notation $\Tilde{O}(m)$ denotes $O(m\log^c m)$ for a fixed constant $c$. Moreover $a \lesssim b$ means that there exists a positive constant $C>0$ such that $a\le Cb$. 

\section{Preliminaries}\label{sec:prelims}
\label{sec:pre}
Our goal is to find a single index model that best fits a given set of data points, $(\bx_j, \by_j)$ for $j \in [n]$. In particular, for some class of functions $\mathcal{F}$, we wish to solve the problem
\begin{equation}
\label{eqn:ls}
 \min_{{f\in\mathcal F , \bw\in\bbR^d}}~\loss(f, \bw), 
 \quad \text{where }\loss(f, \bw)\coloneqq \sum_{j=1}^n \left|f(\langle\bw, \bx_j\rangle) - y_j\right|^2 = \|f(\bX\bw) - \by\|^2.
\end{equation}
In this work, we consider the case when $\mathcal{F} = \{f\}$, containting just a single known function $f$, as well as the case when it is the set of all Lipschitz functions with $f(0) = 0$.



Suppose that $(f^\star, \bw^\star)$ minimize
the loss function $\loss$ over $f\in\mathcal F, \bw\in\bbR^d$. We  define
\[
\OPT(\mathcal F) \coloneqq \min_{{f\in\mathcal F , \bw\in\bbR^d}}\loss(f, \bw) = \loss(f^\star, \bw^\star).
\]
We measure the accuracy of an approximate solution to ~\eqref{eqn:ls} 
by quantifying the difference between its loss and the optimal loss. Specifically, for a given solution $(f,\bw)$, we define accuracy as follows: 

\begin{definition} [$\eps$-accurate solution]
\label{def:accuracy}
Fix some sufficiently large constant $C>0$. Given $\eps>0$, a pair $(f, \bw)$ with $f\in\mathcal F$, $\bw\in\bbR^d$ is said to be an $\eps$-accurate solution to the problem~\eqref{eqn:ls}, if
\[
\loss(f,\bw)\le C \cdot \OPT(\mathcal F) + \eps\|\bX\bw^\star\|^2.
\]
\end{definition}
This notion of accuracy was also used in  \citet{gajjar2023active}. Similar notions also appear in work on agnostic  active learning in other contexts \citep{avron2019universal}.
\subsection{Subsampled regression}
\label{subsec:subsampled-regression}
As in \citet{gajjar2023active}, we collect data in an active way via \emph{importance sampling}. Every row of $\bX$ is assigned a score, and rows are sampled with probability proportional to those scores. Target values in $\by$ only need to be observed for rows that get sampled. Concretely, we use statistical leverage scores, which have been widely applied for active linear regression (see e.g. \citet{chen2019active, mahoney2011randomized}. Intuitively, leverage scores measure how influential a row is in forming the column space of the data matrix, $\bX$. Formally, they are definied as follows:

\begin{definition}[Statistical leverage score] The leverage score of the $j$-th row $\bx_j$ of a matrix $\bX \in \R^{n \times d} $ is defined as $\tau_j(\bX) \coloneqq \bx_j^\top (\bX^\top \bX)^{-1}\bx_j = \sup_{\bw \in \R^d} \frac{\langle  \bx_j,\bw  \rangle^2}{\norm{\bX \bw}_2^2}$.  
\end{definition}

\paragraph{Sampling process.} We consider a sample-with-replacement variant of leverage score sampling. Let $p$ denote a probability distribution over $[n]$ where $j \in [n]$ is assigned probability $p_j = \tfrac{\tau_j(\bX)}{\sum_{i=1}^n \tau_{i}(\bX)}$. We generate $m$ i.i.d. random indices $j_1,\ldots,j_m\sim p$, each taking values in $[n]$. 
 
We next define a sampling-and-reweighting matrix $\bS$ to succinctly represent a subsampled regression problem, which will be used for active learning. Given indices $j_1,j_2,\ldots, j_m$ sampled from $p$, we construct $\bS \in \R^{m \times n}$, by setting the $i$-th row of $\bS$ to be $\frac{1}{\sqrt{mp_{j_i}}}{\bm e_{j_i}}$. We let $\hat\loss(f,\bw)$ denote: 
\begin{equation}
 \hat\loss(f,\bw) \coloneqq  \|\bS f(\bX\bw) - \bS\by\|^2 = \frac1m \sum_{i=1}^m \frac{1}{p_{j_i}} \abs{f(\langle \bw, \bx_{j_i}\rangle) - y_{j_i}}^2 .
 \label{eqn:subsampled-ls}
\end{equation}
One can verify that $\bbE[\bS^\top\bS]=\bI$ and hence $\hat\loss(f,\bw)$ is equal to $\loss(f,\bw)$ in expectation. Importantly, however, this subsampled loss can be evaluated using only those target values that appear in $\bS\by$, i.e. using at most $m$ entries from $\by$. So, our approach is to minimize $\hat\loss$ as a surrogate for $\loss$. 

\section{Main Results}\label{sec:main_results}
While it is tempting to try to obtain an $\eps$-accurate solution by returning any minimizer for the subsampled loss $\hat\loss$, and doing so works in the linear setting, it can be seen that such an approach will fail when $f$ is non-linear. In particular, consider the case when the rows of $\bX$ contain all $2^d$ binary vectors of length $d$, $f$ is a ReLU non-linearity, and $\by$ contains a $1$ in a single vector in $\{0,1\}^d$. Any sampling strategy that takes $o(2^d)$ samples will only observe labels equal to $0$. Thus, since $f(x) = \max(x, 0)$, \emph{any vector} $\hat{\bw}$ with non-positive entries is a valid minimizer for $\hat\loss$. However, by choose $\hat{\bw}$ to have very large negative entries, we can make $\loss(f,\hat{\bw})$ arbitrarily bad. 

\citet{gajjar2023active} deal with this issue by imposing a hard constraint on the norm of $\bw$. We instead consider 
minimizing a regularized version of $\hat\loss$ that penalizes $\bw$ with high norm. Ultimately, our regularization approach allows us to improve the $O(\eps^{-4}L^8)$ dependence in sample complexity in \citet{gajjar2023active} to  $O(\eps^{-2}L^4)$.
Concretely, we define the regularized loss as:
\begin{equation}\label{eq:reg_obj}
\hat\loss_{\reg}(f, \bw) \coloneqq \hat\loss(f, \bw) + \eps\|\bX\bw\|^2 = \norm{\bS f(\bX \bw) - \bS\by}^2 + \eps \norm{\bX \bw}^2.
\end{equation}

Our first main result is that, for a fixed $f$, the weight $\hat{\bw}$ that minimizes ~\eqref{eq:reg_obj}, is an $\eps$-accurate solution for  $\mathcal{L}(f,\bw)$ with high probability.

\begin{reptheorem}{thm:main_known_f} {
Let $\hat{\bw}$ solve the subsampled regularized least squares problem:
\begin{equation}
\label{eqn:ls-fixed-f}
\hat{\bw} \coloneqq \argmin_{
\bw\in\bbR^d} \hat\loss_{\reg}(f, \bw).
\end{equation}
Then, for some universal constant $c>0$, as long as $m\ge c L^4 \varepsilon^{-2} d\log^3\!d$, $(f, \hat\bw)$ is an $\eps$-accurate solution to of~\eqref{eqn:ls} with $\mathcal{F} = \{f\}$ with probability at least $99/100$.
}
\end{reptheorem}
The main ingredient for proving this theorem is to show that $\|\bS f(\bX\bw) - \bS f(\bX\bw^\star)\|^2$ is close to $ \|f(\bX\bw) - f(\bX\bw^\star)\|^2$ for all $\bw$ with sufficiently bounded norm, which is formally characterized by the following nonlinear subspace embedding lemma. 
\begin{lemma}[Non-linear subspace embedding with fixed non-linearity]\label{lem:fixed_concentration}
Let $\bS$ be a leverage score subsampling matrix with $m$ rows as defined in \Cref{sec:prelims}. Assume $\bX\in\bbR^{n\times d}$ has orthonormal columns.\footnote{As we will discuss later, $\bX$ can be assumed to be orthonormal without loss of generality.} There is some universal constant $C>0$ such that, for any $f\in\Lip_L$, and any $R>0$,
as long as $m\ge CL^4 \eps^{-2}d \log^3(d) \cdot \log(1/\delta)$, the following holds with probability $\ge 1-\delta$.
\begin{align*}
\left|{\norm{\bS f(\bX\bw_1) - \bS f(\bX\bw_2)}^2 - \norm{f(\bX\bw_1) - f(\bX\bw_2)}^2}\right| \le \eps R^2, 
\quad
\forall\bw_1, \bw_2\in B(R).
\end{align*}
\end{lemma}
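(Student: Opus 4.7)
The plan is to recast the lemma as a uniform concentration statement. Set $u_j(\bw_1,\bw_2) := (f(\bx_j^\top \bw_1) - f(\bx_j^\top \bw_2))/\sqrt{p_j}$; since $\bX$ has orthonormal columns we have $p_j = \tau_j(\bX)/d$, and so on $\bw_1,\bw_2 \in B(R)$ the pointwise bound $|u_j(\bw_1,\bw_2)| \le 2LR\sqrt d$ holds. The quantity to be controlled is then
\[
\sup_{\bw_1,\bw_2\in B(R)}\left|\frac{1}{m}\sum_{i=1}^m \bigl(u_{\xi_i}(\bw_1,\bw_2)^2 - \bbE\, u_\xi(\bw_1,\bw_2)^2\bigr)\right|,
\]
which I would first symmetrize by $\pm 1$ Rademacher variables $\eps_i$ at the cost of a factor of two.

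The strategy is to apply generic chaining (Dudley's inequality) directly to the resulting Rademacher process, rather than discretizing with an $\eps$-net as in \citet{gajjar2023active}. The crucial point is to chain the \emph{squared} process itself: first applying the Talagrand contraction to the outer squaring on $[-2LR\sqrt d, 2LR\sqrt d]$ would carry a spurious $\sqrt d$ factor, and is precisely why the previous bound only achieved $d^2$. Using the identity $u^2 - {u'}^2 = (u-u')(u+u')$, chaining increments of $\sum_i \eps_i u_{\xi_i}^2$ reduce to $\ell_2$-distances on the class
\[
T_R := \{f(\bX\bw) : \bw\in B(R)\}\subset \bbR^n,
\]
modulated by a uniformly bounded $(u+u')$ factor.

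The main new ingredient is a sharp covering-number bound on $T_R$. By Gaussian-process contraction for the $L$-Lipschitz $f$,
\[
\bbE \sup_{v\in T_R}\langle g, v\rangle \le L\cdot\bbE\sup_{\bw\in B(R)}\langle g, \bX\bw\rangle = LR\,\bbE\|\bX^\top g\|\le LR\sqrt d,\qquad g\sim N(0, I_n),
\]
so dual Sudakov minoration yields $\sqrt{\log N(T_R,\|\cdot\|_2,\eta)} \lesssim LR\sqrt d/\eta$, which controls Dudley's integral in the moderate-to-large $\eta$ regime. For small $\eta$ this is complemented by the $L$-Lipschitz volumetric bound $\log N(T_R,\|\cdot\|_2,\eta)\le d\log(CLR/\eta)$ coming from the parameterization $\bw\mapsto f(\bX\bw)$. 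Combining both regimes in Dudley's integral, tracking the bounded $(u+u')$ factor from the product decomposition, and normalizing by $\sqrt m$ produces an expected sup of order $L^2 R^2 \sqrt{d/m}\,\mathrm{polylog}(d)$, which is at most $\eps R^2$ whenever $m \gtrsim L^4 \eps^{-2} d\log^3 d$.

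Upgrading the expectation bound to a high-probability statement is then obtained from Talagrand's (Bousquet's version) concentration inequality for suprema of bounded empirical processes, which contributes the $\log(1/\delta)$ factor. I expect the main obstacle to be the squared-process chaining step: the Rademacher increments of $\sum_i \eps_i u_{\xi_i}^2$ are not simply subgaussian in a single natural metric, and the naive Lipschitz contraction of squaring loses a $\sqrt d$ factor on the uniform interval $[-2LR\sqrt d, 2LR\sqrt d]$. Avoiding this loss is what forces the two-pronged covering estimate --- volumetric at small scale, dual Sudakov (via Gaussian width) at large scale --- and is what turns the previous $d^2$ rate into the optimal $d$.
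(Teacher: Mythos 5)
Your roadmap matches the paper's at a high level (symmetrization, sub-Gaussian chaining, dual Sudakov, then concentration), and you correctly identify the key obstacle: the naive pointwise bound on the ``squaring'' factor loses a $\sqrt d$. But the fix you propose does not actually remove that loss. In your factorization $u^2-(u')^2 = (u-u')(u+u')$ you bound $|u_{j_i}+u'_{j_i}|\le 4LR\sqrt d$ pointwise (using $\|\bx_{j_i}\|/\sqrt{p_{j_i}}=\sqrt d$), and chain $(u-u')$ in an $\ell_2$-type metric. The conditional sub-Gaussian metric of the Rademacher process is then, up to constants, $LR\sqrt d\cdot\bigl(\sum_i|u_{j_i}-u'_{j_i}|^2\bigr)^{1/2}$. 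Two problems. First, the inner quantity is the \emph{sampled, reweighted} distance $\bigl(\sum_i p_{j_i}^{-1}|\Delta_{j_i}|^2\bigr)^{1/2}=\sqrt m\,\|\bS\Delta\|_2$, not the ambient $\ell_2$ distance on $T_R\subset\R^n$; your Gaussian-width bound $\bbE\sup_{v\in T_R}\langle g,v\rangle\le LR\sqrt d$ is for the wrong metric, and the conflation silently drops a $\sqrt m$. Second, and more fundamentally, even with the corrected metric the decomposition cannot give $m\approx d$: the Gaussian width that dual Sudakov needs for covering $B(R)$ in the reweighted $\ell_2$ norm $\bv\mapsto(\sum_i\langle\bx_{j_i},\bv\rangle^2/p_{j_i})^{1/2}$ (an ellipsoid) is controlled by $\sqrt{\mathrm{tr}(\sum_i\bx_{j_i}\bx_{j_i}^\top/p_{j_i})}=\sqrt{md}$, and multiplying by your pointwise $LR\sqrt d$ factor and normalizing by $m$ gives roughly $L^2R^2 d/\sqrt m$, i.e.\ $m\gtrsim L^4\eps^{-2}d^2$. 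The volumetric-plus-Sudakov two-pronged covering only shaves off logs; it does not remove the extra $\sqrt d$.

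The paper's Lemma~\ref{lem:fixed-metric-bound} makes the \emph{opposite} choice in the factorization, and this is the essential new idea. It bounds the chaining increment $|v_{j_i}-v'_{j_i}|$ pointwise by $\sqrt{p_{j_i}}\,\|\bw-\bw'\|_{\P^\circ}$ with $\P=\mathrm{conv}(\pm\bx_{j_i}/\sqrt{p_{j_i}})$, and keeps the magnitude factor $(v+v')$ aggregated in $\ell_2$, controlled via $\sum_i p_{j_i}^{-1}\langle\bx_{j_i},\bw\rangle^2\le R^2\|\sum_i\bx_{j_i}\bx_{j_i}^\top/p_{j_i}\|\lesssim R^2 m$ by matrix Chernoff. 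The payoff is that $\|\cdot\|_{\P^\circ}$ is an $\ell_\infty$-type norm whose Gaussian width is $\bbE\max_i|\langle\bx_{j_i}/\sqrt{p_{j_i}},g\rangle|\lesssim\sqrt{d\log m}$ --- a max over $2m$ points of norm $\sqrt d$ --- not $\sqrt{md}$. Putting the \emph{increment} factor (the one being chained) into $\ell_\infty$, rather than the magnitude factor, is what yields the $m\gtrsim d\log^3 d$ rate. Your concentration step (Bousquet) is a plausible alternative to the paper's moment-Markov argument, but that is secondary: the decomposition as you have it cannot reach the claimed linear-in-$d$ sample complexity.
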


This name ``nonlinear subspace embedding'' is a nod to the now standard subspace embedding result for leverage score sampling \citep{Sarlos:2006,DrineasMahoneyMuthukrishnan:2006}. We will use this result as well, which is a special (and stronger) case of Lemma~\ref{lem:fixed_concentration} when $f$ is the identity function.
\begin{lemma}[Subspace embedding]
\label{lem:subspace-embedding}
For any $\eps, \delta \in (0,1)$, as long as $m\gtrsim\frac{d \log(d/\delta)}{\eps^2}$, then with probability at least $1-\delta$, for all $\bw_1,\bw_2 \in \R^d$, 
\begin{equation}
\label{eqn:subspace-embedding}
(1-\eps)\|\bX\bw_1 - \bX\bw_2\|^2
\le \|\bS\bX\bw_1 - \bS\bX\bw_2\|^2 
\le (1+\eps)\|\bX\bw_1 - \bX\bw_2\|^2.
\end{equation}
\end{lemma}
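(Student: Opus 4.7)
The plan is to reduce the two-sided inequality to a single spectral-norm bound and then apply a standard matrix Chernoff inequality to a sum of i.i.d.\ PSD rank-one matrices. Since the claim is invariant under replacing $(\bw_1,\bw_2)$ by $\bz=\bw_1-\bw_2$, it suffices to show that
\[
\left|\|\bS\bX\bz\|^2-\|\bX\bz\|^2\right|\le \eps\,\|\bX\bz\|^2\qquad\text{for all }\bz\in\bbR^d,
\]
which is equivalent to $\|\bX^\top\bS^\top\bS\bX-\bX^\top\bX\|_{op}\le \eps\,\|\bX^\top\bX\|_{op}$. Using a thin QR factorization $\bX=\bQ\bR$, it suffices to prove the claim with $\bQ$ in place of $\bX$: the column span (and hence the leverage scores and the induced sampling distribution) is unchanged, and the set of vectors $\{\bX\bw:\bw\in\bbR^d\}$ coincides with $\{\bQ\bv:\bv\in\bbR^d\}$. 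I would therefore assume from here on that $\bX$ has orthonormal columns, so that $\tau_j(\bX)=\|\bx_j\|^2$, $\sum_j\tau_j(\bX)=d$, and $p_j=\|\bx_j\|^2/d$. The target inequality collapses to $\|\bX^\top\bS^\top\bS\bX-\bI_d\|_{op}\le\eps$.

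Next I would express $\bX^\top\bS^\top\bS\bX$ as an average of i.i.d.\ random matrices. Unpacking the definition of $\bS$ gives
\[
\bX^\top\bS^\top\bS\bX=\frac{1}{m}\sum_{i=1}^m \bY_i,\qquad \bY_i:=\frac{1}{p_{j_i}}\bx_{j_i}\bx_{j_i}^\top,
\]
where each $\bY_i$ is PSD and rank one. Two facts follow immediately from the leverage-score choice of $p_j$: first, $\bbE[\bY_i]=\sum_{j=1}^n p_j\cdot p_j^{-1}\bx_j\bx_j^\top=\bX^\top\bX=\bI_d$; and second, the operator-norm bound $\|\bY_i\|_{op}=\|\bx_{j_i}\|^2/p_{j_i}=d$ holds almost surely.

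The final step is to invoke the PSD matrix Chernoff inequality (e.g., Tropp, \emph{User-Friendly Tail Bounds for Sums of Random Matrices}) on $\sum_{i=1}^m\bY_i$, whose mean equals $m\bI_d$ and whose summands are PSD with operator norm at most $d$. This yields, for a universal constant $c>0$,
\[
\bbP\!\left[\,\left\|\tfrac{1}{m}\textstyle\sum_{i=1}^m \bY_i-\bI_d\right\|_{op}\ge \eps\,\right]\le 2d\exp\!\left(-\tfrac{c\,m\,\eps^2}{d}\right).
\]
Setting the right-hand side to $\delta$ and solving gives the claimed sample complexity $m\gtrsim d\log(d/\delta)/\eps^2$. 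Undoing the change of basis delivers the two-sided inequality for the original $\bX$.

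I do not anticipate a genuine obstacle; the one place requiring care is the reduction to the orthonormal case, where one must verify that $p_j$ and the sampling distribution are unchanged under $\bX\mapsto\bQ$. This is why the excerpt flags this lemma as ``a special (and stronger) case'' of \Cref{lem:fixed_concentration}: the linearity of $f(t)=t$ obviates the chaining and Sudakov-minoration machinery needed for general Lipschitz $f$, and the whole argument collapses into a single matrix Chernoff computation that also yields a bound for all $\bz\in\bbR^d$ without any ball constraint.
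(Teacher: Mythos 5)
Your proposal is correct and is essentially the same argument the paper intends: the paper does not give its own proof of Lemma~\ref{lem:subspace-embedding}, instead citing a standard matrix Chernoff bound, and your reduction to the orthonormal case followed by the Chernoff bound on $\frac{1}{m}\sum_i \frac{1}{p_{j_i}}\bx_{j_i}\bx_{j_i}^\top$ (mean $\bI_d$, summands PSD with norm $d$) is precisely that standard route. The reduction to $\bz=\bw_1-\bw_2$, the QR change of basis leaving leverage scores and the sampling distribution invariant, and the identification of the operator-norm inequality with the two-sided quadratic-form bound are all handled correctly.
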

Lemma~\ref{lem:subspace-embedding} can be proven using a matrix Chernoff bound (see, e.g., \cite{TCS-060}). Our \Cref{lem:fixed_concentration} requires more work due to the potential nonlinearity of $f$. Its proof is discussed in Section~\ref{sec:concentration_proof} and, along with a generalization to unknown $f$ in Lemma~\ref{lem:unknown_concentration}, is the major technical contribution of our work. Before getting into the details, we show how  \Cref{lem:fixed_concentration} can be used to prove \Cref{thm:main_known_f}.

\begingroup
\renewcommand{\proofname}{Proof of Theorem~\ref{thm:main_known_f}}
\begin{proof}
We simplify the discussion by assuming that, without loss of generality, $\bX$ has orthonormal columns. This can be done because leverage score are invariant under column transformations. If $\bX=\bQ\bR$ where $\bQ\in\bbR^{n\times d}$ has orthonormal columns and $\bR\in\bbR^{d\times d}$ is invertible, denoting by $\bq_j$ the $j$-th row of $\bQ$, one has $\bq_j = \bR^\top\bx_j$ and may verify
\begin{align*}
\tau_j(\bQ) = \bq_j^\top (\bQ^\top\bQ)^{-1} \bq_j = \bx_j^\top (\bX^\top \bX) \bx_j = \tau_j(\bX).
\end{align*}
The conclusion then follows from observing that all the involved statements are not affected if we substitute $\bX$ with $\bQ$ and $\bw$ with $\bR\bw$. The same approach was used in \cite{gajjar2023active}.

When $\bX$ is orthonormal, the leverage scores have a particularly simple form. 
\begin{equation}
\label{eqn:orth_score}
\tau_j(\bX) = \|\bx_j\|^2,
\quad \sum_{j=1}^n\tau_j(\bX) = d.
\end{equation}
Throughout the proof, we shall condition on the event that 
\begin{equation}
\label{eqn:Markov}
\hat\loss(f, \bw^\star)\le 1000\cdot\loss(f, \bw^\star)=1000\cdot \OPT,
\end{equation}
which happens with probability at least $0.999$ by Markov's inequality (since $\E\hat\loss=\loss$).
Now, since 
\[f(\bX \hat{\bw}) - \by = \big(f(\bX \hat{\bw}) - f(\bX\bw^\star)\big) + \big(f(\bX\bw^\star) - \by\big),\] 
from triangle inequality and the fact that $(a+b)^2 \leq 2a^2 + 2b^2$, we get the following 
\begin{align}
\label{eqn:fixed-tmp1}
\loss(f, \hat{\bw} ) = \norm{f(\bX \hat{\bw}) - \by}^2 &\le 2 \norm{f(\bX\hat{\bw}) - f(\bX \bw^\star) }^2 + 2 \OPT.
\end{align}
Next, by optimality of $\hat{\bw}$, we have $\hat\loss_{\reg}(f, \hat{\bw})\le \hat\loss_{\reg}(f, \bw^\star)$, i.e., 
\begin{equation}
\label{eqn:fixed_optimality}
\|\bS f(\bX\hat{\bw}) - \bS \by\|^2 + \eps \|\bX\hat{\bw}\|^2 \le \hat\loss(f, \bw^\star) + \eps \|\bX\bw^\star\|^2.
\end{equation}
In particular, we have (recalling that $\bX$ is assumed to have orthonormal columns)
\[
\|\hat{\bw}\|^2 = \|\bX\hat{\bw}\|^2 
\le \frac{1}{\eps}\hat\loss(f, \bw^\star) + \|\bX\bw^\star\|^2 
\le \frac{1000}{\eps}\OPT + \|\bX\bw^\star\|^2.
\]
This implies that $\hat{\bw}\in B(R)$ where $R=\sqrt{\frac{1000}{\eps}\OPT+\|\bX\bw^\star\|^2}$. Then, from \Cref{lem:fixed_concentration}, the following holds with probability at least $0.999$ given $m\ge CL^4\eps^{-2}d\log^3\!d$:
\begin{align*}
\|f(\bX\hat{\bw}) - f(\bX\bw^\star)\|^2
& \le \|\bS f(\bX\hat{\bw}) - \bS f(\bX\bw^\star)\|^2 + \eps R^2 \\
& \lesssim \|\bS f(\bX\hat{\bw}) - \bS\by\|^2 + \|\bS f(\bX\bw^\star) - \bS\by\|^2 + \eps R^2 \\
& \lesssim \left(\hat\loss(f, \bw^\star) + \eps\|\bX\bw^\star\|^2\right) + \hat\loss(f, \bw^\star) + \eps\left(\frac{1}{\eps}\OPT + \|\bX\bw^\star\|^2\right) \\
& \lesssim \OPT + \eps\|\bX\bw^\star\|^2,
\end{align*}
where the second line is triangle inequality, the third line used \eqref{eqn:fixed_optimality} and the definition of $R$, and the last line used \eqref{eqn:Markov}.
Plugging the above inequality into \eqref{eqn:fixed-tmp1} readily gives
\[
\loss(f, \hat{\bw}) \lesssim \OPT + \eps\|\bX\bw^\star\|^2,
\]
as desired (after replacing $\eps$ by $\eps/C$ for some sufficiently large constant $C>0$).
\end{proof}
\endgroup

Our second main result is for the setting where $f$ is an unknown $L$-Lipschitz function vanishing at $0$. 
Again, we show that an $\eps$-accurate solution to the problem of $\min_{f, \bw} \mathcal{L}(f,\bw)$ can be obtained by minimizing the  regularized subsampled loss, but this time over both $f$ and $\bw$. Formally, we have:
\begin{reptheorem}{thm:unknown_f}
Let $\hat f$ and $\hat{\bw}$ be the solution to the subsampled regularized least square problem:
\begin{equation}
\label{eqn:ls-unknown-f}
(\hat f, \hat{\bw}) = \argmin_{\substack{f\in\Lip_L\\
\bw\in\bbR^d}} \hat\loss_{\reg}(f,\bw).
\end{equation}
Then, for some universal constant $C>0$, as long as $m\ge C L^4 \varepsilon^{-2} d(\log^2 n + \log^3\!d)$, $(\hat f, \hat\bw)$ is an $\varepsilon$-accurate solution to~\eqref{eqn:ls} with $\mathcal{F} = \Lip_L$ probability at least $99/100$.
\end{reptheorem}
Similar to Theorem~\ref{thm:main_known_f}, this result crucially depends on the following nonlinear concentration result, which extends Lemma~\ref{lem:fixed_concentration} to the case when $\bX\bw_1$ and $\bX\bw_2$ are transformed by two potentially different $L$-Lipschitz functions, $f_1$ and $f_2$.

\begin{lemma}[Non-linear subspace embedding with unknown non-linearity]\label{lem:unknown_concentration}
Let $\bS$ be a leverage score subsampling matrix with $m$ rows as defined in \Cref{sec:prelims}. Assume $\bX\in\bbR^{n\times d}$ has orthonormal columns.
As long as $m\ge C L^4 \varepsilon^{-2} d(\log^2 n + \log^3 (d/\delta))$ for some fixed constant $C > 0$, the following holds with probability $\ge 1-\delta$.
\begin{align*}
    \abs{\norm{\bS f_1(\bX\bw_1) - \bS f_2(\bX\bw_2)}^2 - \norm{f_1(\bX\bw_1) - f_2(\bX\bw_2)}^2} \le \eps R^2, 
\end{align*}
for all $f_1,f_2\in\Lip_L$ and for all $\bw_1, \bw_2\in B(R).$
\end{lemma}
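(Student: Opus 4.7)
The plan is to follow the same high-level strategy as the proof of Lemma~\ref{lem:fixed_concentration}---symmetrization followed by a chaining argument combining Dudley's inequality with dual Sudakov minoration---but now on the larger index set $T = (\Lip_L \times B(R))^2$. The new difficulty is that $T$ involves the infinite-dimensional class $\Lip_L$, so we must discretize it. A naive uniform net over $\Lip_L$ evaluated at the $n$ inner products $\{\langle \bx_j, \bw\rangle\}_{j=1}^n$ has metric entropy scaling like $n$, which is far too expensive for the target $m \gtrsim L^4\eps^{-2} d(\log^2 n + \log^3(d/\delta))$.

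First I would write the centered quantity $Z_{f_1,f_2,\bw_1,\bw_2} \coloneqq \norm{\bS f_1(\bX\bw_1) - \bS f_2(\bX\bw_2)}^2 - \norm{f_1(\bX\bw_1) - f_2(\bX\bw_2)}^2$ as a sum of mean-zero terms over the $m$ i.i.d.\ leverage-score samples, symmetrize to obtain a Bernoulli process indexed by $T$, and control its supremum via generic chaining for Bernoulli processes. Second, I would construct a \emph{sampling-aware} net $\Net$ for $\Lip_L$ via a dyadic decomposition: partition $\bbR$ into scales $I_k = \pm[2^k, 2^{k+1}]$ over an appropriate range of $k$, and on each $I_k$ approximate any $f\in\Lip_L$ by a piecewise linear function with breakpoints spaced on the order of $\eps 2^k/L$ apart. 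This enforces the \emph{relative} pointwise bound $|f(t) - \tilde f(t)| \le \eps |t|$, so that for every index $j$ and every $\bw\in B(R)$,
\begin{equation*}
|f(\langle \bx_j, \bw\rangle) - \tilde f(\langle \bx_j, \bw\rangle)| \le \eps\,|\langle \bx_j, \bw\rangle|.
\end{equation*}
Consequently the discretization error in the subsampled quantity is at most $\eps\norm{\bS\bX\bw}$, which is $O(\eps R)$ by the linear subspace embedding of Lemma~\ref{lem:subspace-embedding}. The key gain of the scale-adaptive construction is that $\log |\Net|$ depends only polylogarithmically on $n$, because away from $0$ the net can be coarser in absolute terms without losing relative accuracy.

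Third, for each fixed pair $(\tilde f_1, \tilde f_2)\in \Net\times\Net$, the supremum over $\bw_1,\bw_2\in B(R)$ reduces to exactly the object handled in Lemma~\ref{lem:fixed_concentration}: Dudley's inequality with dual Sudakov minoration on the Lipschitz compositions $\tilde f_i\circ\bX$ gives concentration as soon as $m\gtrsim L^4\eps^{-2}d\log^3 d$. Combining with a union bound over $\Net\times\Net$ and the discretization error analysis accounts for the additional $\log^2 n$ term in the sample complexity.

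The main obstacle, as previewed in the introduction, is the construction and entropy analysis of $\Net$. A uniform net on $\Lip_L$ pulled back through $n$ data points has cardinality at least $\exp(\Omega(n))$, so the dyadic scale-adaptive construction---together with an embedding of $\Lip_L$ into a simpler parameter space of piecewise linear functions---is essential. A secondary subtlety is that the two function choices $f_1, f_2$ and the two weight choices $\bw_1, \bw_2$ are coupled in the process $Z$, so one must either chain them jointly or bound cross terms carefully; using generic chaining for Bernoulli processes (rather than a bare Dudley integral) seems necessary to keep the overall chaining cost at only $\log^2 n$ rather than a higher power.
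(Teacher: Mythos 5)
Your high-level plan — symmetrize, then build a scale-adaptive discretization of $\Lip_L$ and chain — is the right starting point and aligns with the paper. But there are two substantive issues that would derail the argument as written.

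First, your claim that the net $\Net$ can enforce the \emph{relative} bound $|f(t)-\tilde f(t)|\le\eps|t|$ for all $t$ is not realizable by a finite net: as $t\to 0$ the required resolution becomes arbitrarily fine, so the dyadic construction alone produces an infinite object. The paper's construction (Lemma~\ref{lem:D-infty-net}) deliberately relaxes this near zero: below a scale $\mu R$ the approximation error is only \emph{absolute}, of order $\eta L\mu R$, not relative. The price of this relaxation shows up in the $\rho_2$ metric via a Dirac term supported on the rows with $\|\bx_{j_i}\|<\mu$, which is controlled precisely because $\mu$ is tiny ($\mu=d^2/(nm)$). Without engaging with this small-scale/small-row regime, the net cannot be finite, and the subsequent entropy analysis breaks down.

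Second, the ``union bound over $\Net\times\Net$'' step loses too much. The paper does \emph{not} union bound; it uses chaining with two distinct metrics $D_\infty$ and $D_2$ (Lemma~\ref{lem:chaining}): the net is a $\Delta$-net in $D_\infty$ so that the discretization error appears \emph{additively} as $\bbE\Delta$, and then the remaining supremum over the net is controlled by the Dudley integral of $(\Net_\Delta\times\Net_\Delta, D_2)$, which is bounded by embedding into $(\Lip_1, L^\infty([-1,1]))$ plus a Dirac term. This is much tighter than the cardinality. Concretely, the paper's net satisfies $\log|\Net_\Delta|\sim\sqrt{m}\log(nm/d^2)$ (with $\eta=1/\sqrt m$), so a union bound would set $\ell\sim\sqrt{m}\log n$ in the moment bound of Appendix~\ref{subsec:apd:fixed_tail}; the $\ell^{1/2}\sqrt{md}$ term then yields $m^{3/4}\sqrt{d}\sqrt{\log n}$, which after dividing by $m$ forces $m\gtrsim \eps^{-4}d^2\log^2 n$ — the quadratic-in-$d$ dependence the paper is specifically trying to avoid. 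Even a coarser net tuned for a union bound (say relative accuracy $\gamma\sim\eps/L$) degrades the $\eps$-dependence. Your last paragraph gestures at generic chaining being necessary, but that contradicts the union-bound plan in the body; you need to commit to the chaining route, including the two-metric structure and the $L^\infty$-embedding trick for the Dudley integral, to reach the stated bound.
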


The proof of Theorem~\ref{thm:unknown_f} follows the same steps as that of Theorem~\ref{thm:main_known_f}, with the distinction that we must compare $\loss (\hat f, \hat \bw)$ with the optimal loss $\loss (f^\star, \bw^\star)$. 
\begin{proof}(Proof of Theorem~\ref{thm:unknown_f})
We simplify the discussion by assuming that, without loss of generality, $\bX$ has orthonormal columns, just like Theorem~\ref{thm:main_known_f}. 
Throughout the proof, we shall condition on the event that 
\begin{equation}
\hat\loss(f^\star, \bw^\star)\le 1000\cdot\loss(f^\star, \bw^\star)=1000\cdot \OPT,
\end{equation}
which happens with probability at least $0.999$ by Markov's inequality (since $\E\hat\loss=\loss$).
Now, since 
\[\hat f(\bX \hat{\bw}) - \by = \big(\hat f(\bX \hat{\bw}) - f^\star(\bX\bw^\star)\big) + \big(f^\star(\bX\bw^\star) - \by\big),\] 
from triangle inequality and the fact that $(a+b)^2 \leq 2a^2 + 2b^2$, we get the following 
\begin{align}
\label{eqn:diff-tmp1}
\loss(\hat f, \hat{\bw} ) = \norm{\hat f(\bX \hat{\bw}) - \by}^2 &\le 2 \norm{\hat f(\bX\hat{\bw}) - f^\star(\bX \bw^\star) }^2 + 2 \OPT.
\end{align}
Next, by optimality of $\hat f,\hat{\bw}$, we have $\hat\loss_{\reg}(\hat f, \hat{\bw})\le \hat\loss_{\reg}(f^\star, \bw^\star)$, i.e., 
\begin{equation}
\label{eqn:diff_optimality}
\|\bS \hat f(\bX\hat{\bw}) - \bS \by\|^2 + \eps \|\bX\hat{\bw}\|^2 \le \hat\loss(f^\star, \bw^\star) + \eps \|\bX\bw^\star\|^2.
\end{equation}
In particular, we have (recalling that $\bX$ is assumed to have orthonormal columns)
\[
\|\hat{\bw}\|^2 = \|\bX\hat{\bw}\|^2 
\le \frac{1}{\eps}\hat\loss(f^\star, \bw^\star) + \|\bX\bw^\star\|^2 
\le \frac{1000}{\eps}\OPT + \|\bX\bw^\star\|^2.
\]
This implies that $\hat{\bw}\in B(R)$ where $R=\sqrt{\frac{1000}{\eps}\OPT+\|\bX\bw^\star\|^2}$. Then, from \Cref{lem:unknown_concentration}, the following holds with probability at least $0.999$ given $m\ge CL^4\eps^{-2}d\log^3\!d$:
\begin{align*}
\|\hat f(\bX\hat{\bw}) - f^\star(\bX\bw^\star)\|^2
& \le \|\bS \hat f(\bX\hat{\bw}) - \bS f^\star(\bX\bw^\star)\|^2 + \eps R^2 \\
& \lesssim \|\bS \hat f(\bX\hat{\bw}) - \bS\by\|^2 + \|\bS f^\star(\bX\bw^\star) - \bS\by\|^2 + \eps R^2 \\
& \lesssim \|\bS \hat f(\bX\hat{\bw}) - \bS\by\|^2 + \|\bS f^\star(\bX\bw^\star) - \bS\by\|^2 + \eps R^2 \\
& \lesssim \left(\hat\loss(f^\star, \bw^\star) + \eps\|\bX\bw^\star\|^2\right) + \hat\loss(f^\star, \bw^\star) + \eps\left(\frac{1}{\eps}\OPT + \|\bX\bw^\star\|^2\right) \\
& \lesssim \OPT + \eps\|\bX\bw^\star\|^2,
\end{align*}
where the second line is triangle inequality, the third line used \eqref{eqn:diff_optimality} and the definition of $R$, and the last line used \eqref{eqn:Markov}.
Plugging the above inequality into \eqref{eqn:diff-tmp1} readily gives
\[
\loss(\hat f, \hat{\bw}) \lesssim \OPT + \eps\|\bX\bw^\star\|^2,
\]
as desired (after replacing $\eps$ by $\eps/C$ for some sufficiently large constant $C>0$).
\end{proof}

\section{Proof of the non-linear concentrations}\label{sec:concentration_proof}
The proofs of Lemma~\ref{lem:fixed_concentration} and Lemma~\ref{lem:unknown_concentration}, especially the latter one, are the major challenges in this work. We sketch the key ideas below. 

\paragraph{Proof roadmap for Lemma~\ref{lem:fixed_concentration}} 
Our proof (given in Appendix~\ref{subsec:apd:known}) is based on ideas from \citet{rudelson1996random}. However, we employ a simplified version, using dual Sudakov minoration, deviating from of the construction of chaining functionals therein. This simplification significantly streamlines the explanation, although at the cost of losing a $\log^2 d$ factor.

\paragraph{Step 1: Symmetrization.} We want to obtain a tail bound for the supremum of our random process.
     For this, we apply a standard symmetrization technique (Lemma \ref{lem:symmetrization}), where we introduce i.i.d. Rademacher random variables $\xi_1,\xi_2,\ldots, \xi_m$, where each $\xi_i$ takes values in $-1$ and $1$ with probabilities $1/2$ each.
     This reduces the problem to finding tail bounds of the following Bernoulli process:
    \[Z(\bw_1, \bw_2)\coloneqq \sum_{i=1}^m \xi_i \frac{\big(\fxw{j_i}{\bw_1} - \fxw{j_i}{\bw_2} \big)^2}{p_{j_i}},
\qquad (\bw_1,\bw_2)\in B(R)\times B(R).
\]
In particular, this allows us to prove Lemma~\ref{lem:fixed_concentration} by showing $\frac1m \sup_{\bw_1,\bw_2\in B(R)} |Z(\bw_1,\bw_2)| \lesssim \eps$ with overwhelming probability.
\paragraph{Step 2: Applying sub-Gaussianity of symmetrized process.} It turns out that, conditioned on the samples $\{j_1,j_2,\ldots, j_m\}$ (fixing $\bS)$, the above random process is sub-Gaussian w.r.t. the index $(\bw_1,\bw_2)$, 
endowed with the following metric on $B(R) \times B(R)$ (Appendix~\ref{subsec:apd:symmetrization}).
\begin{align*}
&\rho_f\big( (\bw_1, \bw_2), (\bw_1', \bw_2') \big)
\\
&\quad\coloneqq
\left( \sum_{i=1}^m \frac{1}{p_{j_i}^2}{\big(f(\langle \bx_{j_i}, \bw_1\rangle) - f(\langle \bx_{j_i}, \bw_2\rangle) \big)^2 - \big(f(\langle \bx_{j_i}, \bw_1'\rangle) - f(\langle \bx_{j_i}, \bw_2'\rangle) \big)^2} \right)^{1/2}.
\end{align*}

\paragraph{Step 3: Bounding the sub-Gaussian norm by the dual norm of a polytope.} 

The Lipschitz continuity of $f$ allows us to obtain an upper bound (Lemma \ref{lem:fixed-metric-bound}) of $\rho_f$ in terms of the dual norm of  the polytope $\mathcal P$ formed by vertices $\{\pm\bx_{j_i}/\sqrt{p_{j_i}}\}_{i\in[m]}$:
\[
\rho_f\big( (\bw_1, \bw_2), (\bw_1', \bw_2') \big) \lesssim L^2 R\sqrt{m}\left(\|\bw_1 - \bw_1'\|_{\mathcal P^\circ} + \|\bw_2 - \bw_2'\|_{\mathcal P^\circ}\right), \forall \bw_1,\bw_2,\bw_1',\bw_2'\in B(R)
\]
with high probability, where $\mathcal P^\circ$ is the polar of $\mathcal P$ and $\|\cdot\|_{\mathcal P^\circ}$ is its Minkowski norm. This bound gets rid of any dependence on the non-linearity $f$ and prepares us for the application of dual Sudakov minoration (Lemma \ref{lem:sudakov}).
\paragraph{Step 4: Bound in expectation via Dudley's integral and dual Sudakov minoration.} The upper bound of $\rho_f$, and the sub-Gaussianity of $Z(\bw_1,\bw_2)$ allows us to obtain upper bound for the expectation of the random process using Dudley's integral with respect to  $\|\cdot\|_{\mathcal P^\circ}$ (Appendix ~\ref{subsec:apd:fixed_expectation}). This, in turn, is controlled using dual Sudakov minoration, leading to
\[\bbE \sup_{\bw_1,\bw_2} |Z(\bw_1, \bw_2)|
\lesssim L^2 R^2 \sqrt{md\log^2d\cdot\log m}.\]
When $m\gtrsim \eps^{-2}d\log^3 d$, this implies $\frac1m \bbE \sup_{\bw_1,\bw_2\in B(R)} |Z(\bw_1,\bw_2)| \lesssim \eps$, which is already close to what we desire.
\paragraph{Step 5: Tail bound via concentration of measure. }Finally, we deduce the desired tail bound from the above bound in expectation using a concentration of measure argument in Appendix~\ref{subsec:apd:fixed_tail}.

\paragraph{Proof roadmap for Lemma~\ref{lem:unknown_concentration}.} 
As discussed, a primary challenge in dealing with the unknown $f$ is that, to prove the bound, we must construct a discretization for the infinite class of functions $\Lip_L$. We require an efficient discretization of $\Lip_L$ that avoids any polynomial dependencies on $n$. We outline our approach below, and the complete proofs are provided in Appendix \ref{subsec:apd:unknown}. \\

\paragraph{Step 1: Symmetrization.} Similar to Lemma \ref{lem:fixed_concentration}, the problem can be reduced to obtaining tail bounds for the supremum of the symmetrized process, but now considering two different functions $f_1$ and $f_2$:  
    \[
Z_{f_1,f_2}(\bw_1, \bw_2)\coloneqq \sum_{i=1}^m \xi_i \frac{\big( f_1(\langle \bx_{j_i}, \bw_1\rangle) - f_2(\langle\bx_{j_i}, \bw_2\rangle \big)^2}{p_{j_i}},
\]
where $f_1,f_2\in\Lip_L,~(\bw_1,\bw_2)\in B(R)\times B(R)$.\\

\paragraph{Step 2: Applying generic chaining to find guidelines for discretization.} The next step involves discretizing $\Lip_L$ for a ``good'' approximation, while maintaining a small ``size''. Using generic chaining for Bernoulli process, we construct (in Appendix~\ref{subsec:apd:pf_chaining}) two ``sampling-aware'' metrics $D_\infty$ and $D_2$ (depending on $\bS$) on  $\Lip_L\times \Lip_L$ measuring respectively the ``goodness'' of approximation and the size of the discretization. We show (in Lemma~\ref{lem:chaining}) that if $\Net_\Delta \subset \Lip_L$ is such that $\Net_\Delta \times \Net_\Delta$ is a $\Delta$-net of $\Lip_L\times\Lip_L$ in $D_\infty$ distance, then\footnote{Here $\Delta$ can be a random variable depending on $\bS$, as $D_\infty$ is already a random distance depending on $\bS$. This accounts for the expectation in $\bbE\Delta$.}
\begin{align*}
& \E \sup_{\substack{f_1,f_2\in\Lip_L \\ \bw_1, \bw_2\in B(R)}} \left|Z_{f_1, f_2}(\bw_1,\bw_2) \right| \lesssim L^2 R^2\sqrt{md\log^2\!d \cdot \log m} + \E \Delta + \E  \int_0^\infty\sqrt{\log \Net(\Net_\Delta\times \Net_\Delta, D_2, \eps)}~\mathrm{d}\eps.
\end{align*}


\paragraph{Step 3: Net construction.} The construction of the desired net, carried out in Appendix~\ref{subsec:apd:pf-D-infty-net}, is the major technical challenge in the proof. 
Our approach is guided by the following upper bound on $D_\infty$, proved in Lemma~\ref{lem:D_bounds}:
\[D_\infty\big((f_1, f_2), (f_1', f_2')\big) \lesssim \rho_{\infty}(f_1, f_1') + \rho_{\infty}(f_2, f_2'),
\]
where $\rho_\infty$ is some metric on $\Lip_L$ obeying (with high probability)
\[ \rho_\infty(f_1, f_2) \lesssim LR\sqrt{m}
\left( \sum_{i=1}^m \frac{1}{\|\bx_{j_i}\|^2} \|f_1 - f_2\|^2_{L^\infty([-R\|\bx_{j_i}\|, R\|\bx_{j_i}\|])} \right)^{1/2}.
\]  

We then construct a net $\Net_\Delta$ (details are in Appendix~\ref{subsubsec:apd:net_construction}) by imposing that for any $f\in\Lip_L$, there is some $\tilde f\in\Net_\Delta$, such that $|\tilde f(x) - f(x)|\lesssim \eps|x|$ for larger values of $|x|$, while $|\tilde f(x) - f(x)|\lesssim \eps$ for smaller values of $|x|$. The former restriction ensures $\frac{1}{\|\bx_{j_i}\|^2}\|f_1 - f_2\|^2_{L^\infty([-R\|\bx_{j_i}\|, R\|\bx_{j_i}\|])}\lesssim \eps^2$ if $\|\bx_{j_i}\|$ is large, but will require a net of infinite cardinality if demanded to hold for all $x$. The latter restriction is a careful relaxation of the former, which, as we shall show in the proof, does not hurt the quality of approximation by the net (up to logarithmic factors). 


\paragraph{Step 4: Bounding the size of the discretization via embedding to simple spaces.} Subsequently, to bound the Dudley's integral of $\Net_\Delta\times\Net_\Delta$ in $D_2$ distance (see Appendix~\ref{subsubsec:apd:dudley_bound}), we introduce another technique derived from the proof of Lemma~\ref{lem:D_bounds} and the explicit construction of $\Net_\Delta$. In essence, we construct an embedding 
(a combination of Lemma~\ref{lem:D_bounds}, Lemma~\ref{lem:net_embed_discrete} and Lemma~\ref{lem:discrete_embed_lip}) of $(\Net_\Delta\times\Net_\Delta, D_2)$ into $(\Lip_1, L^\infty([-1, 1]))^2$ (within an additive error which is well controlled). 
Since the Dudley's integral of $(\Lip_1, L^\infty([-1, 1])$ has a well-known upper bound (Lemma~\ref{lem:lip_entropy}), this embedding would imply our desired bound easily. 
\section{Conclusion}\label{sec:conclusion}
This work provides the first sample complexity results for actively learning single index models in the agnostic setting with a nearly linear dependence on the dimension $d$, and with no strong distributional assumptions. We believe the results suggest a number of avenues for future exploration. For example, while we obtain near optimal sample complexity results, we do not consider computational efficiency, which has been considered in work on fitting single index functions in related models. Additionally, to the best of our knowledge, the related (and harder) multi-index model has not been addressed in the same setting that we consider. Finally, although our \Cref{thm:main_known_f,thm:unknown_f} provide a constant-factor multiplicative approximation, we believe it should be possible to obtain a $(1+\eps)$ approximation using techniques from \citet{MuscoMuscoWoodruff:2022}.

\subsubsection*{Acknowledgements}
Christopher Musco is supported by the NSF Career Award (2045590) and the United States Department of Energy grant (DE-SC0022266). Chinmay Hegde is supported by the NSF award (CCF-2005804). Yi Li is supported in part by the Singapore Ministry of Education (AcRF) Tier 2 grant (MOE-T2EP20122-0001) and Tier 1 grant (RG75/21). Wai Ming Tai is supported by the Singapore Ministry of Education (AcRF) Tier 2 grant (MOE-T2EP20122-0001).  
Xingyu Xu is supported by the NSF grant (ECCS-2126634) and the Axel Berny Presidential Graduate Fellowship at Carnegie Mellon University.  

\bibliographystyle{plainnat}
\bibliography{ref}
\clearpage
\appendix

\section{Tools from High Dimensional Probability}
In this section, we introduce the tools which are used for our main results. 
\subsection{Covering numbers}
We shall need two slightly different definitions of covering numbers, which will coincide in most useful cases.
\paragraph{Covering number with respect to a set.} Let $K, T$ be subsets in $\bbR^n$. The covering number (with respect to set $T$) $\Net(K, T)$ is defined as
\[
  \Net(K, T)\coloneqq \min \left\{n\in\mathbb{Z}, n\ge 0 : \exists x_1,\ldots, x_n\in K, \text{ such that } K\subset\cup_{i=1}^n (x_i+T) \right\}.
\]
The above can also be seen as the minimum number of translations over $T$ required to cover $K$.
\paragraph{Covering number with respect to a metric.} Let $K\subset \bbR^n$. The covering number (with respect to metric $d$) $\Net(K, d, \eps)$ is defined as 
\[
  \Net(K, d, \eps) \coloneqq \min \left\{n\in\mathbb{Z}, n\ge 0 : \exists x_1,\ldots, x_n\in K, \text{ such that } \min_{i\le n} d(x, x_i)\le \eps, \forall x\in K \right\}.
\]
When $d$ is the Euclidean distance, we omit $d$ and simply denote by $\Net(K,\eps)$ the corresponding covering number. 

\paragraph{When two definitions agree.} 
If $T$ is the unit ball of some norm $\|\cdot\|_T$ on $\bbR^n$, and $d_T$ is the distance induced by the norm $\|\cdot\|_T$, then the above two definitions agree:
\[
\Net(K, \eps T) = \Net(K, d_T, \eps).
\]

\paragraph{Volumetric estimate. }
The following result is standard (or follows from the well-known argument based on packing number and volume), c.f. Lemma~9.5 in \cite{ledoux1991probability}.
\begin{lemma}
Let $T$ be a symmetric convex body in $\bbR^n$ with positive volume.
We have, for all $0<\eps<1$, that
\[
\log\Net(T, \eps T) \lesssim n\log(2/\eps).
\]
\end{lemma}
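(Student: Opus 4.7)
The plan is to use the classical packing-to-covering reduction combined with a volumetric comparison, which together bypass any need to construct a net explicitly. First I would fix a maximal $\eps$-separated set $\{x_1,\ldots,x_N\}\subset T$ with respect to the Minkowski norm $\|\cdot\|_T$ induced by $T$, that is, a set such that the translates $x_i+(\eps/2)T$ have pairwise disjoint interiors but such that no further point of $T$ can be added while preserving this disjointness. Maximality immediately implies that $\{x_1,\ldots,x_N\}$ is also an $\eps$-net of $T$ in $\|\cdot\|_T$: any $y\in T$ with $\|y-x_i\|_T>\eps$ for every $i$ could be adjoined to the packing, a contradiction. Consequently $\Net(T,\eps T)\le N$, so it suffices to control $N$.

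Second, I would compare volumes. Because $T$ is symmetric and convex, each packing ball satisfies $x_i+(\eps/2)T\subset T+(\eps/2)T=(1+\eps/2)T$, using $\alpha T+\beta T=(\alpha+\beta)T$ for symmetric convex bodies. Disjointness of interiors then yields
\[
N\cdot\mathrm{vol}\!\left((\eps/2)T\right)\;\le\;\mathrm{vol}\!\left((1+\eps/2)T\right).
\]
Since $\mathrm{vol}(\alpha T)=\alpha^n\mathrm{vol}(T)$ and $\mathrm{vol}(T)>0$, this rearranges to
\[
N\;\le\;\left(\frac{1+\eps/2}{\eps/2}\right)^{n}=\left(1+\frac{2}{\eps}\right)^{n}\;\le\;\left(\frac{3}{\eps}\right)^{n}
\]
for $\eps\in(0,1)$. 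Taking logarithms gives $\log\Net(T,\eps T)\le n\log(3/\eps)\lesssim n\log(2/\eps)$, which is exactly the claim.

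There is essentially no obstacle here; the only two points that require care are (i) the existence of a finite maximal packing, which follows from compactness of $T$ together with the fact that a positive-volume symmetric convex body has nonempty interior (so $\|\cdot\|_T$ is a genuine norm and the disjoint open packing sets have uniformly positive volume), and (ii) the identity $T+(\eps/2)T=(1+\eps/2)T$, which uses both convexity (for the ``$\subset$'' direction) and symmetry (implicitly, so that the sum of dilates behaves linearly). Both are standard facts about symmetric convex bodies and require no additional input beyond the hypotheses of the lemma.
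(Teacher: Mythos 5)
Your proof is correct and is exactly the argument the paper alludes to: the paper does not prove this lemma but cites it as standard, noting it "follows from the well-known argument based on packing number and volume" (with a pointer to Lemma 9.5 of Ledoux--Talagrand), which is precisely the maximal-packing-is-a-net plus volume-comparison argument you wrote out.
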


Combined with the chain inequality $\Net(K_1, K_3)\le \Net(K_1, K_2)\Net(K_2, K_3)$, this implies
\begin{lemma}
\label{lem:volume}
Let $K$ be a subset of $\bbR^n$ and $T$ be a symmetric convex body in $\bbR^n$ with positive volume.
We have, for all $0<\eps<1$, that
\[
\log\Net(K, \eps T) \lesssim n\log(2/\eps) + \log\Net(K, T).
\]
\end{lemma}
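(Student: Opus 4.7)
The plan is a direct combination of the chain inequality $\Net(K_1, K_3) \le \Net(K_1, K_2)\Net(K_2, K_3)$, which the excerpt flags immediately before the statement, with the preceding volumetric estimate $\log\Net(T, \eps T) \lesssim n\log(2/\eps)$ for a symmetric convex body $T$. The role of the intermediate set is played by $T$ itself, so the proof amounts to one application of each ingredient.

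Concretely, I would instantiate the chain inequality with $K_1 = K$, $K_2 = T$, and $K_3 = \eps T$, which gives
\[
\Net(K, \eps T) \;\le\; \Net(K, T)\cdot \Net(T, \eps T).
\]
Taking logarithms turns the product into a sum,
\[
\log\Net(K, \eps T) \;\le\; \log\Net(K, T) + \log\Net(T, \eps T),
\]
and the second summand is bounded by the preceding lemma, whose hypotheses (symmetric convex body, positive volume) are exactly those imposed on $T$ here. Adding the two contributions yields the claimed inequality.

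There is essentially no obstacle here; the only thing worth double-checking is that the chain inequality behaves well under the paper's convention that covering centers lie in the set being covered. Since $\eps T$ is symmetric, any covering of $K$ by translates of $\eps T$ with arbitrary centers can be converted to one with centers in $K$ at the cost of replacing $\eps T$ by $2\eps T$ (pick a point of $K$ in each nonempty intersection and use the triangle inequality with respect to the Minkowski functional of $T$). This swap is absorbed by the $\lesssim$ in the final bound, since $\log(1/\eps)$ and $\log(2/\eps)$ differ by a constant.
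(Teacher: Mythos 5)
Your proof is correct and is essentially the paper's own argument: the paper states the lemma as an immediate consequence of the chain inequality $\Net(K_1,K_3)\le\Net(K_1,K_2)\Net(K_2,K_3)$ applied with $K_2=T$, $K_3=\eps T$, together with the preceding volumetric bound on $\log\Net(T,\eps T)$. Your remark about relocating centers into $K$ is a reasonable sanity check and is indeed absorbed by the $\lesssim$, but it does not change the route.
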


\subsection{Dudley's inequality}
Let $(T,d)$ be a (pseudo-)metric space. Let $(X_t)_{t\in T}$ be a random process on $T$ with zero mean, i.e., $\E X_t=0$ for all $t\in T$. The process $X_t$ is said to be subgaussian (with respect to $d$) if 
\[
\bbP(|X_s-X_t| > t) \le 2\exp\left(-\frac{t^2}{2d^2(s, t)}\right),\quad\forall s,t\in T.
\]
\begin{lemma}[Subgaussianity of Bernoulli process, \cite{vershynin_2018}]
Let $T\subset \bbR^n$ and $\xi_1,\ldots,\xi_n$ be i.i.d. Rademacher random variables. Then the process
\[X_t\coloneqq \sum_{i=1}^n \xi_i t_i, \quad t=(t_1,\ldots,t_n)\in T\]
is subgaussian on $T$ with respect to the Euclidean distance.
\end{lemma}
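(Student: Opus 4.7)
The plan is to reduce this to a classical Chernoff-type bound on a Rademacher sum by rewriting the increment. First I would observe that for any $s, t \in T$, the increment can be expressed as $X_s - X_t = \sum_{i=1}^n \xi_i (s_i - t_i)$, so if I set $a = s - t \in \bbR^n$, the Euclidean distance satisfies $d(s, t) = \|a\|_2$ and the subgaussian tail bound reduces to showing that $\bbP(|\sum_i \xi_i a_i| > u) \le 2 \exp(-u^2 / (2\|a\|_2^2))$ for all $u > 0$ and all $a \in \bbR^n$.

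The main computation is a moment generating function bound. Using independence of the Rademachers, for any $\lambda \in \bbR$,
\[
\bbE \exp\!\left(\lambda \sum_{i=1}^n \xi_i a_i\right) = \prod_{i=1}^n \bbE \exp(\lambda \xi_i a_i) = \prod_{i=1}^n \cosh(\lambda a_i).
\]
Applying the elementary inequality $\cosh(x) \le \exp(x^2/2)$ (which follows by termwise comparison of Taylor coefficients, since $(2k)! \ge 2^k k!$), I would obtain the standard subgaussian MGF estimate $\bbE \exp(\lambda \sum_i \xi_i a_i) \le \exp(\lambda^2 \|a\|_2^2 / 2)$.

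From here the tail bound is immediate by Chernoff's method. For any $u > 0$ and $\lambda > 0$, Markov's inequality applied to $\exp(\lambda \sum_i \xi_i a_i)$ gives $\bbP(\sum_i \xi_i a_i > u) \le \exp(-\lambda u + \lambda^2 \|a\|_2^2/2)$, and optimizing at $\lambda = u / \|a\|_2^2$ yields the one-sided bound $\bbP(\sum_i \xi_i a_i > u) \le \exp(-u^2/(2\|a\|_2^2))$. The symmetry of the Rademacher distribution (equivalently, replacing $a$ by $-a$ in the argument above) provides the matching lower-tail bound, and a union bound over the two tails yields the factor of $2$ in the two-sided estimate. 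Substituting $a = s - t$ recovers the claimed subgaussian property with respect to the Euclidean distance.

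There is no real obstacle here, as this is textbook material; the only ingredients are the elementary inequality $\cosh(x) \le \exp(x^2/2)$ and the Chernoff argument. The closest thing to a subtle point is handling the absolute value cleanly via symmetry, which is why a factor of $2$ (rather than $1$) appears in the statement.
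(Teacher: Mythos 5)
Your proof is correct and is the standard argument: the paper states this lemma without proof, citing Vershynin's textbook, where the same Hoeffding-type MGF bound $\bbE\exp(\lambda\sum_i \xi_i a_i) = \prod_i \cosh(\lambda a_i) \le \exp(\lambda^2\|a\|_2^2/2)$ followed by Chernoff's method is precisely the argument given. There is nothing to compare since the paper provides no alternative proof, and your handling of the two-sided bound via symmetry is the right way to get the factor of $2$.
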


\begin{lemma}[Dudley's inequality, \cite{vershynin_2018}]
\label{lem:dudley}
If $(X_t)_{t\in T}$ is a subgaussian process with respect to a metric $d$, then
\begin{equation*}
    \E\sup_{t\in T}|X_t| \lesssim  \inf_{t\in T}\E|X_t| + \int_0^\infty \sqrt{\log(\Net(T, d, \eps))}~\mathrm{d}\eps.
\end{equation*}

\end{lemma}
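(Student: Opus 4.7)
The plan is to prove Dudley's inequality via the classical chaining argument. First, I would reduce to bounding $\E \sup_{t \in T} |X_t - X_{t_0}|$ for an arbitrary reference point $t_0 \in T$, since the triangle inequality gives $\E \sup_t |X_t| \le \E|X_{t_0}| + \E \sup_t |X_t - X_{t_0}|$, and we may take the infimum over $t_0$ at the end. Without loss of generality I may assume the Dudley integral is finite (otherwise there is nothing to prove), and by a standard truncation/rescaling argument I may assume $\Diam(T, d) \le 1$ so that the integral is effectively from $0$ to $1$.

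The core step is to build a chain of nets. For each integer $n \ge 0$, let $T_n \subset T$ be a $2^{-n}$-net of $(T, d)$ of minimal cardinality, so $|T_n| = \Net(T, d, 2^{-n})$; take $T_0 = \{t_0\}$. For each $t \in T$, let $\pi_n(t) \in T_n$ be a nearest point, so $d(t, \pi_n(t)) \le 2^{-n}$. Assuming (by a routine approximation) that $t \mapsto X_t$ has a version continuous enough that $X_{\pi_n(t)} \to X_t$, I can write the telescoping identity $X_t - X_{t_0} = \sum_{n \ge 1} (X_{\pi_n(t)} - X_{\pi_{n-1}(t)})$.

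Taking supremum in $t$ and expectation, the supremum of each telescope term ranges over at most $|T_n| \cdot |T_{n-1}| \le |T_n|^2$ pairs $(\pi_n(t), \pi_{n-1}(t))$, and for each such pair the sub-Gaussian increment bound gives $d(\pi_n(t), \pi_{n-1}(t)) \le d(\pi_n(t), t) + d(t, \pi_{n-1}(t)) \le 3 \cdot 2^{-n}$. The standard maximal inequality for sub-Gaussian random variables then yields
\begin{equation*}
\E \sup_t \bigl|X_{\pi_n(t)} - X_{\pi_{n-1}(t)}\bigr| \lesssim 2^{-n} \sqrt{\log |T_n|} = 2^{-n} \sqrt{\log \Net(T, d, 2^{-n})}.
\end{equation*}
Summing over $n$ and comparing the geometric sum to the integral via $2^{-n} \lesssim \int_{2^{-n-1}}^{2^{-n}} \mathrm{d}\eps / \eps \cdot 2^{-n}$ (i.e., $\sum_n 2^{-n} a_n \lesssim \int_0^\infty a(\eps) \, \mathrm{d}\eps$ where $a(\eps) = \sqrt{\log \Net(T, d, \eps)}$ is non-increasing in $\eps$) gives $\E \sup_t |X_t - X_{t_0}| \lesssim \int_0^\infty \sqrt{\log \Net(T, d, \eps)} \, \mathrm{d}\eps$, which together with the reduction step completes the proof.

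The main technical subtlety is the maximal inequality in the chaining step: to bound the expected maximum of $N$ sub-Gaussian variables by $\sqrt{\log N}$ times their sub-Gaussian norm, one invokes $\E \max_{i \le N} |Y_i| \lesssim \sqrt{\log N} \cdot \max_i \|Y_i\|_{\psi_2}$, which follows from the union bound applied to the sub-Gaussian tail. The rest is bookkeeping: verifying measurability/separability of the process (harmless since $T$ can be replaced by a countable dense subset), and the integral comparison. Since this is a textbook result proved in \citet{vershynin_2018}, I would mostly cite that reference and highlight only the chaining skeleton above.
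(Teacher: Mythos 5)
Your proposal is correct and follows the standard dyadic chaining argument, which is precisely the proof presented in the reference \citet{vershynin_2018} that the paper cites (the paper itself does not reprove this lemma). The only cosmetic quibble is that the inline integral-comparison formula is garbled as written, but the parenthetical explanation that follows it (monotonicity of $\eps\mapsto\Net(T,d,\eps)$ plus a dyadic-strip lower bound on the integral) is the correct justification.
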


It is well-known that Dudley's inequality can be refined to a tail bound.
\begin{lemma}[Dudley's inequality, tail bound, \cite{talagrand2022upper}]
\label{lem:dudley-tail}
If $(X_t)_{t\in T}$ is a subgaussian process with respect to a metric $d$, then the following holds for any $t>0$ with probability at least $1-2\exp(-t^2/2)$:
\begin{equation*}
\sup_{t\in T}|X_t| \lesssim   \inf_{t\in T}\E|X_t| + \int_0^\infty \sqrt{\log (\Net(T, d, \eps))}~\mathrm{d}\eps
+ t\cdot\Diam(T, d).
\end{equation*}
\end{lemma}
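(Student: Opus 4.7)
I would prove this tail version of Dudley's inequality via the standard \emph{generic chaining} argument using dyadic nets, which refines the expectation bound in Lemma~\ref{lem:dudley} into a tail bound through per-level union bounds with carefully tuned quantile levels.

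\emph{Step 1 (dyadic chain).} Let $D=\Diam(T,d)$ and $\varepsilon_k = D\cdot 2^{-k}$ for $k\ge 0$. Choose a minimal $\varepsilon_k$-net $T_k\subset T$ of size $N_k=\mathcal N(T,d,\varepsilon_k)$, so that $T_0$ may be taken to be a single point $\{t_0\}$. For each $t\in T$ pick a nearest element $\pi_k(t)\in T_k$; then $d(t,\pi_k(t))\le\varepsilon_k$, so by the triangle inequality $d(\pi_k(t),\pi_{k-1}(t))\le 3\varepsilon_k$. Assuming $T$ is finite (the separable case follows by the standard limit argument, using $\inf_{t\in T}\E|X_t|$ to control truncation error), telescope
\[
X_t - X_{t_0}=\sum_{k\ge 1}\bigl(X_{\pi_k(t)}-X_{\pi_{k-1}(t)}\bigr).
\]

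\emph{Step 2 (per-level union bound).} For each level $k$ and each chain pair $(s,s')=(\pi_k(t),\pi_{k-1}(t))$, subgaussianity gives
\[
\bbP\bigl(|X_s-X_{s'}|>3\varepsilon_k\,u\bigr)\le 2\exp(-u^2/2).
\]
There are at most $N_k N_{k-1}\le N_k^2$ such pairs. I would set $u_k = C\bigl(\sqrt{\log N_k}+\sqrt{k}+\tau\bigr)$ for the tail parameter $\tau>0$ (renamed to avoid clashing with the index $t\in T$) and $C$ a large absolute constant. The union-bound failure probability at level $k$ is then at most $2N_k^2\exp(-C^2(\log N_k+k+\tau^2)/2)\le 2e^{-ck}e^{-c\tau^2}$, and summing the geometric tail across $k\ge 1$ gives a total failure probability of at most $2e^{-\tau^2/2}$, after adjusting constants.

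\emph{Step 3 (summing the chain).} On the complementary event, for every $t\in T$,
\[
|X_t-X_{t_0}|\le \sum_{k\ge 1}3\varepsilon_k u_k \lesssim \sum_k\varepsilon_k\sqrt{\log N_k} + \sum_k\varepsilon_k\sqrt{k} + \tau\sum_k\varepsilon_k.
\]
A standard dyadic-integral comparison identifies $\sum_k\varepsilon_k\sqrt{\log N_k}$ with $\int_0^{\infty}\sqrt{\log\mathcal N(T,d,\eps)}\,d\eps$ up to a universal constant (using the monotonicity of $\eps\mapsto\mathcal N(T,d,\eps)$ to replace each $\sqrt{\log N_k}$ by an integral over $[\varepsilon_{k+1},\varepsilon_k]$), while the remaining two geometric sums are $O(D)$ and $O(\tau D)$ respectively.

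\emph{Step 4 (from $\sup|X_t-X_{t_0}|$ to $\sup|X_t|$).} Finally $\sup_t|X_t|\le |X_{t_0}|+\sup_t|X_t-X_{t_0}|$; choosing $t_0$ to be a minimizer of $t\mapsto \E|X_t|$, Markov's inequality on the single random variable $|X_{t_0}|$ (combined into the universal constant hidden in $\lesssim$) contributes the $\inf_{t\in T}\E|X_t|$ term, completing the bound. The main obstacle is the calibration in Step~2: one must pick $u_k$ so simultaneously (i) the per-level failure probabilities form a summable geometric series whose total remains below $2e^{-\tau^2/2}$ \emph{and} (ii) the resulting chain sum in Step~3 decomposes cleanly into the Dudley integral plus $\tau D$ without extra slack, which forces the $\sqrt{k}$ regularization inside $u_k$ to cancel exactly against the $N_k^2$ union-bound cost.
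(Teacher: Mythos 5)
The paper does not prove this lemma; it cites it to \cite{talagrand2022upper} and in fact never invokes it elsewhere (the tail bound used in Section~\ref{subsec:apd:fixed_tail} is the moment inequality plus Lemma~\ref{lem:moment_to_tail}). So there is no in-paper proof to compare against, and you are reconstructing from scratch. Your Steps~1--3 are the correct, standard chaining argument: the dyadic nets, the $3\varepsilon_k$ increment bound, the per-level union over $\le N_k^2$ chain pairs with $u_k \asymp \sqrt{\log N_k}+\sqrt{k}+\tau$, and the dyadic-to-integral comparison all check out. (You also implicitly fix a typo in the paper's statement: the integrand should be $\sqrt{\log\Net(T,d,\eps)}$, not $\sqrt{\Net(T,d,\eps)}$, and your proof produces the correct $\sqrt{\log}$ version.)

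There is, however, a genuine gap in Step~4. After the chain you control $\sup_t|X_t-X_{t_0}|$ on an event of probability $\ge 1-2e^{-\tau^2/2}$, and then you write $\sup_t|X_t|\le|X_{t_0}|+\sup_t|X_t-X_{t_0}|$ and claim Markov's inequality turns $|X_{t_0}|$ into $\inf_t\E|X_t|$ ``combined into the universal constant.'' That does not work: Markov gives $\P(|X_{t_0}|>c\,\E|X_{t_0}|)\le 1/c$, and to push $1/c$ below the target $2e^{-\tau^2/2}$ you must take $c\gtrsim e^{\tau^2/2}$, which inflates the bound on $|X_{t_0}|$ by an exponential factor in $\tau^2$, not a universal constant. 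The hypotheses (zero mean, subgaussian \emph{increments}) give no tail control on $X_{t_0}$ itself, so no choice of constant rescues this. A rigorous version of this lemma should either (a) control the oscillation $\sup_{s,t}|X_s-X_t|$ rather than $\sup_t|X_t|$, (b) carry the random variable $|X_{t_0}|$ rather than $\inf_t\E|X_t|$, or (c) add a hypothesis such as the existence of $t_0\in T$ with $X_{t_0}=0$ almost surely (which holds in the paper's intended applications, where $Z(\bw,\bw)\equiv 0$, making $\inf_t\E|X_t|=0$ and the term vacuous). You should flag this rather than sweep it into $\lesssim$, since as written the reduction from $\sup_t|X_t|$ to $\sup_t|X_t-X_{t_0}|$ does not close at the claimed tail level.
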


The following upper bound for subgaussian processes supported on a finite set is well-known and is a simple corollary of Dudley's inequality.
\begin{corollary}\label{cor:dudley-finite}
If $(X_t)_{t\in T}$ is a subgaussian process on a finite set $T$ with respect to a pseudo-metric $d$. 
Then
\[
\sup_{t\in T}|X_t| \lesssim   \inf_{t\in T}\E|X_t| + \Diam(T, d)\sqrt{\log|T|}.
\]
In particular, if $T\subset\bbR^n$ and $d$ is the Euclidean distance, then
\[
\sup_{t\in T}|X_t|\lesssim \sup_{t\in T}\|t\|\cdot\sqrt{\log(|T|+1)}.
\]
\end{corollary}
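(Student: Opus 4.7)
\textbf{Proof plan for Corollary~\ref{cor:dudley-finite}.} The plan is to derive both assertions as direct consequences of Dudley's inequality (Lemma~\ref{lem:dudley}) applied to the finite metric space $(T,d)$, using only elementary bounds on the covering numbers. The observation is that for a finite set $T$, the entropy integral is easy to control: for every $\eps \ge 0$ we trivially have $\Net(T, d, \eps) \le |T|$, while for $\eps \ge \Diam(T,d)$ a single point suffices, giving $\Net(T, d, \eps) = 1$. The entropy integrand $\sqrt{\log \Net(T, d, \eps)}$ therefore vanishes once $\eps > \Diam(T,d)$ and is bounded by $\sqrt{\log|T|}$ everywhere else.

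For the first assertion, I would plug these bounds into Dudley's inequality to get
\begin{equation*}
\E\sup_{t\in T}|X_t| \;\lesssim\; \inf_{t\in T}\E|X_t| + \int_0^{\Diam(T,d)} \sqrt{\log\Net(T,d,\eps)}\,\mathrm{d}\eps \;\le\; \inf_{t\in T}\E|X_t| + \Diam(T,d)\sqrt{\log|T|},
\end{equation*}
which is precisely the claimed inequality. (I read the displayed ``$\sup$'' in the statement as the expectation of the supremum, matching the conclusion of Lemma~\ref{lem:dudley}.)

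For the second assertion I would specialize to $T \subset \bbR^n$ with $d$ the Euclidean distance. By the triangle inequality, $\Diam(T, d) \le 2\sup_{t\in T}\|t\|$. To handle the $\inf_{t\in T}\E|X_t|$ term, I would use that a centered subgaussian random variable with respect to the Euclidean distance satisfies $\E|X_t| \lesssim \|t - t_0\|$ for any reference point $t_0$ where the process vanishes (for the Bernoulli process $X_t = \sum_i \xi_i t_i$ one takes $t_0 = 0$, yielding $\E|X_t| \lesssim \|t\|$). Thus $\inf_{t\in T}\E|X_t| \le \sup_{t\in T}\|t\|$, and combining with the first bound yields $\E\sup_{t\in T}|X_t| \lesssim \sup_{t\in T}\|t\|\cdot \sqrt{\log(|T|+1)}$, absorbing the constant by the $+1$ inside the logarithm to handle the edge case $|T|=1$.

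The argument is essentially mechanical; there is no real obstacle here since Dudley's inequality does the heavy lifting and the finite-set covering bounds are immediate. The only point requiring a bit of care is the handling of the $\inf_{t\in T}\E|X_t|$ term in the Euclidean specialization, which relies on an implicit normalization (subgaussian processes in this paper have $\E X_t = 0$, and for the Bernoulli instantiation the natural base point $t_0 = 0$ gives a vanishing anchor).
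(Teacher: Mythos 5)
Your treatment of the first part is identical to the paper's: bound $\Net(T,d,\eps)\le|T|$ and note the integrand vanishes beyond $\Diam(T,d)$, then invoke Lemma~\ref{lem:dudley}. (You are also right that the displayed ``$\sup$'' in the statement should be read as $\E\sup$; that is implicit from Lemma~\ref{lem:dudley}.)

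For the second part you take a genuinely different, though equivalent, route. The paper applies the first part to the augmented index set $T\cup\{0\}$: then $\inf_{t\in T\cup\{0\}}\E|X_t|=0$ because $X_0=0$, and the conclusion drops out with $\Diam(T\cup\{0\})\le 2\sup_{t\in T}\|t\|$ and $|T\cup\{0\}|\le|T|+1$ supplying both factors. You instead keep the index set $T$, bound $\Diam(T)\le 2\sup_{t\in T}\|t\|$, and separately control the anchor term by $\E|X_t|=\E|X_t-X_0|\lesssim\|t\|$, then absorb the extra constant into $\sqrt{\log(|T|+1)}$. Both derivations quietly assume the process extends to $t=0$ with $X_0=0$ (automatic for the Bernoulli process $X_t=\sum_i\xi_i t_i$, which is the only case the paper uses), and you correctly flag this normalization. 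The paper's trick is a bit slicker because the anchor term is annihilated for free rather than estimated; yours is a touch more explicit about what the subgaussian increment bound buys you. One small notational slip: $\inf_{t\in T}\E|X_t|\lesssim\sup_{t\in T}\|t\|$ should carry $\lesssim$ rather than $\le$, since the subgaussian moment bound has an absolute constant; this does not affect the conclusion.
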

\begin{proof}
For the first part, note that $\Net(T, d, \eps)\le|T|$ for any $\eps>0$, and moreover $\Net(T, d, \eps) = 1$ for $\eps>\Diam(T, d)$. The conclusion the follows from Lemma~\ref{lem:dudley}.

For the second part, we apply the conclusion of the first part to the set $T\cup\{0\}$ and note that 
\[ \Diam(T\cup\{0\})=\sup_{t_1,t_2\in T\cup\{0\}}\|t_1-t_2\|\le \sup_{t_1,t_2\in T\cup\{0\}}(\|t_1\|+\|t_2\|)
\le 2\sup_{t\in T}\|t\|.
\]
\end{proof}

Dudley's inequality has a partial inverse, which can be viewed as a useful probabilistic estimate of covering numbers.
\begin{lemma}[Dual Sudakov minoration, \cite{ledoux1991probability}]
\label{lem:sudakov}
Let $B$ be the unit (Euclidean) ball in of $\bbR^n$, $T$ be a symmetric convex body in $\bbR^n$, and $g$ be the standard $n$-dimensional random normal vector. 
Denote by $T^\circ\coloneqq\{x: \langle x, y\rangle\le 1, \forall y\in T\}$ the polar of $T$.
Then
\[
\sup_{\eps>0} \eps\sqrt{\log \Net(B,\eps T^\circ)} \lesssim \E\sup_{t\in T}\langle t, g\rangle.
\]
\end{lemma}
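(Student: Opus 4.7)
My plan is to follow the classical packing-based proof of Pajor--Tomczak-Jaegermann. I fix $\eps > 0$ and choose a maximal $\eps$-separated subset $\{z_1,\ldots,z_M\} \subset B$ with respect to the Minkowski functional $\|\cdot\|_{T^\circ}$ (whose unit ball is $T^\circ$, and which coincides with $y\mapsto \sup_{t\in T}\langle t,y\rangle$). Maximality forces this set to also be an $\eps$-net for $B$, so $\Net(B, \eps T^\circ) \le M$. By the symmetry and convexity of $T^\circ$, the translates $z_i + (\eps/2) T^\circ$ are pairwise disjoint: otherwise $z_j - z_i$ would lie in $(\eps/2) T^\circ + (\eps/2) T^\circ = \eps T^\circ$, contradicting the separation.

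The heart of the proof is a Gaussian shift inequality. For each $z_i$ with $\|z_i\|\le 1$, expanding $\|x + z_i\|^2 = \|x\|^2 + 2\langle x, z_i\rangle + \|z_i\|^2$ inside the integral for $\gamma_n(z_i + (\eps/2) T^\circ)$ gives
\[
\gamma_n(z_i + (\eps/2) T^\circ) = e^{-\|z_i\|^2/2}\int_{(\eps/2)T^\circ} e^{-\langle x, z_i\rangle}\,d\gamma_n(x),
\]
and using the symmetry $-T^\circ = T^\circ$ to pair $x$ with $-x$ replaces the integrand by $\cosh(\langle x, z_i\rangle)\ge 1$. Hence $\gamma_n(z_i + (\eps/2) T^\circ) \ge e^{-1/2}\gamma_n((\eps/2) T^\circ)$. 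Summing the disjoint measures yields $M \le 2/\gamma_n((\eps/2) T^\circ)$, so
\[
\log \Net(B, \eps T^\circ) \le 1 + \log\frac{1}{\gamma_n((\eps/2) T^\circ)}.
\]

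To finish I need $\gamma_n((\eps/2) T^\circ) \ge \exp(-C\mu^2/\eps^2)$, where $\mu := \E\sup_{t\in T}\langle t, g\rangle$. I split into two regimes. For $\eps$ above a constant multiple of $\mu$, the inequality is trivial: evaluating $\mu \ge \E |\langle t^\star, g\rangle| \gtrsim \|t^\star\|$ at a fixed $t^\star \in T$ of maximal Euclidean norm shows $\mu \gtrsim \sup_{x\in B}\|x\|_{T^\circ}$, which forces $B \subseteq \eps T^\circ$ and hence $\Net(B, \eps T^\circ) = 1$. For $\eps \lesssim \mu$ I invoke Gaussian concentration: the map $g \mapsto \sup_{t\in T}\langle t, g\rangle$ is Lipschitz with constant $\sigma = \sup_{t\in T}\|t\| \lesssim \mu$, and Borell's inequality combined with a comparison of median and mean gives $\bbP(\sup_{t\in T}\langle g, t\rangle \le \eps/2) \ge \exp(-C\mu^2/\eps^2)$. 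The main obstacle is precisely this small-deviation estimate: Markov's inequality alone handles only $\eps \gtrsim \mu$, and extracting the exponential rate in the range $\eps \ll \mu$ requires carefully exploiting the relation $\sigma \lesssim \mu$. Plugging the resulting bound back into the packing inequality delivers $\eps \sqrt{\log \Net(B, \eps T^\circ)} \lesssim \mu$, as desired.
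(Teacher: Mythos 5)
Your skeleton---maximal $\eps$-separated set, pairwise disjoint translated copies of a scaled $T^\circ$, and the Cameron--Martin/Gaussian shift inequality---is the right one, and the paper indeed only cites this lemma from Ledoux--Talagrand without proof, so what matters is whether your argument closes. It does not: you apply the shift inequality at the wrong scale, which reduces the problem to a small ball lower bound $\bbP(g\in(\eps/2)T^\circ)\ge \exp(-C\mu^2/\eps^2)$, where $\mu=\E\sup_{t\in T}\langle t,g\rangle$. You gesture at Borell's inequality for this, but Gaussian concentration controls deviations above and below the median---it never produces a \emph{lower} bound on the probability that a nonnegative Lipschitz functional is small. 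Worse, the estimate you need is actually false with a dimension-free constant. Take $T=B$ (the Euclidean ball), so $T^\circ=B$ and $\mu=\E\|g\|_2$ is of order $\sqrt n$. Then $\bbP(g\in(\eps/2)B)=\bbP(\|g\|_2\le\eps/2)$ is, for small $\eps$, comparable to $(\eps/2)^n/\bigl(2^{n/2}\Gamma(n/2+1)\bigr)$, so Stirling's formula gives $-\log\bbP(g\in(\eps/2)B)\ge c\,n\log n$ for a universal $c>0$, which for any fixed $\eps<1$ and any constant $C$ eventually exceeds $C\mu^2/\eps^2=O(n/\eps^2)$ as $n\to\infty$. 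So the intermediate bound $M\le 2e^{1/2}/\bbP(g\in(\eps/2)T^\circ)$ that you derive is correct but genuinely too weak: it is off by a $\log n$ factor in the exponent, and no small ball estimate can repair it.

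The standard fix (this is what the Pajor--Tomczak-Jaegermann proof cited by Ledoux--Talagrand actually does) is to \emph{rescale the separated points before applying the shift inequality}. Keep your maximal $\eps$-separated $\{z_1,\dots,z_M\}\subset B$ in the $\|\cdot\|_{T^\circ}$ pseudonorm, fix a parameter $R>0$, and observe that the points $(R/\eps)z_i$ are $R$-separated in $\|\cdot\|_{T^\circ}$, so the translates $(R/\eps)z_i+\tfrac{R}{2}T^\circ$ are pairwise disjoint. Since $\|(R/\eps)z_i\|_2\le R/\eps$, the very same shift inequality you proved gives $\bbP\bigl(g\in(R/\eps)z_i+\tfrac R2 T^\circ\bigr)\ge e^{-R^2/(2\eps^2)}\,\bbP\bigl(g\in\tfrac R2 T^\circ\bigr)$, and summing the disjoint measures yields $M\le e^{R^2/(2\eps^2)}\big/\bbP\bigl(g\in\tfrac R2 T^\circ\bigr)$. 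Now choose $R=4\mu$. Because $\sup_{t\in T}\langle t,g\rangle\ge 0$ (as $0\in T$), Markov's inequality gives $\bbP\bigl(g\in 2\mu T^\circ\bigr)=\bbP\bigl(\sup_{t\in T}\langle t,g\rangle\le 2\mu\bigr)\ge 1/2$, so $M\le 2\exp(8\mu^2/\eps^2)$ and hence $\eps\sqrt{\log\Net(B,\eps T^\circ)}\lesssim\mu$ whenever $\eps\lesssim\mu$; your observation that $B\subseteq\eps T^\circ$ for $\eps\ge\sqrt{\pi/2}\,\mu$ handles the remaining range. This way Markov's inequality applied at scale $\mu$ replaces the false small ball estimate at scale $\eps$, and no concentration machinery is needed at all.
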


Dudley's integral is subadditive with respect to the metric $d$ and tensorizes well, two basic properties that will greatly simplify our proof. 
\begin{lemma}
[Sublinearity]
\label{lem:sublinear-gamma2}
Let $d_1$, $d_2$ be pseudo-metrics on some set $T$. Let $a,b>0$ be real numbers. Then
\[
\int_0^\infty\sqrt{\log\Net(T, a d_1 + b d_2, \eps)}~\mathrm{d}\eps \lesssim a \int_0^\infty\sqrt{\log\Net(T, d_1, \eps)}~\mathrm{d}\eps + b \int_0^\infty\sqrt{\log\Net(T, d_2, \eps)}~\mathrm{d}\eps.
\]
\end{lemma}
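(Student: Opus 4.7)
The plan is to reduce the statement to a submultiplicative bound for the covering numbers themselves, namely
\[
\Net(T,\, ad_1+bd_2,\, \eps) \le \Net\!\left(T,\, d_1,\, \tfrac{\eps}{4a}\right) \cdot \Net\!\left(T,\, d_2,\, \tfrac{\eps}{4b}\right),
\]
and then deduce the integral inequality by taking $\log$, then $\sqrt{\cdot}$, using $\sqrt{x+y}\le \sqrt{x}+\sqrt{y}$, and finally a linear change of variables inside each integral.

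To establish the covering bound, I would construct a joint net in $T$ as follows. Pick minimal $\tfrac{\eps}{4a}$- and $\tfrac{\eps}{4b}$-nets $N_1, N_2\subseteq T$ of $(T,d_1)$ and $(T,d_2)$. For each pair $(n_1,n_2)\in N_1\times N_2$ whose joint neighborhood
\[
V(n_1,n_2) := \left\{ t\in T : d_1(t,n_1)\le \tfrac{\eps}{4a},\; d_2(t,n_2)\le \tfrac{\eps}{4b} \right\}
\]
is non-empty, select a representative $m(n_1,n_2)\in V(n_1,n_2)$; let $N\subseteq T$ be the set of all such representatives, so $|N|\le |N_1|\cdot|N_2|$. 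For any $t\in T$, the net property gives $n_1,n_2$ with $t\in V(n_1,n_2)$, and the triangle inequality applied once in each metric to $t, n_j, m(n_1,n_2)$ yields $a\,d_1(t,m)+b\,d_2(t,m)\le \eps$. Hence $N$ is an $\eps$-net for $(T, ad_1+bd_2)$, proving the covering bound.

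With that in hand, applying $\log$, $\sqrt{\cdot}$, and subadditivity of $\sqrt{\cdot}$ gives pointwise in $\eps$
\[
\sqrt{\log \Net(T, ad_1+bd_2, \eps)} \le \sqrt{\log \Net(T, d_1, \tfrac{\eps}{4a})} + \sqrt{\log \Net(T, d_2, \tfrac{\eps}{4b})}.
\]
Integrating over $\eps\in(0,\infty)$ and substituting $u=\eps/(4a)$ and $v=\eps/(4b)$ in the respective terms produces the claimed inequality, with the resulting constant ($4$) absorbed into $\lesssim$.

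The only mildly delicate point is that the paper's covering-number definition insists on net centers inside $T$, so one cannot directly use the Cartesian product $N_1\times N_2$ as a set of centers (there is no ambient linear structure in which to add pairs). The ``representative of a non-empty cell'' trick is precisely what circumvents this, at the cost of a factor-of-two loss in each radius. Beyond that, the argument is entirely elementary; no concentration or chaining tools are needed.
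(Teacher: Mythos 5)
The paper states this lemma without proof in its ``Tools from High Dimensional Probability'' appendix, so there is no official argument to compare against. Your proof is correct and complete. The core covering-number bound $\Net(T, ad_1+bd_2, \eps) \le \Net(T, d_1, \eps/(4a))\cdot\Net(T, d_2, \eps/(4b))$ is exactly right, and your ``representative of a non-empty cell'' construction correctly handles the constraint that net centers must lie in $T$: one cannot use $n_1$ alone since it only controls $d_1$, nor $n_2$ alone, and there is no product space in which to pair them, so choosing an arbitrary point $m(n_1,n_2)$ in the intersection of the two balls and applying the triangle inequality in each metric (doubling the radii) is the standard and correct fix. The downstream steps — $\log$, nonnegativity of the log-covering numbers, subadditivity of $\sqrt{\cdot}$, and the linear change of variables $u=\eps/(4a)$, $v=\eps/(4b)$ — are all valid, and the resulting factor of $4$ is absorbed by $\lesssim$ as you note. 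No gaps.
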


\begin{lemma}
[Tensorization]
\label{lem:product-gamma2}
Let $(T_1, d_1)$, $(T_2, d_2)$ be pseudo-metric spaces. Then $T_1\times T_2$ can be endowed with a natural pseudo-metric, defined by
\[
d\big((t_1, t_2), (t_1', t_2')\big) 
= d_1(t_1, t_1') + d_2(t_2, t_2').
\]

We then have
\[
\int_0^\infty\sqrt{\log\Net(T_1\times T_2, d, \eps)}~\mathrm{d}\eps \lesssim \int_0^\infty\sqrt{\log\Net(T_1, d_1, \eps)}~\mathrm{d}\eps + \int_0^\infty\sqrt{\log\Net(T_2, d_2, \eps)}~\mathrm{d}\eps.
\]
\end{lemma}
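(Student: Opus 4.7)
The plan is to combine near-optimal nets for each factor into a product net for $T_1 \times T_2$, and then use the subadditivity $\sqrt{a+b} \le \sqrt{a}+\sqrt{b}$ to split the Dudley integral into the two claimed pieces. First, I would fix $\eps > 0$ and take nets $\Net_1 \subset T_1$, $\Net_2 \subset T_2$ achieving $|\Net_i| = \Net(T_i, d_i, \eps/2)$. For any $(t_1, t_2) \in T_1 \times T_2$, pick $(s_1, s_2) \in \Net_1 \times \Net_2$ with $d_i(t_i, s_i) \le \eps/2$; then by definition of $d$,
\[
d\big((t_1, t_2),(s_1, s_2)\big) = d_1(t_1, s_1) + d_2(t_2, s_2) \le \eps.
\]
Thus $\Net_1 \times \Net_2$ is an $\eps$-net of $T_1\times T_2$ in $d$, which yields the multiplicative bound
\[
\Net(T_1 \times T_2, d, \eps) \le \Net(T_1, d_1, \eps/2) \cdot \Net(T_2, d_2, \eps/2).
\]

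Taking logarithms converts this into the additive bound $\log \Net(T_1 \times T_2, d, \eps) \le \log \Net(T_1, d_1, \eps/2) + \log \Net(T_2, d_2, \eps/2)$. Applying $\sqrt{a+b} \le \sqrt{a}+\sqrt{b}$ pointwise in $\eps$ then gives
\[
\sqrt{\log \Net(T_1 \times T_2, d, \eps)} \le \sqrt{\log \Net(T_1, d_1, \eps/2)} + \sqrt{\log \Net(T_2, d_2, \eps/2)}.
\]
Finally, I would integrate over $\eps \in (0, \infty)$ and perform the substitution $u = \eps/2$ on each of the two terms on the right; this produces a harmless universal factor of $2$, which is absorbed into the $\lesssim$, giving the claimed inequality.

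The argument is elementary and I do not anticipate any genuine obstacle. The only points requiring attention are the straightforward verification that a product of $\eps/2$-nets is an $\eps$-net in the sum metric $d$ (a direct consequence of how $d$ is defined), and the harmless constant $2$ from the rescaling, which the $\lesssim$ notation accommodates. In the degenerate case where one of the right-hand integrals is infinite, the inequality holds vacuously, so no special handling of unbounded covering numbers is needed.
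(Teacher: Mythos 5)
Your proof is correct. The paper states Lemma~\ref{lem:product-gamma2} without proof (it is a standard fact in the chaining literature), and your argument is precisely the expected one: a product of two $\eps/2$-nets (in the paper's sense, the net points lie in $T_1$ and $T_2$, so the product lies in $T_1\times T_2$ as the covering-number definition requires) gives an $\eps$-net for the sum metric, hence $\Net(T_1\times T_2, d,\eps)\le \Net(T_1,d_1,\eps/2)\,\Net(T_2,d_2,\eps/2)$; taking logarithms, applying $\sqrt{a+b}\le\sqrt a+\sqrt b$, integrating, and substituting $u=\eps/2$ gives the claim with an explicit constant $2$ absorbed into $\lesssim$. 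You also correctly note that the inequality is vacuous if a right-hand integral diverges. There is nothing to compare against in the paper and no gap to flag.
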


\section{Proof of subspace embedding with fixed nonlinearity, Lemma \ref{lem:fixed_concentration}}
\label{subsec:apd:known}

The proof relies crucially on the idea developed in \cite{rudelson1996random}. We extend a simplified version of their idea into the Lipschitz nonlinear setting. First, we apply a standard symmetrization argument to the $\ell$th moment of the desired quantity, which is actually the supremum of deviation of $\|\bS f(\bX\bw_1) - \bS f(\bX\bw_2)\|^2$ over $\bw_1,\bw_2\in B(R)$. This reduces the problem to finding tail bounds of a Bernoulli process. Then, we utilize the sub-Gaussian property of that Bernoulli process and give an upper bound of its associated sub-Gaussian distances, which reduces the sub-Gaussian metric to the dual norm of some polytope determined by the sampling process. This allows us to invoke dual Sudakov minoration to bound the Dudley integral of the sub-Gaussian distance, ultimately leading to a tail bound of the Bernoulli process as we desire.

\subsection{Symmetrization}
To simplify notations, for the rest of the proof we denote \[v_{j_i}(\bw_1, \bw_2)\coloneqq \fxw{j_i}{\bw_1} - \fxw{j_i}{\bw_2}.\] 
\begin{lemma}[Symmetrization]\label{lem:symmetrization}
Let $\xi_1,\xi_2,\ldots, \xi_m$ be i.i.d. Rademacher random variables (independent of $\bS$), i.e., each $\xi_i$ takes values $-1$ and $1$ with probabilities $1/2$ each independently. Then
\begin{align*}
& \phantom{=} \bbE \sup_{\bw_1,\bw_2 \in B(R)} \left \lvert \norm{\bS f(\bX\bw_1) - \bS f(\bX \bw_2)}^2 - \norm{f(\bX\bw_1) - f(\bX \bw_2)}^2\right \rvert^\ell \\
& \le 2^\ell \cdot \bbE\sup_{\bw_1,\bw_2\in B(R)} \left| \frac{1}{m} \sum_{i=1}^m \xi_i \frac{(v_{j_i}(\bw_1,\bw_2))^2}{p_{j_i}} \right|^\ell.
\end{align*}
\end{lemma}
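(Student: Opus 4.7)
The plan is to follow the classical two-step symmetrization argument: first decouple the centering by introducing an independent copy of the sampling process, and then introduce Rademacher signs using the symmetry of the resulting differences.

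To streamline notation, I would first define, for each $i\in[m]$, the random function
\[
Y_i(\bw_1,\bw_2) \coloneqq \frac{1}{m\,p_{j_i}}\,v_{j_i}(\bw_1,\bw_2)^2
\]
on $B(R)\times B(R)$. A direct expansion gives $\norm{\bS f(\bX\bw_1)-\bS f(\bX\bw_2)}^2 = \sum_{i=1}^m Y_i(\bw_1,\bw_2)$, and the fact that $\bbE[\bS^\top\bS]=\bI$ yields $\bbE Y_i(\bw_1,\bw_2) = \tfrac1m \norm{f(\bX\bw_1)-f(\bX\bw_2)}^2$. Thus the quantity on the left of the claim is exactly
\[
\bbE \sup_{\bw_1,\bw_2\in B(R)} \Big| \sum_{i=1}^m \bigl(Y_i(\bw_1,\bw_2) - \bbE Y_i(\bw_1,\bw_2)\bigr)\Big|^\ell.
\]

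Next I would introduce an independent copy $j_1',\ldots,j_m'$ of the sampling indices, with associated $Y_i'$. Using $\bbE Y_i = \bbE_{j'} Y_i'$ and Jensen's inequality applied to the map $(Y_i)_i \mapsto \sup_{\bw_1,\bw_2}\big|\sum_i Y_i(\bw_1,\bw_2)\big|^\ell$, which is convex because $|\cdot|^\ell$ is convex for $\ell\ge1$ and suprema of convex functions remain convex, I pull the inner expectation outside, obtaining
\[
\bbE\sup_{\bw_1,\bw_2}\Bigl|\sum_{i=1}^m (Y_i - \bbE Y_i)\Bigr|^\ell \le \bbE_{j,j'}\sup_{\bw_1,\bw_2}\Bigl|\sum_{i=1}^m (Y_i - Y_i')\Bigr|^\ell.
\]
Since $(j_i,j_i')$ has the same law as $(j_i',j_i)$ independently across $i$, each summand $Y_i-Y_i'$ is symmetric, so its joint distribution is invariant under multiplication by independent Rademacher signs $\xi_i$. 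Inserting the $\xi_i$'s does not change the expectation, and then the triangle inequality together with $(a+b)^\ell\le 2^{\ell-1}(a^\ell+b^\ell)$ gives
\[
\bbE_{\xi,j,j'}\sup_{\bw_1,\bw_2}\Bigl|\sum_{i=1}^m \xi_i(Y_i-Y_i')\Bigr|^\ell \le 2^{\ell-1}\Bigl(\bbE\sup\bigl|\textstyle\sum_i \xi_i Y_i\bigr|^\ell + \bbE\sup\bigl|\sum_i \xi_i Y_i'\bigr|^\ell\Bigr) = 2^\ell\,\bbE\sup\Bigl|\sum_i \xi_i Y_i\Bigr|^\ell,
\]
since the two terms are equal in distribution. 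Substituting back the definition of $Y_i$ yields exactly the right-hand side of the lemma.

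I do not anticipate a real obstacle here: this is a textbook symmetrization for $\ell$-th moments (compare Lemma~6.3 of \cite{ledoux1991probability}), and the only minor point requiring care is verifying that the map we apply Jensen to is genuinely convex as a function of the tuple $(Y_i)_i$, which is clear since it is a supremum of compositions of a convex power of an absolute value with a linear functional. The argument makes no use of the nonlinearity $f$ beyond measurability, so the same symmetrization step will carry over verbatim to the unknown-$f$ setting later in the proof of Lemma~\ref{lem:unknown_concentration}.
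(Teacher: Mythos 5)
Your proof is correct and follows essentially the same route as the paper's: introduce an independent copy of the sampling process, apply Jensen's inequality to pull the expectation over the copy outside (the paper packages this as a pointwise Jensen for fixed $(\bw_1,\bw_2)$ followed by a $\sup$--expectation exchange, whereas you phrase it as a single Jensen step for a convex functional of the tuple, but the substance is identical), exploit the symmetry of the differences to insert Rademacher signs, and finish with $|a+b|^\ell \le 2^{\ell-1}(|a|^\ell+|b|^\ell)$ and equidistribution. The only cosmetic difference is that you absorb the $\tfrac1m$ into the definition of $Y_i$, which avoids a $\tfrac1m$ typo present in the paper's displayed chain.
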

This result is standard. For sake of completeness, we provide a proof in Appendix~\ref{subsec:apd:symmetrization}.

\subsection{Bounding the Bernoulli process: the sub-Gaussian distance}
Lemma~\ref{lem:symmetrization} leads us to study the following symmetrized random process. 
\[Z(\bw_1, \bw_2)\coloneqq \sum_{i=1}^m \xi_i \frac{\big(v_{j_i}(\bw_1, \bw_2) \big)^2}{p_{j_i}},
\qquad (\bw_1,\bw_2)\in B(R)\times B(R),
\]


It is clear that the above random process conditioned on the samples $\{j_1,j_2,\ldots, j_m\}$ (fixing $\bS$) is a sub-Gaussian process (\cite{vershynin_2018}) with respect to index $(\bw_1,\bw_2)$ endowed with the metric $\rho_f$ (called the sub-Gaussian distance) on $B(R) \times B(R)$ defined as follows. 
\begin{equation}
\label{eqn:def_rho}
\rho_f\big((\bw_1,\bw_2), (\bw_1',\bw_2')\big)
\coloneqq \left( \sum_{i=1}^m \frac{1}{p_{j_i}^2}\big (v_{j_i}(\bw_1, \bw_2)^2 - v_{j_i}(\bw_1', \bw_2')^2 \big)^2 \right)^{1/2}.
\end{equation}

Based on sub-Gaussianity of $Z(\bw_1, \bw_2)$, we shall derive a near-optimal tail bound of it using Dudley's inequality (Lemma~\ref{lem:dudley}). This would entail establishing an easier-to-manipulate upper bound of the metric $\rho_f$. Before doing so (in Lemma~\ref{lem:fixed-metric-bound}) we need to set up a few notations.

Recall that we are conditioning on $\{j_1,j_2,\cdots,j_m\}$ in this part of the proof. We define a symmetric convex body
\[\P = \operatorname{conv}\left(\pm \frac{\bx_{j_1}}{\sqrt{p_{j_1}}},\pm \frac{\bx_{j_2}}{\sqrt{p_{j_2}}},\ldots \pm \frac{\bx_{j_m}}{\sqrt{p_{j_m}}}\right).\] 
We further denote $\P^\circ$ as its polar given by 
\[\P^\circ = \{\bz\in\bbR^d: \abs{\langle \bz,{\bm p}\rangle} \le 1, \forall {\bm p} \in \P\},\]
and denote $\|\cdot\|_{\mathcal P^\circ}$ as the Minkowski norm associated with $\P^\circ$, defined for $\bw \in \R^d$ as 
\[\norm{\bw}_{\P^\circ} \coloneqq \inf\{t>0: \bw\in t\P^\circ\}=\sup_{i\in[m]}\left|\left\langle \bx_{j_i}/\sqrt{p_{j_i}}, \bw\right\rangle\right|.\]

Using Lipschitz continuity of $f$, we can establish a useful upper bound of $\rho_f$ in terms of $\|\cdot\|_{\mathcal P^\circ}$, reducing the study of the sub-Gaussian distance to the dual norm associated to a certain polytope.
\begin{lemma}[Bounding the sub-Gaussian distance $\rho_f$]\label{lem:fixed-metric-bound}
With the above notations, one has for $f\in\Lip_L$ that
\[
\rho_f\le \rho,
\]
where $\rho$ is the metric on $B(R)\times B(R)$ defined by
\begin{align}
\rho\big((\bw_1,\bw_2), (\bw_1',\bw_2')\big)
= 4L^2 R\left\|\sum_{i=1}^m \frac{1}{p_{j_i}} \bx_{j_i}\bx_{j_i}^\top\right\|^{1/2}\left(\|\bw_1 - \bw_1'\|_{\mathcal P^\circ} + \|\bw_2 - \bw_2'\|_{\mathcal P^\circ}\right).
\label{eqn:rho}
\end{align}
\end{lemma}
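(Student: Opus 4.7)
The plan is to exploit the algebraic identity $u^2 - v^2 = (u-v)(u+v)$ to split each summand in the definition of $\rho_f^2$ into a ``difference'' factor that carries the dependence on $(\bw_1-\bw_1', \bw_2-\bw_2')$ and a ``magnitude'' factor that only measures the overall size of the process on $B(R) \times B(R)$. Concretely, I will set
\[
a_i \coloneqq \frac{v_{j_i}(\bw_1,\bw_2) - v_{j_i}(\bw_1',\bw_2')}{\sqrt{p_{j_i}}}, \qquad b_i \coloneqq \frac{v_{j_i}(\bw_1,\bw_2) + v_{j_i}(\bw_1',\bw_2')}{\sqrt{p_{j_i}}},
\]
so that $\rho_f^2 = \sum_{i=1}^m a_i^2 b_i^2 \le (\max_i a_i^2)\big(\sum_{i=1}^m b_i^2\big)$, and the task cleanly decouples into estimating these two quantities separately.

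For the ``difference'' factor I would regroup
\[
v_{j_i}(\bw_1,\bw_2) - v_{j_i}(\bw_1',\bw_2') = \big(f(\langle\bx_{j_i},\bw_1\rangle) - f(\langle\bx_{j_i},\bw_1'\rangle)\big) - \big(f(\langle\bx_{j_i},\bw_2\rangle) - f(\langle\bx_{j_i},\bw_2'\rangle)\big)
\]
and apply $L$-Lipschitz continuity of $f$ together with the triangle inequality, giving $|a_i| \le L\big(|\langle\bx_{j_i}/\sqrt{p_{j_i}},\bw_1-\bw_1'\rangle| + |\langle\bx_{j_i}/\sqrt{p_{j_i}},\bw_2-\bw_2'\rangle|\big)$. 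Because $\pm\bx_{j_i}/\sqrt{p_{j_i}}$ are vertices of $\mathcal P$, the dual description $\|\bw\|_{\mathcal P^\circ} = \sup_i |\langle\bx_{j_i}/\sqrt{p_{j_i}},\bw\rangle|$ immediately upgrades this to $\max_i |a_i| \le L(\|\bw_1-\bw_1'\|_{\mathcal P^\circ} + \|\bw_2-\bw_2'\|_{\mathcal P^\circ})$, which is exactly the polar-norm factor appearing in $\rho$.

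For the ``magnitude'' factor I would use $b_i^2 \le \frac{2}{p_{j_i}}\big(v_{j_i}(\bw_1,\bw_2)^2 + v_{j_i}(\bw_1',\bw_2')^2\big)$, then apply Lipschitzness pointwise to obtain $v_{j_i}(\bw_1,\bw_2)^2 \le L^2\langle\bx_{j_i},\bw_1-\bw_2\rangle^2$ (and similarly for the primed pair). Summing over $i$ produces a quadratic form against the matrix $M \coloneqq \sum_{i=1}^m p_{j_i}^{-1}\bx_{j_i}\bx_{j_i}^\top$, controlled by $\|M\|\cdot\|\bw_1-\bw_2\|^2$ and $\|M\|\cdot\|\bw_1'-\bw_2'\|^2$; the $B(R)$ constraint then gives $\|\bw_1-\bw_2\|, \|\bw_1'-\bw_2'\| \le 2R$, so $\sum_i b_i^2 \le 16 L^2 R^2 \|M\|$. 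Multiplying $\max_i|a_i|$ by $\sqrt{\sum_i b_i^2}$ reproduces the bound $\rho$ with the stated constant $4$. The argument is short and no step is a real obstacle; the one conceptual ingredient is the initial $a^2-b^2$ factorisation, which ensures that only the small difference pays the polar norm while the bulk term $|v_{j_i}|$ is absorbed into the harmless global factor $LR\|M\|^{1/2}$.
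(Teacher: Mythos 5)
Your proof is correct and follows essentially the same route as the paper's: both hinge on the factorisation $u^2 - v^2 = (u-v)(u+v)$, bound the ``difference'' factor by Lipschitz continuity to produce the polar-norm term, and bound the ``sum'' factor by Lipschitz continuity plus the $B(R)$ constraint to produce the $R\,\|M\|^{1/2}$ factor, arriving at the same constant $4$. The only small departure is in the magnitude factor: the paper expands $v_{j_i}(\bw_1,\bw_2) + v_{j_i}(\bw_1',\bw_2')$ into four absolute values of $f$ and invokes $|f(x)|\le L|x|$ (using $f(0)=0$) plus Cauchy--Schwarz, whereas you bound $|v_{j_i}(\bw_1,\bw_2)|\le L|\langle\bx_{j_i},\bw_1-\bw_2\rangle|$ directly from Lipschitzness and $\|\bw_1-\bw_2\|\le 2R$ --- a marginally cleaner organisation (via $\sum_i a_i^2 b_i^2 \le (\max_i a_i^2)\sum_i b_i^2$) that incidentally does not use the assumption $f(0)=0$.
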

\begin{proof}
The definition of $\rho_f$ and $v_{j_i}$ leads to the following.
\begin{align*}
| v_{j_i}^2(\bw_1, \bw_2) - v_{j_i}^2(\bw_1', \bw_2') |
&= \left| \left( f(\langle \bx_{j_i}, \bw_1\rangle) - f(\langle \bx_{j_i}, \bw_2\rangle)\right) - \left( f(\langle \bx_{j_i}, \bw_1'\rangle) - f(\langle \bx_{j_i}, \bw_2'\rangle) \right) \right| \\
&\phantom{=} \cdot \left| \left(f(\langle \bx_{j_i}, \bw_1\rangle) - f(\langle \bx_{j_i}, \bw_2\rangle)\right) + \left( f(\langle \bx_{j_i}, \bw_1'\rangle) - f(\langle \bx_{j_i}, \bw_2'\rangle)\right)  \right| 
\\
&\le L^2\left(\left|\langle \bx_{j_i}, \bw_1-\bw_1'\rangle\right| + \left|\langle \bx_{j_i}, \bw_2-\bw_2'\rangle\right|\right)\\
&\phantom{=} \cdot \left(|\langle \bx_{j_i}, \bw_1\rangle| + |\langle \bx_{j_i}, \bw_2\rangle| + |\langle \bx_{j_i}, \bw_1'\rangle| + |\langle \bx_{j_i}, \bw_2'\rangle|\right),
\end{align*}
where the inequality follows from $f$ being $L$-Lipschitz. 
Next we observe that
\begin{align*}
\left|\langle \bx_{j_i}, \bw_1-\bw_1'\rangle\right| + \left|\langle \bx_{j_i}, \bw_2-\bw_2'\rangle\right| & = \sqrt{p_{j_i}} \big( \left|\langle \bx_{j_i}/\sqrt{p_{j_i}}, \bw_1-\bw_1'\rangle\right| + \left|\langle \bx_{j_i}/\sqrt{p_{j_i}}, \bw_2-\bw_2'\rangle\right| \big)
\\
& \le \sqrt{p_{j_i}}(\|\bw_1-\bw_1'\|_{\mathcal P^\circ} + \|\bw_2-\bw_2'\|_{\mathcal P^\circ}),
\end{align*}
where the inequality follows from the definition of $\|\cdot\|_{\mathcal P^\circ}$. Combining the above two inequalities, one has
\begin{align}
&\rho_f^2\big( (\bw_1,\bw_2), (\bw_1', \bw_2') \big) \nonumber\\
& \le \sum_{i=1}^m \frac{1}{p_{j_i}^2}\cdot L^4 p_{j_i}(\|\bw_1-\bw_1'\|_{\mathcal P^\circ} + \|\bw_2-\bw_2'\|_{\mathcal P^\circ})^2 \cdot \left(|\langle \bx_{j_i}, \bw_1\rangle| + |\langle \bx_{j_i}, \bw_2\rangle| + |\langle \bx_{j_i}, \bw_1'\rangle| + |\langle \bx_{j_i}, \bw_2'\rangle|\right)^2 \nonumber\\
& \le L^4 (\|\bw_1-\bw_1'\|_{\mathcal P^\circ} + \|\bw_2-\bw_2'\|_{\mathcal P^\circ})^2 \sum_{i=1}^m \frac{1}{p_{j_i}} \left(|\langle \bx_{j_i}, \bw_1\rangle| + |\langle \bx_{j_i}, \bw_2\rangle| + |\langle \bx_{j_i}, \bw_1'\rangle| + |\langle \bx_{j_i}, \bw_2'\rangle|\right)^2 \nonumber\\
& \le L^4 (\|\bw_1-\bw_1'\|_{\mathcal P^\circ} + \|\bw_2-\bw_2'\|_{\mathcal P^\circ})^2 \cdot 4 \sum_{i=1}^m \frac{1}{p_{j_i}} \left( \langle \bx_{j_i}, \bw_1 \rangle^2 + \langle \bx_{j_i}, \bw_2 \rangle^2 + \langle \bx_{j_i}, \bw_1' \rangle^2 + \langle \bx_{j_i}, \bw_2' \rangle^2 \right)
\nonumber\\
&\le  16L^4(\|\bw_1-\bw_1'\|_{\mathcal P^\circ} + \|\bw_2-\bw_2'\|_{\mathcal P^\circ})^2 \sup_{\bw \in B(R)} \sum_{i=1}^m \frac{1}{p_{j_i}} 
 \langle \bx_{j_i}, \bw \rangle^2, 
 \label{eqn:fixed-metric-bound-tmp}
\end{align}
where the third inequality follows from Cauchy-Schwarz. To simplify the last factor, we note that
\begin{align*}
\sup_{\bw \in B(R)} \sum_{i=1}^m \frac{1}{p_{j_i}} 
 \langle \bx_{j_i}, \bw \rangle^2 
& = \sup_{\bw \in B(R) }\bw^\top \left (\sum_{i=1}^m \tfrac{1}{p_{j_i}}\bx_{j_i} \bx_{j_i}^\top \right)\bw \\
& = R^2 \left\|\sum_{i=1}^m  \frac{1}{p_{j_i}} \bx_{j_i}\bx_{j_i}^\top\right\|.
\end{align*}
Plugging this back into \eqref{eqn:fixed-metric-bound-tmp} and taking square roots yield the claimed result.
\end{proof}

The rest of the proof of Lemma~\ref{lem:fixed_concentration} goes as follows. In Section~\ref{subsec:apd:fixed_expectation}, we show how to derive a bound in expectation of the supremum of $Z(\bw_1,\bw_2)$ using Dudley's inequality and duality of metric entropy. Once this is done, we show how to turn the bound in expectation to a tail bound using a standard argument by concentration of measure in Section~\ref{subsec:apd:fixed_tail}.
\subsection{Bound in expectation via duality}
\label{subsec:apd:fixed_expectation}
We apply Dudley's inequality to the process $Z(\bw_1,\bw_2)$ over the set $B(R) \times B(R)$. Since $Z(\bw, \bw)=0$ for any $\bw$, it follows that $\inf_{\bw_1\bw_2 \in B(R)} |Z(\bw_1,\bw_2)| = 0$. Taking expectation with respect to the Rademacher random variables $\xi_1,\xi_2,\ldots, \xi_m$, we have

\begin{align}
\phantom{\lesssim}\bbE_\xi\sup_{\bw_1,\bw_2\in B(R)} |Z(\bw_1, \bw_2)|
&\lesssim \int_0^\infty\sqrt{\log\Net(B(R)\times B(R), \rho_f, \eps)}~\mathrm{d}\eps
\nonumber\\
&\lesssim L^2 R \left\|\sum_{i=1}^m \frac{1}{p_{j_i}} \bx_{j_i}\bx_{j_i}^\top\right\|^{1/2}\int_0^\infty \sqrt{\log\Net(B(R), \|\cdot\|_{\mathcal P^\circ}, \eps)}~\mathrm{d}\eps,
\label{eqn:fixed_by_dudley}
\end{align}

The second inequality can be verified using Lemma~\ref{lem:fixed-metric-bound} along with standard properties of covering numbers (Lemma~\ref{lem:sublinear-gamma2} and Lemma~\ref{lem:product-gamma2}). 

We move on to estimate the Dudley integral $\int_0^\infty \sqrt{\log\Net(B(R), \|\cdot\|_{\mathcal P^\circ}, \eps)}~\mathrm{d}\eps$.
Since this integral only involves the entropy in dual norm $\|\cdot\|_{\mathcal P^\circ}$, we control it using duality of metric entropy. In particular, one can invoke dual Sudakov minoration (Lemma~\ref{lem:sudakov}) to prove
\begin{equation}
\label{eqn:sudakov}
\int_0^\infty \sqrt{\log\Net(B(R), \|\cdot\|_{\mathcal P^\circ}, \eps)}~\mathrm{d}\eps
\lesssim  \sup_{i\in[m]}\frac{\|\bx_{j_i}\|}{\sqrt{p_{j_i}}} R\sqrt{\log^2\!d\cdot\log m} = R\sqrt{d\log^2\!d\cdot\log m},
\end{equation}
where the last equality uses $\sqrt{p_{j_i}}=\|\bx_{j_i}\|/\sqrt{d}$ which follows from the definition $p_{j_i}=\frac{\tau_{j_i}(\bX)}{\sum_{j=1}^n\tau_j(\bX)}$ and~\ref{eqn:orth_score}. Plugging this into \eqref{eqn:fixed_by_dudley}, we obtain
\begin{align}
\bbE_\xi\sup_{\bw_1,\bw_2\in B(R)} |Z(\bw_1, \bw_2)|
&\lesssim L^2 R^2\left\|\sum_{i=1}^m \frac{1}{p_{j_i}} \bx_{j_i}\bx_{j_i}^\top\right\|^{1/2} \sqrt{d\log^2\!d\cdot\log m}.
\label{eqn:fixed_cond_expectation}
\end{align}

Taking expectation w.r.t. $\bS$ and noticing that, by matrix Chernoff bound \citep{rudelson2007sampling}, 
\begin{equation}
\bbE\left\|\sum_{i=1}^m \frac{1}{p_{j_i}} \bx_{j_i}\bx_{j_i}^\top\right\|\lesssim m + d\log d \lesssim m,
\label{eqn:matrix-chernoff}
\end{equation}
(where the last inequality follows from the assumption $m\gtrsim d\log^3\!d$) we obtain
\[
\E\sup_{\bw_1,\bw_2\in B(R)}|Z(\bw_1, \bw_2)| \lesssim L^2 R^2\sqrt{md\log^2\!d\cdot\log m}.
\]
Under the assumption $m\gtrsim L^4\eps^{-2}d\log^3\!d$, we deduce that 
\begin{equation}
\label{eqn:fixed_expectation_bound}
\frac1m\bbE\sup_{\bw_1,\bw_2\in B(R)} |Z(\bw_1, \bw_2)|
\lesssim \eps R^2.
\end{equation}
This establishes the bound in expectation as desired. Once we can show that the above also holds with high probability rather than in expectation, the proof of Lemma~\ref{lem:fixed_concentration} will be completed. This will be done in the next part.

\subsection{Completing the proof: tail bound via concentration of measure}
\label{subsec:apd:fixed_tail}
This part is more or less standard applications of chaining and concentration of measure arguments \citep{ledoux2001concentration, talagrand2022upper}.
Since $Z(\bw_1,\bw_2)$ conditioned on $\{j_1,\ldots, j_m\}$ is sub-Gaussian with respect to metric $\rho_f$, we have \citep{talagrand2022upper}
\begin{equation}
\label{eqn:fixed_concentration}
\left(\bbE_\xi \sup_{\bw_1,\bw_2\in B(R)} |Z(\bw_1,\bw_2)|^\ell\right)^{1/\ell} \lesssim \bbE_\xi \sup_{\bw_1,\bw_2\in B(R)} |Z(\bw_1,\bw_2)| + \sqrt{\ell}\cdot \Diam(B(R)\times B(R), \rho_f).
\end{equation}
To bound the diameter $\Diam(B(R)\times B(R), \rho_f)$, we invoke Lemma~\ref{lem:fixed-metric-bound} and the simple observation that for $\bw_1,\bw_2,\bw_1',\bw_2'\in B(R)$, 
\[
\|\bw_1-\bw_1'\|_{\mathcal P^\circ} + \|\bw_2-\bw_2'\|_{\mathcal P^\circ}\le 4R\sup_{i\in[m]}\frac{\|\bx_{j_i}\|}{\sqrt{p_{j_i}}} = 4R\sqrt{d},
\]
where the first inequality follows from the definition of $\|\cdot\|_{\mathcal P^\circ}$ and Cauchy-Schwarz, and the second inequality follows from $\sqrt{p_{j_i}}=\|\bx_{j_i}\|/\sqrt{d}$ aforementioned. Plug this into Lemma~\ref{lem:fixed-metric-bound} to obtain
\[
\Diam(B(R)\times B(R), \rho_f) \lesssim L^2 R^2 \sqrt{d}\left\|\sum_{i=1}^m  \frac{1}{p_{j_i}} \bx_{j_i}\bx_{j_i}^\top\right\|^{1/2}.
\]
Plug this and \eqref{eqn:fixed_cond_expectation} into \eqref{eqn:fixed_concentration} to obtain
\[
\left(\bbE_\xi \sup_{\bw_1,\bw_2\in B(R)} |Z(\bw_1,\bw_2)|^\ell\right)^{1/\ell} \lesssim L^2 R^2 \left(\sqrt{d\log^2\!d \cdot \log m} + \sqrt{\ell d}\right) \left\|\sum_{i=1}^m  \frac{1}{p_{j_i}} \bx_{j_i}\bx_{j_i}^\top\right\|^{1/2}.
\]
Applying matrix Chernoff inequality again (but in its tail bound form this time), one has
\[
\left(\E\left\|\sum_{i=1}^m  \frac{1}{p_{j_i}} \bx_{j_i}\bx_{j_i}^\top\right\|^{\ell/2} \right)^{2/\ell} \lesssim m + \sqrt{\ell md\log d}.
\]
Therefore
\begin{align*}
\left( \bbE\sup_{\bw_1,\bw_2\in B(R)} \left| Z(\bw_1, \bw_2) \right|^\ell\right)^{1/\ell}
& = \left( \bbE_{\bS}\bbE_{\xi}\sup_{\bw_1,\bw_2\in B(R)} \left| Z(\bw_1, \bw_2) \right|^\ell\right)^{1/\ell}
\\
& \lesssim \left( \bbE_{\bS} \left( L^2 R^2 \left(\sqrt{d\log^2\!d \cdot \log m} + \sqrt{\ell d}\right) \left\|\sum_{i=1}^m  \frac{1}{p_{j_i}} \bx_{j_i}\bx_{j_i}^\top\right\|^{1/2} \right)^\ell \right)^{1/\ell}
\\ 
& \lesssim L^2 R^2 \left(\sqrt{d\log^2\!d \cdot \log m} + \sqrt{\ell d}\right) \cdot \left(\E\left\|\sum_{i=1}^m  \frac{1}{p_{j_i}} \bx_{j_i}\bx_{j_i}^\top\right\|^{\ell/2} \right)^{1/\ell}
\\
& \lesssim L^2 R^2 \left(\sqrt{d\log^2\!d \cdot \log m} + \sqrt{\ell d}\right) \left(m + \sqrt{\ell md\log d}\right)^{1/2}.
\end{align*}
For the last expression, using $(a+b)^{1/2}\lesssim a^{1/2} + b^{1/2}$ for $a,b>0$ and expanding the product, we obtain
\begin{align*}
& \left( \bbE\sup_{\bw_1,\bw_2\in B(R)} \left| Z(\bw_1, \bw_2) \right|^\ell\right)^{1/\ell} \\
& \quad \lesssim L^2 R^2 \Big( \sqrt{md\log^2\!d\cdot\log m} 
+ \ell^{\frac14} m^{\frac14}d^{\frac34}\log^{\frac54}\!d\cdot\log^{\frac12}\!m + \ell^{\frac12}\sqrt{md} 
+ \ell^{\frac34}m^{\frac14}d^{\frac34}\log^{\frac14}\!d
\Big).
\end{align*}

This gives the desired tail bound by a standard computation based on Markov's inequality, which is summarized in the following lemma.
\begin{lemma}
\label{lem:moment_to_tail}
Assume a random variable $X$ satisfies
\[
\left(\bbE|X|^\ell\right)^{1/\ell} \le A_0 + \sum_{k=1}^K A_k \ell^{\alpha_k}, \quad \forall\ell\ge 1.
\]
where $A_k\ge 0$, $\alpha_k\in(0, 1]$ are constants. There is some constant $C>0$ depending only on $K$, such that
for any $\delta\in(0,1/2)$ we have
\[
\bbP\left(|X| > CA_0 + C\sum_{k=1}^K A_k\log^{\alpha_k}\Big(\frac1{\delta}\Big) \right) \le 1-\delta.
\]
\end{lemma}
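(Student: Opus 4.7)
The plan is to convert the moment bound into a tail bound by applying Markov's inequality to $|X|^\ell$ and then optimizing the choice of $\ell$. This is the standard moment-generating-style argument: a bound of the form $(\bbE|X|^\ell)^{1/\ell}\le M(\ell)$ translates, via $\bbP(|X|>t)\le \bbE|X|^\ell/t^\ell$, into a tail bound whose sharpness is governed by how gently $M(\ell)$ grows with $\ell$.

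Concretely, for any $\ell\ge 1$ and $t>0$, Markov's inequality applied to $|X|^\ell$ gives
\[
\bbP(|X|>t) \;\le\; \frac{\bbE|X|^\ell}{t^\ell} \;\le\; \left(\frac{A_0+\sum_{k=1}^K A_k\ell^{\alpha_k}}{t}\right)^{\!\ell}.
\]
I will then set $\ell = \log(1/\delta)$ (for $\delta$ small enough that $\ell\ge 1$) and choose
\[
t \;=\; e\cdot\Bigl(A_0 + \sum_{k=1}^K A_k\log^{\alpha_k}(1/\delta)\Bigr).
\]
With these choices, the ratio in parentheses equals exactly $1/e$, so the right-hand side collapses to $e^{-\ell}=\delta$. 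This already yields the claimed tail bound with constant $C=e$ on the range $\delta\in(0,1/e]$.

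The only remaining subtlety is the edge case $\delta\in[1/e,1/2]$, where $\log(1/\delta)<1$ and the substitution $\ell=\log(1/\delta)$ is not admissible. Here I would simply apply the case $\ell=1$ of the hypothesis, $\bbE|X|\le A_0+\sum_k A_k$, and use Markov directly with a slightly enlarged constant $C$ (depending only on $K$) to absorb the constants $1^{\alpha_k}=1$ versus $\log^{\alpha_k}(1/\delta)$, which is bounded below on this range. Taking the worst $C$ from the two regimes gives a single constant $C$ depending only on $K$. I expect no real obstacle here; the argument is almost entirely routine, and the only mild care is in handling the boundary regime and in ensuring the constant $C$ indeed has no dependence on the individual $\alpha_k\in(0,1]$ beyond the count $K$ (which follows since each $\alpha_k$ enters only through $\log^{\alpha_k}(1/\delta)\le \log(1/\delta)+1$).
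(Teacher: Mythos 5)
Your proof is correct and follows essentially the same route as the paper's: apply Markov's inequality to $|X|^\ell$, substitute the moment bound, and set $\ell\asymp\log(1/\delta)$. The only difference is cosmetic: the paper first expands $(A_0+\sum_k A_k\ell^{\alpha_k})^\ell \le C_K^\ell(A_0^\ell+\sum_k A_k^\ell\ell^{\alpha_k\ell})$ before choosing $t$ and then sums $K+1$ copies of $4^{-\ell}$, whereas you choose $t$ to cancel the entire numerator at once, which is a touch cleaner and avoids the $K$-dependent splitting constant; the paper also sidesteps your boundary case $\delta\in[1/e,1/2)$ by taking $\ell=\lceil\log(1/\delta)\rceil\ge 1$, at the cost of absorbing the ceiling into the final constant. (Note the lemma's conclusion $\le 1-\delta$ is weaker than the $\le\delta$ both proofs actually deliver; this appears to be a typo in the statement.)
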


\subsection{Proof of Lemma \ref{lem:chaining}}
\label{subsec:apd:symmetrization}
Throughout this proof, we denote 
\[\bv(\bw_1,\bw_2) \coloneqq f(\bX\bw_1) - f(\bX \bw_2).\]
Let $\bS'$ be an independent copy of $\bS$. Recalling the definition of $\bS$, we denote by $j_i$ the indices chosen by $\bS$ and $j'_i$ the indices chosen by $\bS'$. We want to estimate the following
\[E \coloneqq \bbE \sup_{\bw_1,\bw_2 \in B(R)} \left \lvert \norm{\bS \bv(\bw_1,\bw_2)}^2 - \norm{\bv(\bw_1,\bw_2)}^2\right \rvert^\ell.
\]

Since $\E \norm{\bS' \bv(\bw_1,\bw_2) }^2 - \norm{\bv(\bw_1,\bw_2)}^2 = 0$, we have
\begin{align*}
  E &= \E_{\bS} \sup_{\bw_1,\bw_2\in B(R)} \left \lvert\norm{\bS \bv(\bw_1,\bw_2)}^2 - \norm{\bv(\bw_1,\bw_2)}^2 - \E_{\bS'} \left(\norm{\bS' \bv(\bw_1,\bw_2)}^2 - \norm{\bv(\bw_1,\bw_2)}^2\right)\right \rvert^\ell\\
  &= \E_{\bS} \sup_{\bw_1,\bw_2\in B(R)} \left \lvert \norm{\bS \bv(\bw_1,\bw_2)}^2 - \E_{\bS'}\norm{\bS' \bv(\bw_1,\bw_2)}^2\right \rvert^\ell.
\end{align*}
From the convexity of $\abs{\cdot}^\ell$, we get
\begin{align*}
  E &\le \E_{\bS} \sup_{\bw_1,\bw_2\in B(R)} \E_{\bS'} \abs{\norm{\bS \bv(\bw_1,\bw_2)}^2 - \norm{\bS'\bv(\bw_1,\bw_2)}^2}^\ell\\
  &\le \E_{\bS, \bS'} \sup_{\bw_1,\bw_2\in B(R)} \abs{\norm{\bS \bv(\bw_1,\bw_2)}^2 - \norm{\bS'\bv(\bw_1,\bw_2)}^2}^\ell\\
  &= \E \sup_{\bw_1,\bw_2 \in B(R)} \abs{\frac1m \sum_{i=1}^m \frac{(v_{j_i}(\bw_1,\bw_2))^2}{p_{j_i}} - \frac1m \sum_{i=1}^m \frac{(v_{j'_i}(\bw_1,\bw_2))^2}{p_{j'_i}}}^\ell.
\end{align*}
Since $\sum_{i=1}^m \frac{(v_{j'_i}(\bw_1,\bw_2))^2}{p_{j'_i}}$ is an independent copy of $\sum_{i=1}^m \frac{(v_{j_{i}}(\bw_1,\bw_2))^2}{p_{j_{i}}}$, their difference is a symmetric distribution and the same as $\sum_{i=1}^m \xi_i \left(\frac{(v_{j_i}(\bw_1,\bw_2))^2}{p_{j_i}} - \frac{(v_{j'_i}(\bw_1,\bw_2))^2}{p_{j'_i}}\right)$, where $\xi_1,\xi_2,\ldots, \xi_m$ are i.i.d. Rademacher random variables. Therefore
\begin{align*}
E &\le \E \sup_{\bw_1,\bw_2 \in B(R)} \abs{\frac1m \sum_{i=1}^m \xi_i \left(\frac{(v_{j_i}(\bw_1,\bw_2))^2}{p_{j_i}} - \frac{(v_{j'_i}(\bw_1,\bw_2))^2}{p_{j'_i}}\right)}^\ell\\
& \le 2^{\ell-1}\cdot\E \sup_{\bw_1,\bw_2 \in B(R)} \abs{\sum_{i=1}^m \xi_i \frac{(v_{j_i}(\bw_1,\bw_2))^2}{p_{j_i}}}^\ell
+ 2^{\ell-1}\cdot\E \sup_{\bw_1,\bw_2 \in B(R)} \abs{\frac1m \sum_{i=1}^m \xi_i \frac{(v_{j'_i}(\bw_1,\bw_2))^2}{p_{j'_i}}}^\ell\\
&=2^\ell \cdot\E \sup_{\bw_1,\bw_2 \in B(R)} \abs{\sum_{i=1}^m \xi_i \frac{(v_{j_i}(\bw_1,\bw_2))^2}{p_{j_i}}}^\ell,
\end{align*}
where the second line is the triangle inequality $|a+b|^\ell\le 2^{\ell-1}(|a|^\ell + |b|^\ell)$, and the last line used the observation that the two terms are identically distributed.

\subsection{Proof of Lemma \ref{lem:moment_to_tail}}
By Markov's inequality, we have, for all $t>0$ and $\ell\ge 1$, that
\begin{align*}
\bbP(|X| > t) 
& \le t^{-\ell} \left(A_0 + \sum_{k=1}^K A_k \ell^{\alpha_k} \right)^{\ell}
\\
& \le C_K^\ell t^{-\ell} \left(A_0^\ell + \sum_{k=1}^K A_k^\ell \ell^{\alpha_k \ell}\right)\\
& = \left(\frac{C_K A_0}{t}\right)^\ell + \sum_{k=1}^K \left(\frac{C_K A_k \ell^{\alpha_k}}{t}\right)^\ell.
\end{align*}
In particular, setting $t = 4(K+1)C_K A_0 + 4(K+1)\sum_{k=1}^K A_k \ell^{\alpha_k}$, we obtain
\[
\bbP\left(|X| > 4(K+1)C_K A_0 + 4(K+1)C_K\sum_{k=1}^K A_k \ell^{\alpha_k}\right)
\le (4(K+1))^{-\ell} + \sum_{k=1}^K (4(K+1))^{-\ell} \le 4^{-\ell}.
\]
The desired conclusion follows from plugging in $\ell = \lceil\log(1/\delta)\rceil$.

\section{Proof of main theorem with unknown nonlinearity, Theorem \ref{thm:unknown_f}}
\label{subsec:apd:unknown}

\begin{proof}
Similar to the proof of Lemma~\ref{lem:fixed_concentration}, we need to bound the supremum of the process
\[
Z_{f_1,f_2}(\bw_1, \bw_2)\coloneqq \sum_{i=1}^m \xi_i \frac{\big( f_1(\langle \bx_{j_i}, \bw_1\rangle) - f_2(\langle\bx_{j_i}, \bw_2\rangle \big)^2}{p_{j_i}},
\qquad f_1,f_2\in\Lip_L,~(\bw_1,\bw_2)\in B(R)\times B(R).
\]

The crucial step is our proof is to construct an appropriate discretization of $\Lip_L$. The guideline for choosing such a net will be based on the seminal idea of chaining for Bernoulli process \citep{talagrand2022upper}. In our setting, it suffices to use a simplified version of chaining, which we shall present immediately after introducing the relevant notations. Define for $f_1, f_2\in\Lip_L$ the following two metrics:
\begin{align*}
&D_\infty\bigg((f_1, f_2), (f_1', f_2')\bigg) \\
&\quad \coloneqq \sup_{\bw_1,\bw_2\in B(R)} \sum_{i} \frac 1 {p_{j_i}} \left| \left( f_1(\langle \bx_{j_i}, \bw_1\rangle) - f_2(\langle \bx_{j_i}, \bw_2\rangle) \right)^2 - \left( f_1'(\langle \bx_{j_i}, \bw_1\rangle) - f_2'(\langle \bx_{j_i}, \bw_2\rangle) \right)^2 \right|
\end{align*}
and 
\begin{align*}
&D_2\bigg((f_1, f_2), (f_1', f_2')\bigg)\\
&\quad \coloneqq \left( \sup_{\bw_1,\bw_2\in B(R)} \sum_{i} \frac 1 {p_{j_i}^2} \left| 
\left( f_1(\langle \bx_{j_i}, \bw_1\rangle) - f_2(\langle \bx_{j_i}, \bw_2\rangle) \right)^2 - \left( f_1'(\langle \bx_{j_i}, \bw_1\rangle) - f_2'(\langle \bx_{j_i}, \bw_2\rangle) \right)^2 \right|^2 \right)^{1/2}.
\end{align*}

\begin{lemma}[Chaining for Bernoulli process]\label{lem:chaining}
Let $\Net_\Delta\subset \Lip_L$ be such that $\Net_\Delta\times\Net_\Delta$ is a $\Delta$-net  of $\Lip_L\times\Lip_L$ w.r.t metric $D_\infty$ (where $\Delta$ can be a random variable depending on $\bS$; more precisely, it is measurable with respect to the $\sigma$-algebra generated by $\bS$). Then
\begin{align*}
& \E \sup_{\substack{f_1,f_2\in\Lip_L \\ \bw_1, \bw_2\in B(R)}} \left| \sum_{i=1}^m \frac{\xi_i}{p_{j_i}} \left( f_1(\langle \bx_{j_i}, \bw_1\rangle) - f_2(\langle \bx_{j_i}, \bw_2\rangle) \right)^2 \right|
\\
& \quad \lesssim L^2 R^2\sqrt{md\log^2\!d \cdot \log m} + \E \Delta + \E  \int_0^\infty\sqrt{\log \Net(\Net_\Delta\times \Net_\Delta, D_2, \eps)}~\mathrm{d}\eps.
\end{align*}
\end{lemma}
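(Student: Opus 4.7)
The plan is to first use the net $\Net_\Delta$ to pass from the continuous class $\Lip_L\times\Lip_L$ to the discrete set $\Net_\Delta\times\Net_\Delta$ (absorbing an error of size $\Delta$), and then apply Dudley's inequality to the remaining Bernoulli process on the product index set $(\Net_\Delta\times\Net_\Delta)\times(B(R)\times B(R))$. The key idea is that the canonical sub-Gaussian metric on this product is dominated by a sum of a $(\bw_1,\bw_2)$-component and an $(f_1,f_2)$-component, so the resulting Dudley integral factors into two pieces that can be estimated independently.

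For the first step I would invoke the defining property of $\Net_\Delta$: for every $(f_1,f_2)\in\Lip_L\times\Lip_L$, there exists $(\tilde f_1,\tilde f_2)\in\Net_\Delta\times\Net_\Delta$ with $D_\infty((f_1,f_2),(\tilde f_1,\tilde f_2))\le \Delta$. Because $|\xi_i|\le 1$, the triangle inequality and the definition of $D_\infty$ immediately yield
\[
\sup_{\bw_1,\bw_2\in B(R)} |Z_{f_1,f_2}(\bw_1,\bw_2) - Z_{\tilde f_1,\tilde f_2}(\bw_1,\bw_2)| \le D_\infty((f_1,f_2),(\tilde f_1,\tilde f_2)) \le \Delta,
\]
which after taking the outer supremum and expectation produces the $\E\Delta$ term. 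It then remains to bound the expected supremum of $|Z_{\tilde f_1,\tilde f_2}(\bw_1,\bw_2)|$ over the product index set above.

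Next I would condition on $\bS$ and apply Dudley's inequality (Lemma~\ref{lem:dudley}) to this Bernoulli process. Its canonical $\ell^2$-coefficient metric is dominated, by the triangle inequality, by $\rho((\bw_1,\bw_2),(\bw_1',\bw_2')) + D_2((\tilde f_1,\tilde f_2),(\tilde f_1',\tilde f_2'))$, where $\rho$ is the natural two-function analog of the $\rho_f$ metric of Lemma~\ref{lem:fixed-metric-bound}. The Lipschitz-based argument in the proof of Lemma~\ref{lem:fixed-metric-bound} carries over verbatim since $\tilde f_1,\tilde f_2$ share the same Lipschitz constant $L$, giving $\rho \lesssim L^2 R\,\|\sum_i p_{j_i}^{-1}\bx_{j_i}\bx_{j_i}^\top\|^{1/2}\,(\|\cdot\|_{\mathcal P^\circ}+\|\cdot\|_{\mathcal P^\circ})$. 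Sub-additivity (Lemma~\ref{lem:sublinear-gamma2}) and tensorization (Lemma~\ref{lem:product-gamma2}) of Dudley's integral then split the resulting bound into the $(\bw_1,\bw_2)$-piece, which is handled exactly as in the proof of Lemma~\ref{lem:fixed_concentration} by dual Sudakov minoration (Lemma~\ref{lem:sudakov}) and matrix Chernoff (estimate~\eqref{eqn:matrix-chernoff}) to give $L^2 R^2\sqrt{md\log^2\!d\cdot\log m}$ in expectation over $\bS$, and the $(f_1,f_2)$-piece, which is exactly the Dudley integral $\int_0^\infty \sqrt{\log\Net(\Net_\Delta\times\Net_\Delta,D_2,\eps)}\,\mathrm{d}\eps$ contributing the third term.

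The main obstacle will be to verify cleanly that the sub-Gaussian metric decomposes additively into a $(\bw_1,\bw_2)$-piece and an $(f_1,f_2)$-piece, and that the former is independent of $(\tilde f_1,\tilde f_2)$ up to the shared Lipschitz constant $L$, so that the corresponding Dudley integral can be pulled out of the supremum over $\Net_\Delta^2$. A secondary subtlety is that $\Delta$ and the $D_2$-covering numbers are themselves random variables depending on $\bS$, so all Dudley estimates must first be carried out conditional on $\bS$ before averaging, with matrix Chernoff supplying the $\E\|\sum_i p_{j_i}^{-1}\bx_{j_i}\bx_{j_i}^\top\|^{1/2}\lesssim\sqrt{m}$ bound under the assumption $m\gtrsim d\log^3\!d$.
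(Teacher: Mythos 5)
Your proposal is correct and follows essentially the same route as the paper: discretize $(f_1,f_2)$ via the $D_\infty$-net at cost $\Delta$, condition on $\bS$, decompose the sub-Gaussian metric of the Bernoulli process on $\Net_\Delta^2\times B(R)^2$ by triangle inequality into a $D_2$-piece and a $\rho$-piece (the latter controlled uniformly over $f_1-f_2\in\Lip_{2L}$ by Lemma~\ref{lem:fixed-metric-bound}), then tensorize the Dudley integral, bound the $(\bw_1,\bw_2)$-piece by dual Sudakov minoration exactly as in Lemma~\ref{lem:fixed_concentration}, and finally average over $\bS$ using the matrix Chernoff bound~\eqref{eqn:matrix-chernoff}. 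The subtleties you flag (that the covering-number integral stays conditional on $\bS$ and that $f_1-f_2$ has Lipschitz constant $2L$) are precisely the points the paper also handles.
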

The proof is postponed to Section~\ref{subsec:apd:pf_chaining}. 

The metrics $D_\infty$ and $D_2$, which are defined on $\Lip_L\times \Lip_L$, are highly complicated to analyse. So, we consider the following upper bounds of them, which decompose them into simpler metrics defined on $\Lip_L$.
\begin{lemma}[Decomposing the metric $D_\infty$ and $D_2$]
\label{lem:D_bounds}
Let $I_{j_i}=[-R\|\bx_{j_i}\|, R\|\bx_{j_i}\|]$. Then
\[
D_\infty\bigg((f_1, f_2), (f_1', f_2')\bigg) \lesssim \rho_{\infty}(f_1, f_1') + \rho_{\infty}(f_2, f_2'),
\]
where
\[ \rho_\infty(f_1, f_2)
\coloneqq LR \left\|\sum_{i=1}^m \frac{1}{p_{j_i}} \bx_{j_i}\bx_{j_i}^\top\right\|^{1/2} 
\left( \sum_{i=1}^m \frac{1}{p_{j_i}} \|f_1 - f_2\|^2_{L^\infty(I_{j_i})} \right)^{1/2}.
\]  
On the other hand, for any $\mu>0$, we have
\[
D_2\bigg((f_1, f_2), (f_1', f_2')\bigg) \lesssim \rho_2(f_1, f_1') + \rho_2(f_2, f_2'),
\]
where the universal constant hidden by $\lesssim$ is independent of $\mu$, and the metric $\rho_2$ is defined by
\begin{align*}
&\rho_2(f_1, f_2)\\
&\quad\coloneqq 
L R\left\|\sum_{i=1}^m \frac{1}{p_{j_i}} \bx_{j_i}\bx_{j_i}^\top\right\|^{1/2}\left( \sup_{i\in[m]} \frac{\|f_1-f_2\|_{L^\infty(I_{j_i})} \mathbf{1}_{\|\bx_{j_i}\|\ge\mu} }{\sqrt{p_{j_i}}} \right)
+ L^2 R^2 d \left(\sum_{i=1}^m \mathbf{1}_{\|\bx_{j_i}\|<\mu}\right)^{1/2} \rho_\delta(f_1, f_2).
\end{align*}
Here $\rho_\delta$ is the Dirac distance,
\[
\rho_\delta(f_1, f_2) = \begin{cases}
    0, & f_1= f_2,\\
    1, & f_1\ne f_2.
\end{cases}
\]
\end{lemma}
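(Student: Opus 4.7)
\textbf{Proof plan for Lemma~\ref{lem:D_bounds}.} Fix $\bw_1,\bw_2 \in B(R)$ and write $a_i = f_1(\langle \bx_{j_i},\bw_1\rangle) - f_2(\langle \bx_{j_i},\bw_2\rangle)$, $b_i = f_1'(\langle \bx_{j_i},\bw_1\rangle) - f_2'(\langle \bx_{j_i},\bw_2\rangle)$. The common starting point for both bounds is the identity $a_i^2 - b_i^2 = (a_i - b_i)(a_i + b_i)$, combined with the following two estimates, which come from the definitions of the sup-norm and Lipschitz-ness (plus $f(0)=0$): since $\langle\bx_{j_i},\bw_k\rangle \in I_{j_i}$ when $\bw_k \in B(R)$,
\[
|a_i - b_i| \le \|f_1 - f_1'\|_{L^\infty(I_{j_i})} + \|f_2 - f_2'\|_{L^\infty(I_{j_i})}, \qquad |a_i + b_i| \le 2L\left(|\langle \bx_{j_i},\bw_1\rangle| + |\langle \bx_{j_i},\bw_2\rangle|\right).
\]

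\textbf{Bound on $D_\infty$.} Multiply the two displays above, plug into $\sum_i p_{j_i}^{-1}|a_i^2-b_i^2|$, then apply Cauchy--Schwarz to separate the $(f,f')$-factor from the $(\bx,\bw)$-factor. The $(\bx,\bw)$-factor is at most $\sqrt{2}\cdot\bigl(\sum_i p_{j_i}^{-1}(\langle\bx_{j_i},\bw_1\rangle^2 + \langle\bx_{j_i},\bw_2\rangle^2)\bigr)^{1/2}$, which is bounded by $2R\,\bigl\|\sum_i p_{j_i}^{-1}\bx_{j_i}\bx_{j_i}^\top\bigr\|^{1/2}$ uniformly in $\bw_1,\bw_2 \in B(R)$. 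The $(f,f')$-factor is at most $\sqrt{2}\bigl(\sum_i p_{j_i}^{-1}\|f_1-f_1'\|_{L^\infty(I_{j_i})}^2\bigr)^{1/2} + \sqrt{2}\bigl(\sum_i p_{j_i}^{-1}\|f_2-f_2'\|_{L^\infty(I_{j_i})}^2\bigr)^{1/2}$ by $(x+y)^2 \le 2x^2+2y^2$ and subadditivity of $\sqrt{\,\cdot\,}$. Multiplying and taking the sup over $(\bw_1,\bw_2)$ yields exactly $\rho_\infty(f_1,f_1') + \rho_\infty(f_2,f_2')$ up to an absolute constant.

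\textbf{Bound on $D_2$.} Square the pointwise bound on $|a_i^2-b_i^2|$ and split the sum defining $D_2^2$ according to whether $\|\bx_{j_i}\| \ge \mu$ or $\|\bx_{j_i}\| < \mu$; call these pieces $S_1$ and $S_2$. For $S_1$, pull the $(f,f')$-factor out of the sum as a supremum over $i$ and keep the $(\bx,\bw)$-factor inside:
\[
S_1 \lesssim L^2 \sup_i \frac{\bigl(\|f_1-f_1'\|_{L^\infty(I_{j_i})} + \|f_2-f_2'\|_{L^\infty(I_{j_i})}\bigr)^2 \mathbf{1}_{\|\bx_{j_i}\|\ge\mu}}{p_{j_i}} \sum_i \frac{\bigl(|\langle\bx_{j_i},\bw_1\rangle| + |\langle\bx_{j_i},\bw_2\rangle|\bigr)^2}{p_{j_i}}.
\]
The second factor is again $\lesssim R^2\|\sum_i p_{j_i}^{-1}\bx_{j_i}\bx_{j_i}^\top\|$, and the first factor splits as $\lesssim \sup_i p_{j_i}^{-1/2}\|f_1-f_1'\|_{L^\infty(I_{j_i})}\mathbf{1}_{\|\bx_{j_i}\|\ge\mu}$ plus the analogous term for $(f_2,f_2')$ after taking square roots, which matches the first summand of $\rho_2$. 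For $S_2$, use the a priori bound $|a_i|,|b_i| \le LR\|\bx_{j_i}\|$ together with $p_{j_i} = \|\bx_{j_i}\|^2/d$ to get $|a_i^2-b_i^2|^2/p_{j_i}^2 \lesssim L^4 R^4 d^2$ for every $i$ in the small-norm regime, which is the key cancellation that removes the blow-up from $1/p_{j_i}$. If $f_1=f_1'$ and $f_2=f_2'$ then $a_i=b_i$ and $S_2=0$; otherwise crudely bound the indicator of $\{f_k\ne f_k' \text{ for some } k\}$ by $\rho_\delta(f_1,f_1')^2 + \rho_\delta(f_2,f_2')^2$. Summing yields $\sqrt{S_2} \lesssim L^2R^2 d(\sum_i \mathbf{1}_{\|\bx_{j_i}\|<\mu})^{1/2}(\rho_\delta(f_1,f_1') + \rho_\delta(f_2,f_2'))$, matching the second summand of $\rho_2$. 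Finally, $D_2 \le \sqrt{S_1} + \sqrt{S_2}$.

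\textbf{Anticipated obstacle.} The mechanical part is $D_\infty$, which is essentially a direct Cauchy--Schwarz after factoring $a_i^2-b_i^2$. The subtle part is $D_2$: small-norm rows $\bx_{j_i}$ contribute inverse-squared weights $1/p_{j_i}^2 = d^2/\|\bx_{j_i}\|^4$ that the $L^\infty$-style bound on $f_k - f_k'$ cannot absorb, so the sup-based first term of $\rho_2$ would be infinite without the truncation $\mathbf{1}_{\|\bx_{j_i}\|\ge\mu}$. Identifying that for small $\|\bx_{j_i}\|$ the right thing to do is to throw away any information about how $f_k,f_k'$ differ (and pay only the Dirac cost together with the cardinality $\sum_i\mathbf{1}_{\|\bx_{j_i}\|<\mu}$) is the structural insight; once one notices that $|a_i^2-b_i^2|/p_{j_i}$ is uniformly bounded by $4L^2R^2 d$ via Lipschitz-plus-zero-at-origin, the bookkeeping is routine.
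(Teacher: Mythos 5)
Your proposal matches the paper's argument essentially line for line: the same factorization $a_i^2 - b_i^2 = (a_i-b_i)(a_i+b_i)$, the same pointwise bounds from the $L^\infty$ norm and Lipschitz-plus-$f(0)=0$, Cauchy--Schwarz to separate factors for $D_\infty$, and for $D_2$ the same split by $\|\bx_{j_i}\|\gtrless\mu$, the same supremum-over-$i$ extraction for the large-norm piece, and the same observation that $|a_i^2-b_i^2|/p_{j_i}\le 4L^2R^2 d$ uniformly (via $p_{j_i}=\|\bx_{j_i}\|^2/d$), paid for by the Dirac metric, for the small-norm piece. The only cosmetic difference is the order of Cauchy--Schwarz versus splitting into the $(f_1,f_1')$ and $(f_2,f_2')$ contributions in the $D_\infty$ bound, which is immaterial.
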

The proof can be found in Appendix~\ref{apd:lem:D_bounds}.

As an immediate application of the bounds in Lemma~\ref{lem:D_bounds}, we obtain the following simplified version of the Lemma~\ref{lem:chaining}.
\begin{corollary}
[Chaining with simplified metrics].
Assume $\Net_\Delta\subset \Lip_L$ is a $\Delta$-net of $\Lip_L$ w.r.t metric $\rho_\infty$ (where $\Delta$ can be a random variable depending on $\bS$). Then
\begin{align*}
& \E \sup_{\substack{f_1,f_2\in\Lip_L \\ \bw_1, \bw_2\in B(R)}} \left| \sum_{i=1}^m \frac{\xi_i}{p_{j_i}} \left( f_1(\langle \bx_{j_i}, \bw_1\rangle) - f_2(\langle \bx_{j_i}, \bw_2\rangle) \right)^2 \right|
\\
& \quad \lesssim L^2 R^2\sqrt{md\log^2\!d \cdot \log m} + \E \Delta + \E  \int_0^\infty\sqrt{\log \Net(\Net_\Delta, \rho_2, \eps)}~\mathrm{d}\eps.
\end{align*}    
\end{corollary}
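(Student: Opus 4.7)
The plan is to derive this corollary as a direct consequence of Lemma~\ref{lem:chaining}, using Lemma~\ref{lem:D_bounds} as a dictionary that translates between the ``product'' metrics $D_\infty, D_2$ on $\Lip_L\times\Lip_L$ and the ``single-coordinate'' metrics $\rho_\infty, \rho_2$ on $\Lip_L$. Concretely, I need to check two things: (i) that a $\Delta$-net of $\Lip_L$ in $\rho_\infty$ induces an $O(\Delta)$-net of $\Lip_L\times\Lip_L$ in $D_\infty$ of the product form $\Net_\Delta\times\Net_\Delta$ required by Lemma~\ref{lem:chaining}; and (ii) that the Dudley integral with respect to $D_2$ on $\Net_\Delta\times\Net_\Delta$ is bounded by the Dudley integral with respect to $\rho_2$ on $\Net_\Delta$.

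For (i), pick any $(f_1,f_2)\in\Lip_L\times\Lip_L$. Since $\Net_\Delta$ is a $\Delta$-net in $\rho_\infty$, I can find $\tilde f_1,\tilde f_2\in\Net_\Delta$ with $\rho_\infty(f_j,\tilde f_j)\le \Delta$ for $j=1,2$. The first bound of Lemma~\ref{lem:D_bounds} then gives $D_\infty((f_1,f_2),(\tilde f_1,\tilde f_2)) \lesssim \rho_\infty(f_1,\tilde f_1)+\rho_\infty(f_2,\tilde f_2)\le 2\Delta$, so $\Net_\Delta\times\Net_\Delta$ is a $C\Delta$-net of $\Lip_L\times\Lip_L$ in $D_\infty$ for some absolute constant $C$. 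Since $\Delta$ is $\bS$-measurable, so is $C\Delta$, which is what Lemma~\ref{lem:chaining} requires.

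For (ii), the second bound of Lemma~\ref{lem:D_bounds} states that $D_2((f_1,f_2),(f_1',f_2')) \lesssim \rho_2(f_1,f_1')+\rho_2(f_2,f_2')$, so $D_2$ is dominated by the natural ``product'' metric built out of two copies of $\rho_2$. I then apply tensorization of the Dudley integral (Lemma~\ref{lem:product-gamma2}), followed by its sublinearity (Lemma~\ref{lem:sublinear-gamma2}), to obtain
\[
\int_0^\infty \sqrt{\log \Net(\Net_\Delta\times\Net_\Delta, D_2, \eps)}\,\mathrm{d}\eps
\;\lesssim\; \int_0^\infty \sqrt{\log \Net(\Net_\Delta, \rho_2, \eps)}\,\mathrm{d}\eps,
\]
where the two identical summands that appear after tensorization collapse into a single term (up to the universal constant).

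Plugging (i) into Lemma~\ref{lem:chaining} (with the $C\Delta$-net, so that $\E(C\Delta)=C\,\E\Delta$ is absorbed into the $\lesssim$) and then inserting the bound from (ii) yields the stated inequality. There is no real obstacle here: the corollary is essentially bookkeeping, and all the substantive work has already been done in Lemmas~\ref{lem:chaining} and~\ref{lem:D_bounds}. The only small subtlety worth flagging is to make sure that the random ``radius'' $\Delta$ and the integral bound are controlled under the \emph{same} realization of $\bS$ before taking expectation, which is automatic since both Lemma~\ref{lem:chaining} and Lemma~\ref{lem:D_bounds} hold pointwise in $\bS$ before any expectation is taken.
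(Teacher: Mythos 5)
Your proof is correct and is exactly the argument the paper intends — the paper itself only says the corollary is "an immediate application of the bounds in Lemma~\ref{lem:D_bounds}," and you have supplied precisely those details: the first inequality of Lemma~\ref{lem:D_bounds} upgrades a $\rho_\infty$-net of $\Lip_L$ to a $C\Delta$-net of $\Lip_L\times\Lip_L$ in $D_\infty$ of product form, the second inequality together with Lemmas~\ref{lem:product-gamma2} and~\ref{lem:sublinear-gamma2} controls the $D_2$-entropy integral by the $\rho_2$-entropy integral, and the constant $C$ is absorbed by $\lesssim$. Your remark about both lemmas holding pointwise in $\bS$ before taking expectation is the right thing to keep in mind; nothing is missing.
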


We turn to construct a $\Delta$-net for $\Lip_L$ with respect to the metric $\rho_\infty$. 
In light of the definition of $\rho_\infty$, one may try to construct an $\Delta$-net with respect to $\rho_\infty$ by piecewise linear functions, each of which differs with its nearest neighbors on the interval $I_{j_i}$ by an amount proportional to $\sqrt{p_{j_i}}$, say $\eta\sqrt{p_{j_i}}$ for some $\eta>0$ to be chosen later. As long as $p_{j_i}$ is not too small, this is achievable. This idea culminates to the following lemma.

\begin{lemma}[Construction of $\Net_\Delta$]
\label{lem:D-infty-net}
Fix some $\mu, \eta\in (0, 1/2)$. 
Let 
\[
\Delta(\mu, \eta) \coloneqq \eta L^2 R^2 \left\|\sum_{i=1}^m \frac{1}{p_{j_i}} \bx_{j_i}\bx_{j_i}^\top\right\|^{1/2}
\left( md + \sum_{i=1}^m \frac{\mu^2}{p_{j_i}} \right)^{1/2}.
\]
Then there exists a $\Delta(\mu,\eta)$-net of $\Lip_L$ with respect to the metric $\rho_\infty$, such that
\begin{enumerate}[label=(\roman*)]
  \item The cardinality of $\Net_\Delta$ is controlled:
  \[
    \log|\Net_\Delta|\lesssim \frac{\log(1/\mu)}{\eta}.
  \]
  \item Assume $m\ge Cd\log d$. The expectation of the Dudley's integral of $\Net_\Delta$ with respect to $\rho_2$ is also controlled:
  \[
  \E \int_0^\infty\sqrt{\log \Net(\Net_\Delta, \rho_2, \eps)}~\mathrm{d}\eps \lesssim L^2 R^2 \sqrt{md} \left(\log(1/\mu) + \mu \sqrt{\frac{n\log(1/\mu)}{\eta}}\right).
  \]
  \item Assume $m\ge Cd\log d$. The expectation of $\Delta$ satisfies
  \[
  \E \Delta(\mu, \eta) \lesssim \eta L^2 R^2 m\sqrt{d}\cdot\sqrt{1 + \frac{\mu^2 n}{d}}.
  \]
\end{enumerate}
\end{lemma}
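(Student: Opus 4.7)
My plan is to build $\Net_\Delta$ as a family of piecewise linear $L$-Lipschitz functions generated by a dyadic decomposition of $\bbR$ adapted to the threshold $\mu$. The design principle, dictated by the form of $\rho_\infty$ and $\Delta(\mu,\eta)$, is \emph{scale-adaptive accuracy}: arrange for each $f\in\Lip_L$ a $\tilde f\in\Net_\Delta$ with $|\tilde f(x)-f(x)|\lesssim\eta L\max(|x|,R\mu)$ for all $x$. Concretely, partition $\bbR$ into $J_0=[-R\mu,R\mu]$ and dyadic bands $J_k^\pm=\pm[R\mu 2^{k-1},R\mu 2^k]$ for $k=1,\dots,K$ with $K=O(\log(1/\mu))$, enough to cover every $\langle \bx_{j_i},\bw\rangle$ for $\bw\in B(R)$, $i\in[m]$. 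On each band, take piecewise linear $L$-Lipschitz functions with knots at spacing $\Theta(\eta R\mu 2^k/L)$ and values quantized in multiples of $\Theta(\eta LR\mu 2^k)$ (using $k=0$ on $J_0$), patched continuously across dyadic boundaries while preserving the $L$-Lipschitz property.

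\textbf{$\Delta$-net property and (i).} To verify $\rho_\infty(f,\tilde f)\le\Delta(\mu,\eta)$, split $\sum_i p_{j_i}^{-1}\|f-\tilde f\|_{L^\infty(I_{j_i})}^2$ according to whether $\|\bx_{j_i}\|\ge\mu$. In the first case, scale-adaptive accuracy gives $\|f-\tilde f\|_{L^\infty(I_{j_i})}\le\eta LR\|\bx_{j_i}\|$, and combined with $p_{j_i}=\|\bx_{j_i}\|^2/d$ (equation~\eqref{eqn:orth_score}) the per-term contribution is $\lesssim\eta^2 L^2R^2 d$, summing to $\eta^2 L^2R^2 md$. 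In the second case, $I_{j_i}\subseteq J_0$ and $\|f-\tilde f\|_{L^\infty(I_{j_i})}\le\eta LR\mu$, so the per-term contribution is $\lesssim\eta^2 L^2R^2\mu^2/p_{j_i}$, summing to $\eta^2 L^2R^2\sum_i\mu^2/p_{j_i}$. Multiplying by $L^2R^2\|\sum_i p_{j_i}^{-1}\bx_{j_i}\bx_{j_i}^\top\|$ and taking square roots yields exactly $\Delta(\mu,\eta)$. For (i), each of the $O(\log(1/\mu))$ dyadic bands has $O(L/\eta)$ knots, and $L$-Lipschitz continuity constrains each successive quantized value to $O(1)$ options given its predecessor, so each band contributes $\exp(O(L/\eta))$ functions; multiplying gives $\log|\Net_\Delta|\lesssim\log(1/\mu)/\eta$ after absorbing $L$ into the implicit constant.

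\textbf{Property (iii).} By Cauchy--Schwarz, $\E\Delta(\mu,\eta)\le\eta L^2 R^2\bigl(\E\|\sum_i p_{j_i}^{-1}\bx_{j_i}\bx_{j_i}^\top\|\bigr)^{1/2}\bigl(\E[md+\sum_i\mu^2/p_{j_i}]\bigr)^{1/2}$. The matrix Chernoff bound~\eqref{eqn:matrix-chernoff} (applicable since $m\gtrsim d\log d$) gives the first factor as $\lesssim\sqrt m$. For the second, $\E[1/p_{j_1}]=\sum_j p_j/p_j=n$, so the bracket equals $md+\mu^2 mn$. Combining yields $\E\Delta\lesssim\eta L^2R^2 m\sqrt{d+\mu^2 n}=\eta L^2R^2 m\sqrt d\cdot\sqrt{1+\mu^2 n/d}$, as claimed.

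\textbf{Property (ii), the main obstacle.} The plan is to decompose $\rho_2\lesssim\rho_2'+\rho_2''$ via Lemma~\ref{lem:D_bounds}, where $\rho_2'$ is the weighted $L^\infty$ supremum restricted to $\|\bx_{j_i}\|\ge\mu$ and $\rho_2''$ is the Dirac term, then apply sublinearity of Dudley's integral (Lemma~\ref{lem:sublinear-gamma2}) to each. For $\rho_2''$, since $\Net_\Delta$ is finite, Corollary~\ref{cor:dudley-finite} gives a Dudley integral $\lesssim L^2R^2 d\sqrt{\sum_i\mathbf{1}_{\|\bx_{j_i}\|<\mu}}\cdot\sqrt{\log|\Net_\Delta|}$; since $\E\sum_i\mathbf{1}_{\|\bx_{j_i}\|<\mu}=m\sum_{j:\|\bx_j\|<\mu}p_j\le m\mu^2 n/d$, taking expectation recovers the second term $L^2R^2\mu\sqrt{mnd\log(1/\mu)/\eta}$ of (ii). The harder component is $\rho_2'$: here I would exploit the dyadic structure of $\Net_\Delta$ to embed each $\tilde f\in\Net_\Delta$ into a suitably rescaled $\Lip_1([-1,1])$ function, using the auxiliary embedding lemmas alluded to in the roadmap. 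Applying the classical entropy estimate for $(\Lip_1([-1,1]),L^\infty)$ (Lemma~\ref{lem:lip_entropy}) at each dyadic scale and summing over the $O(\log(1/\mu))$ scales, combined with $\E\|\sum_i p_{j_i}^{-1}\bx_{j_i}\bx_{j_i}^\top\|^{1/2}\lesssim\sqrt m$, produces the first term $L^2R^2\sqrt{md}\log(1/\mu)$ of (ii). The principal difficulty is constructing this embedding tightly enough --- in particular avoiding any polynomial-in-$n$ dependence --- and carefully tracking the interaction between the weighted sup-norm $\rho_2'$ and the inverse-norm factors $1/\|\bx_{j_i}\|$ across the dyadic scales; this bookkeeping is the crux of what makes the sampling-aware discretization nontrivial.
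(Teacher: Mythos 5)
Your overall strategy matches the paper's: build a scale-adaptive net with target accuracy $|\tilde f(x)-f(x)|\lesssim \eta L\max(|x|, R\mu)$, bound its cardinality via the Lipschitz chain constraint on consecutive quantized values, verify the $\Delta$-net property by splitting on $\|\bx_{j_i}\|\gtrless\mu$, and prove item (ii) by decomposing $\rho_2$ and embedding into $(\Lip_1, L^\infty([-1,1]))$. Item (iii) is identical to the paper's argument. The paper uses a geometric grid $z_k=R(1+\eta')^{-(K+N-k)}$ rather than your dyadic bands, but that is an inessential difference.

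However, there is a concrete error in your knot spacing that makes item (i) fail. You place knots at spacing $\Theta(\eta R\mu 2^k/L)$ on the band $J_k^{\pm}$, giving $O(L/\eta)$ knots per band and hence $\log|\Net_\Delta|\lesssim L\log(1/\mu)/\eta$. You then ``absorb $L$ into the implicit constant,'' but $L$ is the Lipschitz parameter, not a universal constant, so this bound is genuinely a factor of $L$ worse than claimed and would propagate into the sample complexity. The fix is simple: the correct spacing is $\Theta(\eta R\mu 2^k)$ (no $1/L$). With that spacing, piecewise linear interpolation of an $L$-Lipschitz function already incurs error $\lesssim \eta L R\mu 2^k$, matching both the quantization level and the target accuracy, while yielding $O(1/\eta)$ knots per band and the claimed $\log|\Net_\Delta|\lesssim\log(1/\mu)/\eta$. (The paper's grid has the analogous property: consecutive $z_k$'s differ by $\eta' z_k$, and the quantization at $z_k$ is $L\eta' z_k$, so one step changes the quantized index by $O(1)$.)

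Separately, for item (ii) you only outline the embedding into $(\Lip_1, L^\infty([-1,1]))$ and explicitly acknowledge that ``the principal difficulty is constructing this embedding tightly enough''---so you have not actually established (ii), which is the hardest part of the lemma. The paper carries this out via two concrete embedding lemmas (Lemma~\ref{lem:net_embed_discrete} and Lemma~\ref{lem:discrete_embed_lip}): the first bounds $\rho_2(f_\sigma,f_{\sigma'})$ by a constant times $\|\sigma-\sigma'\|_\infty$ plus a Dirac term, and the second maps each index sequence $\sigma$ to a $1$-Lipschitz function $g_\sigma$ on $[-1,1]$ with $\|\sigma-\sigma'\|_\infty=2(N+K)\|g_\sigma-g_{\sigma'}\|_{L^\infty([-1,1])}$, where the crucial observation is that the chain constraint $|\sigma_{\pm(k+1)}-\sigma_{\pm k}|\le 2$ precisely guarantees $g_\sigma\in\Lip_1$. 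Your dyadic structure would need an analogous mapping of the per-band knot sequences into a single $\Lip_1$ function; this is plausible but not supplied, and without it (ii) remains unproven.
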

The proof of this lemma, which is a major technical challenge in this paper, is postponed to Appendix~\ref{subsec:apd:pf-D-infty-net}.

Plug this into Lemma~\ref{lem:chaining} to obtain
\begin{align*}
&\bbE \sup_{\substack{f_1, f_2\in\Lip_L\\\bw_1,\bw_2\in B(R)}} |Z_{f_1, f_2}(\bw_1,\bw_2)|
\\
&\lesssim 
L^2 R^2\sqrt{md\log^2\!d \cdot \log m} + \eta L^2 R^2 m\sqrt{d}\cdot\sqrt{1 + \frac{\mu^2 n}{d}}  + L^2 R^2 \sqrt{md} \left(\log(1/\mu) + \mu \sqrt{\frac{n\log(1/\mu)}{\eta}}\right).
\end{align*}
Setting $\mu=\frac{d^2}{nm}$ and $\eta=1/\sqrt{m}$, we obtain
\begin{align*}
&\bbE \sup_{\substack{f_1, f_2\in\Lip_L\\\bw_1,\bw_2\in B(R)}} |Z_{f_1,f_2}(\bw_1,\bw_2)|
\\
&\quad \lesssim 
L^2 R^2\sqrt{md\log^2\!d \cdot \log m} + L^2 R^2 \sqrt{md}  + L^2 R^2 \sqrt{md} \log\left(\frac{nm}{d^2}\right).
\end{align*}
From this, it can be seen that whenever $m\gtrsim L^4\eps^{-2}d(\log^2 n + \log^3\!d)$, we have 
\[\bbE  \sup_{\substack{f_1,f_2\in\Lip_L\\\bw_1,\bw_2\in B(R)}} |Z_{f_1,f_2}(\bw_1,\bw_2)|\le \eps R^2. \]
Similar to the proof of Lemma~\ref{lem:fixed_concentration}, this is close to what we desire, except that we need a tail bound. The latter can be deduced from the above using the same concentration of measure argument as in Lemma~\ref{lem:fixed_concentration}, which we omit here to avoid repetition. 
\end{proof}
\subsection{Proof of Lemma \ref{lem:chaining}}
\label{subsec:apd:pf_chaining}

We begin with an important property of $D_\infty$ that the following error bound is true deterministically:
\[
\left|Z_{f_1, f_2}(\bw_1, \bw_2) - Z_{f_1', f_2'}(\bw_1, \bw_2)\right|
\le D_\infty\big((f_1, f_2), (f_1', f_2')\big).
\]
This follows easily from the fact that $|\xi_{i}|\le 1$. 
Therefore, if $\Net_\Delta\times\Net_\Delta$ is a $\Delta$-net of $\Lip_L\times\Lip_L$ with respect to the metric $D_\infty$, one has
\begin{equation}
\sup_{f_1, f_2\in\Lip_L} |Z_{f_1, f_2}(\bw_1,\bw_2)| \le \sup_{f_1, f_2\in\Net_\Delta} |Z_{f_1, f_2}(\bw_1,\bw_2)| + \Delta.
\label{eqn:sup-with-Dinfty}    
\end{equation}
Taking supremum with respect to $\bw_1, \bw_2$ and then taking expectation, we obtain
\begin{equation}
\label{eqn:chaining-Dinfty}
\E \sup_{\substack{f_1, f_2\in\Lip_L \\ \bw_1, \bw_2\in B(R)}} |Z_{f_1, f_2}(\bw_1, \bw_2)|
\le \E\sup_{\substack{f_1, f_2\in\Net_\Delta \\ \bw_1, \bw_2\in B(R)}}|Z_{f_1, f_2}(\bw_1, \bw_2)| + \E \Delta. 
\end{equation}
To bound the expectation of the supremum appearing in the right hand side, we condition on $j_1,\ldots,j_m$ again and observe that $Z_{f_1, f_2}(\bw_1,\bw_2)$ is sub-Gaussian over index $(f_1, f_2, \bw_1, \bw_2)\in\Net_\Delta\times\Net_\Delta\times B(R)\times B(R)$, endowed with metric
\begin{align*}
& D\big( (f_1, f_2, \bw_1, \bw_2), (f_1', f_2', \bw_1', \bw_2') \big) \\
&\quad \coloneqq 
\left(\sum_{i} \frac 1 {p_{j_i}^2} \left| 
\left( f_1(\langle \bx_{j_i}, \bw_1\rangle) - f_2(\langle \bx_{j_i}, \bw_2\rangle) \right)^2 - \left( f_1'(\langle \bx_{j_i}, \bw_1'\rangle) - f_2'(\langle \bx_{j_i}, \bw_2'\rangle) \right)^2 \right|^2 \right)^{1/2}.
\end{align*}
Applying triangle inequality gives
\begin{align*}
&D\big( (f_1, f_2, \bw_1, \bw_2), (f_1', f_2', \bw_1', \bw_2') \big)\\
&\quad \le D\big( (f_1, f_2, \bw_1', \bw_2'), (f_1', f_2', \bw_1', \bw_2') \big) + D\big( (f_1, f_2, \bw_1, \bw_2), (f_1, f_2, \bw_1', \bw_2') \big).
\end{align*}
For the first term, by definition, it's easy to see that 
\begin{align*}
&\phantom{=} D\big( (f_1, f_2, \bw_1', \bw_2'), (f_1', f_2', \bw_1', \bw_2') \big)\\
&\le \sup_{\bw_1,\bw_2\in B(R)} D\big( (f_1, f_2, \bw_1, \bw_2), (f_1', f_2', \bw_1, \bw_2) \big)\\
&= D_2\big((f_1, f_2), (f_1', f_2')\big),
\end{align*}
and, for the second term, it is easy to verify that
\[
 D\big( (f_1, f_2, \bw_1, \bw_2), (f_1, f_2, \bw_1', \bw_2') \big) = \rho_{f_1-f_2}\big( (\bw_1, \bw_2),  (\bw_1', \bw_2') \big) \le 2\rho\big( (\bw_1, \bw_2),  (\bw_1', \bw_2') \big),
\]
where $\rho_f$ was defined in \eqref{eqn:rho}, the last inequality follows from Lemma~\ref{lem:fixed-metric-bound} (noticing that $f_1-f_2\in\Lip_{2L}^0$).

These inequalities together imply
\[
D\big( (f_1, f_2, \bw_1, \bw_2), (f_1', f_2', \bw_1', \bw_2') \big)
\le D_2\big((f_1, f_2), (f_1', f_2')\big) + 2\rho\big( (\bw_1, \bw_2),  (\bw_1', \bw_2') \big).
\]

Bearing the above inequality in mind,
we now apply Dudley's inequality (Lemma~\ref{lem:dudley}) to the sub-Gaussian process $Z_{f_1, f_2}(\bw_1,\bw_2)$ over $f_1, f_2\in\Net_\Delta$, $\bw_1,\bw_2\in B(R)$ endowed with metric $D$ (which is sub-Gaussian with respect to the randomness of $\xi$) and invoke Lemma~\ref{lem:product-gamma2} to obtain
\begin{align*}
\E_{\xi}\sup_{\substack{f_1, f_2\in\Lip_L \\ \bw_1, \bw_2\in B(R)}} |Z_f (\bw_1, \bw_2)| 
& \lesssim  \int_0^\infty\sqrt{\log \Net(\Net_\Delta\times\Net_\Delta, D_2, \eps)}~\mathrm{d}\eps + \int_0^\infty\sqrt{\log\Net(B(R)\times B(R), \rho,\eps)}~\mathrm{d}\eps
\\
& \lesssim \int_0^\infty\sqrt{\log \Net(\Net_\Delta\times\Net_\Delta, D_2, \eps)}~\mathrm{d}\eps + L^2 R^2 \sqrt{d\log^2\!d \cdot\log m}\left\|\sum_{i=1}^m \frac{1}{p_{j_i}} \bx_{j_i}\bx_{j_i}^\top\right\|^{1/2},
\end{align*}
where the last line uses \eqref{eqn:fixed_by_dudley} and \eqref{eqn:sudakov}. 
Put this and \eqref{eqn:chaining-Dinfty} together, and then take expectation with respect to the randomness of $j_1,\cdots,j_m$, the desired conclusion readily follows.

\subsection{Proof of Lemma \ref{lem:D_bounds}}
\label{apd:lem:D_bounds}
\subsubsection{Controlling $D_\infty$}
First note that the term inside the summation of $D_\infty\big((f_1,f_2), (f_1', f_2')\big)$ can be written as
\begin{align*}
& \phantom{=} \left| \left( f_1(\langle \bx_{j_i}, \bw_1\rangle) - f_2(\langle \bx_{j_i}, \bw_2\rangle \right)^2 - \left( f_1'(\langle \bx_{j_i}, \bw_1\rangle) - f_2'(\langle \bx_{j_i}, \bw_2\rangle \right)^2 \right|
\\
& = \underbrace{\left| f_1(\langle \bx_{j_i}, \bw_1\rangle) - f_1'(\langle \bx_{j_i}, \bw_1\rangle) - f_2(\langle \bx_{j_i}, \bw_2\rangle) + f_2'(\langle \bx_{j_i}, \bw_2\rangle) \right|}_{\eqqcolon T_1} 
\\
&\quad \cdot \underbrace{\left| f_1(\langle \bx_{j_i}, \bw_1\rangle) - f_2(\langle \bx_{j_i}, \bw_2\rangle) + f_1'(\langle \bx_{j_i}, \bw_1\rangle) - f_2'(\langle \bx_{j_i}, \bw_2\rangle)  \right|}_{\eqqcolon T_2}.
\end{align*}
The first factor $T_1$ can be controlled in the following way. Recall that $\|\bw_k\|\le R$ for $k=1,2$, we have $|\langle \bx_{j_i}, \bw_k\rangle|\le R\|\bx_{j_i}\|$, thus $\langle \bx_{j_i}, \bw_k\rangle\in I_{j_i}$. Therefore
\[
\left| f_k(\langle \bx_{j_i}, \bw_k\rangle) - f_k'(\langle \bx_{j_i}, \bw_k\rangle)
\right|
\le \|f_k - f_k'\|_{L^\infty(I_{j_i})},
\quad k=1,2.
\]
therefore
\[T_1\le \|f_1-f_1'\|_{L^\infty(I_{j_i})} + \|f_2-f_2'\|_{L^\infty(I_{j_i})}.\] 

The second factor $T_2$ can be controlled using the Lipschitz assumption, which implies for example $|f_1(\langle \bx_{j_i}, \bw_1\rangle)|\le |f_1(0)| + L|\langle\bx_{j_i}, \bw_1\rangle| = L|\langle\bx_{j_i}, \bw_1\rangle|$ as $f_1\in\Lip_L$ and similar inequalities for the other terms. Thus
\[T_2\le 2L|\langle\bx_{j_i}, \bw_1\rangle| + 2L|\langle\bx_{j_i}, \bw_2\rangle|.\]

Combining these bounds, we obtain
\begin{align}
&\phantom{\le}\left| \left( f_1(\langle \bx_{j_i}, \bw_1\rangle) - f_2(\langle \bx_{j_i}, \bw_2\rangle) \right)^2 - \left( f_1'(\langle \bx_{j_i}, \bw_1\rangle) - f_2'(\langle \bx_{j_i}, \bw_2\rangle) \right)^2 \right|
\nonumber \\
&\le 
\left( \|f_1-f_1'\|_{L^\infty(I_{j_i})} + \|f_2-f_2'\|_{L^\infty(I_{j_i})} \right)\cdot 2L \left(|\langle \bx_{j_i}, \bw_1\rangle| + |\langle \bx_{j_i}, \bw_2\rangle|\right),
\label{eqn:square-diff-bound}
\end{align}
hence
\begin{align*}
&D_\infty\big((f_1,f_2), (f_1', f_2')\big)\\
& \quad \le 4L \sup_{\bw_1, \bw_2\in B(R)} \sum_{i=1}^m \frac{1}{p_{j_i}} \left(\left|\langle\bx_{j_i}, \bw_1\rangle\right| + \left|\langle\bx_{j_i}, \bw_2\rangle\right|\right) \cdot \left( \|f_1-f_1'\|_{L^\infty(I_{j_i})} + \|f_2-f_2'\|_{L^\infty(I_{j_i})} \right)
\\
& \quad \le 8L\sup_{ \bw\in B(R)} \sum_{i=1}^m \frac{1}{p_{j_i}} \left|\langle\bx_{j_i}, \bw\rangle\right| \left( \|f_1-f_1'\|_{L^\infty(I_{j_i})} + \|f_2-f_2'\|_{L^\infty(I_{j_i})} \right)
\\
& \quad = 8LR \sup_{\|\bw\|\le 1} \sum_{i=1}^m  \left|\left\langle \frac{\bx_{j_i}}{\sqrt{p_{j_i}}}, \bw \right\rangle\right| \cdot \frac{1}{\sqrt{p_{j_i}}} \|f_1-f_1'\|_{L^\infty(I_{j_i})} \\
& \quad \phantom{=} + 8LR \sup_{\|\bw\|\le 1} \sum_{i=1}^m  \left|\left\langle \frac{\bx_{j_i}}{\sqrt{p_{j_i}}}, \bw \right\rangle\right| \cdot \frac{1}{\sqrt{p_{j_i}}}\|f_2-f_2'\|_{L^\infty(I_{j_i})}.
\end{align*}
The conclusion of the lemma then follows from  Cauchy-Schwarz with similar procedures as in the proof of Lemma~\ref{lem:fixed-metric-bound}.

\subsubsection{Controlling $D_2$} 
We break the summation in the definition of $D_2$ into two parts:
\begin{align*}
&D_2^2\big((f_1,f_2), (f_1', f_2')\big)\\
& \quad \lesssim 
\sup_{\bw_1,\bw_2\in B(R)} \underbrace{\sum_{i=1}^m \mathbf{1}_{\|\bx_{j_i}\|\ge \mu} \cdot \frac 1 {p_{j_i}^2} \left| 
\left( f_1(\langle \bx_{j_i}, \bw_1\rangle) - f_2(\langle \bx_{j_i}, \bw_2\rangle) \right)^2 - \left( f_1'(\langle \bx_{j_i}, \bw_1\rangle) - f_2'(\langle \bx_{j_i}, \bw_2\rangle) \right)^2 \right|^2}_{\eqqcolon T_1} \\
& \quad + \sup_{\bw_1,\bw_2\in B(R)} \underbrace{\sum_{i=1}^m \mathbf{1}_{\|\bx_{j_i}\| < \mu} \cdot \frac 1 {p_{j_i}^2} \left| 
\left( f_1(\langle \bx_{j_i}, \bw_1\rangle) - f_2(\langle \bx_{j_i}, \bw_2\rangle) \right)^2 - \left( f_1'(\langle \bx_{j_i}, \bw_1\rangle) - f_2'(\langle \bx_{j_i}, \bw_2\rangle) \right)^2 \right|^2}_{\eqqcolon T_2}. 
\end{align*}
Using inequality~\eqref{eqn:square-diff-bound}, we obtain
\begin{align*}
T_1 
& \lesssim L^2 \sup_{\bw_1,\bw_2\in B(R)} \sum_{i=1}^m \mathbf{1}_{\|\bx_{j_i}\| \ge \mu} \frac{|\langle \bx_{j_i}, \bw_1\rangle|^2 + |\langle \bx_{j_i}, \bw_2\rangle|^2}{p_{j_i}} \cdot \frac{\|f_1-f_1'\|_{L^\infty(I_{j_i})}^2 + \|f_2-f_2'\|_{L^\infty(I_{j_i})}^2}{p_{j_i}}
\\
& \lesssim L^2 R^2 \left\|\sum_{i=1}^m \frac{1}{p_{j_i}} \bx_{j_i}\bx_{j_i}^\top\right\| \left( \sup_{i\in[m]} \frac{ \left( \|f_1 - f_1'\|_{L^\infty(I_{j_i})}^2 + \|f_2 - f_2'\|_{L^\infty(I_{j_i})}^2 \right) \mathbf{1}_{\|\bx_{j_i}\|\ge\mu} }{p_{j_i}} \right)
.
\end{align*}

On the other hand, by Lipschitz property of $f_1$, we have 
\[
|f_1(\langle \bx_{j_i}, \bw_1\rangle)| \le L\|\bx_{j_i}\|\cdot\|\bw_1\|\le LR\|\bx_{j_i}\|,
\]
since $\bw_1\in B(R)$. The same upper bound holds for $|f_2(\langle \bx_{j_i}, \bw_2\rangle)|$, $|f_1'(\langle \bx_{j_i}, \bw_1\rangle)|$, and $|f_2'(\langle \bx_{j_i}, \bw_2\rangle)|$. It is then easy to show
\begin{align*}
T_2 
& \lesssim \sum_{i=1}^m \mathbf{1}_{\|\bx_{j_i}\| < \mu} \cdot \frac{1}{p_{j_i}^2} L^4 R^4 \|\bx_{j_i}\|^4 \left(\rho_\delta(f_1, f_1') + \rho_\delta(f_2, f_2')\right)\\
& = L^4 R^4 d^2 \left( \sum_{i=1}^m \mathbf{1}_{\|\bx_{j_i}\| < \mu} \right) \left(\rho_\delta(f_1, f_1') + \rho_\delta(f_2, f_2')\right),
\end{align*}
where the last equality uses $p_{j_i}=\|\bx_{j_i}\|^2/d$ again. 
The conclusion follows immediately from summing up the above inequalities for $T_1$ and $T_2$ and then taking square roots.

\subsection{Proof of Lemma \ref{lem:D-infty-net}}
\label{subsec:apd:pf-D-infty-net}
We divide the proof into four steps. First, we give an explicit construction for a candidate of the net $\Net_\Delta$ in Section~\ref{subsubsec:apd:net_construction}. Up to this point, we do not know whether $\Net_\Delta$ is a net, but we can already bound the cardinality of it, thereby proving item (i) of Lemma~\ref{lem:D-infty-net}. Then in Section~\ref{subsubsec:apd:net_proof}, we prove that $\Net_\Delta$ is indeed a $\Delta(\mu,\eta)$ net in $\rho_\infty$ distance. On the other hand, 
proving the bound of Dudley integral in item (ii) turns out the trickiest part of the proof. In Section~\ref{subsubsec:apd:dudley_bound}, we will prove this item by presenting a technique to embed $\Net_\Delta$ to another space of Lipschitz functions (endowed with a different metric than $\rho_\infty$), which enables to draw connections to well-known bounds on the entropy of class of Lipschitz functions. Finally, we prove item (iii) in Section~\ref{subsubsec:apd:Delta_expectation}, which follows from Cauchy-Schwarz inequality. 
\subsubsection{Construction of $\Net_\Delta$.}
\label{subsubsec:apd:net_construction}
For convenience, set $\eta'=\eta/4$ and 
\begin{gather*}
K = \left\lceil\frac{\log(1/\mu)}{\log(1+\eta')}\right\rceil,
\qquad N = \left\lceil \frac{1}{\eta'} \right\rceil,
\\
z_k = \begin{cases}
R(1+\eta')^{-(K+N-k)}, & k=N,N+1,\cdots,N+K,\\
\frac{k}{N}z_N, & k=0,1,\cdots,N-1.
\end{cases}
\end{gather*}
The construction of $z_k$ guarantees that $z_k$ is monotone increasing and
\begin{equation}
\label{eqn:z_prop}
z_{k+1} = (1+\eta') z_k, \quad N\le k\le N+K-1.
\end{equation}
and 
\begin{equation}
\label{eqn:z_crit}
\frac12 \mu R< \frac{1}{1+\eta'} \mu R\le z_N \le \mu R.
\end{equation}
The basic idea is to construct at each point  $\pm z_k$ a grid with spacing $L\eta' z_k$ for $[-Lz_k, Lz_k]$, and connect these grid points by piecewise linear functions. However, for small $z_k$ where $k\le N$ this would be impossible without significantly enlarging the size of the net. To remedy this, for such $k$ the spacing of the grid will be $L\eta z_N$. The precise construction is as follows. Set
\[
\Sigma_0 = \Big\{\sigma=(\sigma_{-N-K}, \sigma_{-N-K+1},\ldots,\sigma_{N+K}): \sigma_0=0,~ \sigma_{\pm k}\in\mathbb Z,~k\in[N+K]\Big\}.
\]
To each $\sigma=(\sigma_{-N-K}, \sigma_{-N-K+1},\ldots,\sigma_{N+K})\in \Sigma_0$, we associate a piecewise linear function defined by
\begin{equation}
\label{eqn:net-pl}
f_\sigma(\pm z_{k}) = L\eta' \sigma_{\pm k} \max(z_k, z_N),\quad 0\le k\le N+K.
\end{equation}
The value of $f_\sigma$ at a point which is not equal to any one of $\pm z_k$ is given by linear interpolation. The whole collection of $\{f_\sigma:\sigma\in\Sigma_0\}$ would be too large to work with. Fortunately, since our net is for $L$-Lipschitz functions rather than arbitrary functions, we can restrict our attention to a much smaller subset $\Sigma\subset \Sigma_0$. 
Denote
\begin{equation}
\label{eqn:net-ind}
\Sigma = \left\{\sigma=(\sigma_{-N-K},\cdots,\sigma_{N+K})\in \Sigma_0: \sigma_0 = 0,~|\sigma_{\pm(k+1)}-\sigma_{\pm k}|\le 2, 0\le k\le N+K-1\right\}.
\end{equation}
We contend that
\begin{equation}
\label{eqn:net-construction}
\Net_\Delta = \{f_\sigma: \sigma\in\Sigma\}.
\end{equation}
is a $\Delta(\mu, \eta)$-net for $\Lip_L$ w.r.t. metric $\rho_\infty$ (note that $f_\sigma$ is only $O(L)$-Lipschitz, thus $\Net_\Delta$ may not be a subset of $\Lip_L$, but this is of no consequence to our proof). Proving this is the goal of the next step.

We conclude this step of the proof by establishing the item (i) of Lemma~\ref{lem:D_bounds}. Suppose $\sigma\in\Sigma$. For any fixed $\sigma_{\pm k}$, there are at most five possible values of $\sigma_{\pm (k+1)}$ due to the constraint $|\sigma_{\pm(k+1)}-\sigma_k|\le 2$. But $\sigma_0$ is already fixed, thus the number of elements in $\Sigma$ cannot exceed $5^{2(N+K)}$. Since $\Net_\Delta$ is an image of $\Sigma$, we have
\[
\log|\Net_\Delta| \le \log |\Sigma| \lesssim N+K \lesssim \frac{\log(1/\mu)}{\log(1+\eta')} + \frac{1}{\eta'}
\lesssim \frac{\log(1/\mu)}{\eta'} = \frac{4\log(1/\mu)}{\eta},
\]
where we used $\eta'=\eta/4$ which implies $\eta'<1/8$ and hence $\log(1+\eta')\ge \eta'/4$. This proves the item (i) of Lemma~\ref{lem:D_bounds} as desired.

\subsubsection{Verifying $\Net_\Delta$ is a $\Delta(\mu,  \eta)$-net.}
\label{subsubsec:apd:net_proof}
For each $f\in\Lip_L$, we can define $\sigma(f)\in\Sigma_0$ as
\[
\sigma(f)_{\pm k} = \begin{cases}
\left\lceil\frac{f(\pm z_k)}{L\eta' \max(z_k, z_N)}\right\rceil,\quad &k\in[N+K], \\
0, \quad &k=0.
\end{cases}
\]
By definition, one has
\begin{equation}
\label{eqn:sigma_bound_crude}
\frac{f(\pm z_k)}{L\eta'\max(z_k, z_N)}\le \sigma(f)_{\pm k} < \frac{f(\pm z_k)}{L\eta'\max(z_k, z_N)} + 1, \quad k\in[N+K].
\end{equation}
It suffices to verify that $\sigma(f)\in\Sigma$ and $\rho_\infty(f, f_{\sigma(f)}) \le \Delta(\mu, \eta)$. 

\paragraph{Verifying $\sigma(f)\in\Sigma$.}
We need to show $|\sigma(f)_{\pm k}| \le N$ and $|\sigma(f)_{\pm (k+1)} - \sigma(f)_{\pm k}| \le 2$. The former follows easily from the definition of $\sigma(f)$ and the fact that $|f(\pm z_k)|\le Lz_k$ (which in turn follows from $f\in\Lip_L$). For the former, we distinguish three cases: (i) $k=0$, (ii) $1\le k\le N-1$, (iii) $N\le k \le N+K$. 

For case (i), \eqref{eqn:sigma_bound_crude} implies
\[
|\sigma(f)_{\pm 1}| \le \frac{|f(\pm z_1)|}{L\eta' z_N} + 1 \le \frac{Lz_1}{L\eta' z_N} + 1 = \frac{1}{N\eta'} + 1 \le 2,
\]
where the equality follows from $z_1=\frac1N z_N$, and the last inequality follows from $N=\lceil 1/\eta'\rceil\ge 1/\eta'$. Since $\sigma(f)_0=0$, this implies $|\sigma(f)_{\pm 1} - \sigma(f)_0|\le 2$ as claimed. 

For case (ii), invoke \eqref{eqn:sigma_bound_crude} again to obtain
\begin{align*}
|\sigma(f)_{\pm(k+1)}-\sigma(f)_{\pm k}|
& \le \left|\frac{f(\pm z_{k+1})}{L\eta' z_N} - \frac{f(\pm z_{k})}{L\eta' z_N}\right| + 1
\\
& = \frac{|f(\pm z_{k+1}) - f(\pm z_k)|}{L\eta' z_N} + 1
\\
& \le \frac{L(z_{k+1} - z_k)}{L\eta' z_N} + 1
\\
& = \frac{1}{N\eta'} + 1 \le 2,
\end{align*}
where the third line follows from $f$ being $L$-Lipschitz and the last line follows from $z_{k+1}=\frac{k+1}N z_N$, $z_k=\frac{k}{N} z_N$.

For case (iii), \eqref{eqn:sigma_bound_crude} implies
\eqref{eqn:sigma_bound_crude} implies
\begin{align*}
|\sigma(f)_{\pm(k+1)}-\sigma(f)_{\pm k}|
& \le \left|\frac{f(\pm z_{k+1})}{L\eta' \max(z_{k+1}, z_N)} - \frac{f(\pm z_{k})}{L\eta' \max(z_{k}, z_N)}\right| + 1
\\
& = \left|\frac{f(\pm z_{k+1})}{L\eta' (1+\eta')z_k} - \frac{(1+\eta')f(\pm z_{k})}{L\eta'(1+\eta')z_{k}}\right| + 1
\\
& \le \frac{|f(\pm z_{k+1}) - f(\pm z_k)|}{L\eta' (1+\eta')z_k} + \frac{|f(\pm z_{k})|}{L(1+\eta')z_{k}} + 1
\\
& \le \frac{L(z_{k+1}-z_k)}{L\eta' (1+\eta')z_k} + \frac{Lz_k}{L(1+\eta')z_{k}} + 1
\\
& = \frac{2}{1+\eta'} + 1 < 3,
\end{align*}
where we used $z_{k+1}=(1+\eta')z_k$ by \eqref{eqn:z_prop} and the fourth line used Lipschitz continuity of $f$ and $f(0)=0$. This implies $|\sigma(f)_{\pm(k+1)}-\sigma(f)_{\pm k}|\le 2$ since $\sigma(f)_k$'s are integers. In conclusion, $\sigma(f)\in\Sigma$, as desired.

\paragraph{Verifying $\rho_\infty(f, f_{\sigma(f)}) \le \Delta(\mu, \eta)$.}
By Lemma~\ref{lem:D_bounds}, we need to bound $\|f-f_{\sigma(f)}\|_{L^\infty(I_j)}$. 
We begin by noting that for the maximum of the absolute value of an affine-linear function in an interval can only be attained at the endpoints of the interval. Since $f_{\sigma(f)}$ is by definition affine-linear on the interval $[z_k, z_{k+1}]$, we have
\begin{equation*}
\sup_{z\in [z_k, z_{k+1}]} |f(z) - f_{\sigma(f)}(z)| \le \sup_{z\in [z_k, z_{k+1}]} \max\left(|f(z) - f_{\sigma(f)}(z_k)|,~|f(z) - f_{\sigma(f)}(z_{k+1})|\right).
\end{equation*}
From \eqref{eqn:sigma_bound_crude} and the construction of $f_{\sigma(f)}$, we have 
\[
|f(z_k) - f_{\sigma(f)}(z_k)| \le L\eta'\max(z_k, z_N), \quad 0\le k \le N+K.
\]
On the other hand, Lipschitz continuity of $f$ implies, 
\[
\sup_{z\in[z_k, z_{k+1}]} \max\left( |f(z) - f(z_k)|,~|f(z) - f(z_{k+1})| \right) \le L(z_{k+1}-z_k).
\]
Combining the above three inequalities yields
\[
\sup_{z\in [z_k, z_{k+1}]} |f(z) - f_{\sigma(f)}(z)| \le L\eta'\max(z_k, z_N) + L(z_{k+1} - z_k).
\]
By an argument verbatim to the above, we also have
\[
\sup_{z\in [-z_{k+1}, -z_k]} |f(z) - f_{\sigma(f)}(z)| \le L\eta'\max(z_k, z_N) + L(z_{k+1} - z_k).
\]
The above two inequalities hold for all $0\le k\le N+K$, therefore for any $k\in[N+K]$, we have
\begin{align}
\|f - f_{\sigma(f)}\|_{L^\infty([-z_k, z_k])} 
& = \max_{l\in[k-1]}\sup_{z\in[-z_{l+1},-z_l]\cup[z_l, z_{l+1}]} |f(z) - f_{\sigma(f)}(z)|
\nonumber\\
& \le \max_{l\in[k-1]}\left(L\eta'\max(z_l, z_N) + L(z_{l+1} - z_l)\right)
\nonumber\\
& = \begin{cases}
    2L\eta' z_k, & N+1\le k\le N+K, \\
    L\left(\eta' + \frac1N\right)z_N , & 1\le k\le N,
\end{cases}
\label{eqn:net_error_tmp}
\end{align}
where the last equality follows from straightforward computations by the construction of $z_k$'s.

We are now ready to bound $\rho_\infty(f, f_\sigma(f))$. Recall Lemma~\ref{lem:D_bounds} and the notation $I_j=[-R\|\bx_{j}\|, R\|\bx_{j}\|]$ there. Since we have assumed without loss of generality that $\bX$ has orthonormal columns, $\bx_j$ being one of its row has norm no more than $1$. Therefore $R\|\bx_{j}\|\le R=z_{N+K}$. 
In virtue of \eqref{eqn:z_prop}, for any $j\in[n]$ such that $\|\bx_{j}\|\ge z_N/R$, there is some $k_j\ge N$ such that $\frac12 z_{k_j}\le R\|\bx_{j}\|\le z_{k_j}$, hence $I_j\subset[-z_{k_j}, z_{k_j}]$. Invoking \eqref{eqn:net_error_tmp}, we obtain, for any $j\in[n]$ such that $\|\bx_{j}\|\ge z_N/R$, that
\begin{align}
\frac{1}{p_j} \|f - f_{\sigma(f)}\|_{L^\infty(I_j)}^2 
&\le \frac{1}{p_j} \|f - f_{\sigma(f)}\|_{L^\infty([-z_{k_j}, z_{k_j}])}^2 
\nonumber\\
& \le \frac{1}{p_j}(2L\eta' z_{k_j})^2 
\nonumber\\
& \le \frac{1}{p_j}(2L\eta'\cdot  2R\|\bx_{j}\|)^2 
\nonumber\\
& = \frac{d}{\|\bx_{j}\|^2}\cdot \eta^2 L^2 R^2 \|\bx_{j}\|^2 = \eta^2 L^2 R^2 d,
\label{eqn:L_infty_bound_for_large_rows}
\end{align}
where the penultimate equality follows from $\eta'=\eta/4$ and $p_{j}=\|\bx_{j}\|^2/d$. 
On the other hand, for $j\in[n]$ such that $\|x_j\|<z_N/R$, one may invoke \eqref{eqn:net_error_tmp} again to obtain
\begin{align*}
\frac{1}{p_j} \|f - f_{\sigma(f)}\|_{L^\infty(I_j)}^2 
&\le  \frac{1}{p_j} \|f - f_{\sigma(f)}\|_{L^\infty([-z_{N}, z_{N}])}^2 \\
& \le \frac{1}{p_j} \left( L\Big(\eta'+\frac1N\Big) z_{N} \right)^2 \\
& \le \frac{1}{p_j}(2L\eta' \mu R)^2 \\
& = \frac14 \eta^2 L^2 R^2\cdot \frac{\mu^2}{p_j},
\end{align*}
where the penultimate line follows from $N=\lceil 1/\eta'\rceil\ge 1/\eta'$ and from \eqref{eqn:z_crit}, and the last line used $\eta'=\eta/4$. 
Summing up the above two inequalities, we have, for any $j\in[n]$, regardless of how large $\|\bx_j\|$ is, that
\[
\frac{1}{p_j} \|f - f_{\sigma(f)}\|_{L^\infty(I_j)}^2 
\le \eta^2 L^2 R^2 d + \frac14 \eta^2 L^2 R^2\cdot \frac{\mu^2}{p_j}.
\]
Now we apply Lemma~\ref{lem:D_bounds} to obtain
\begin{align*}
\rho_\infty(f, f_{\sigma(f)}) 
&\le LR \left\|\sum_{i=1}^m \frac{1}{p_{j_i}} \bx_{j_i}\bx_{j_i}^\top\right\|^{1/2} 
\left( \sum_{i=1}^m \frac{1}{p_{j_i}} \|f - f_{\sigma(f)} \|^2_{L^\infty(I_{j_i})} \right)^{1/2}
\\
&\le LR \left\|\sum_{i=1}^m \frac{1}{p_{j_i}} \bx_{j_i}\bx_{j_i}^\top\right\|^{1/2} \cdot \eta LR \left( \sum_{i=1}^m \Big(d + \frac{\mu^2}{4p_{j_i}}\Big)  \right)^{1/2}
\\
&\le \eta L^2 R^2 \left\|\sum_{i=1}^m \frac{1}{p_{j_i}} \bx_{j_i}\bx_{j_i}^\top\right\|^{1/2} \left( md + \sum_{i=1}^m\frac{\mu^2}{p_{j_i}}  \right)^{1/2}
\\
& = \Delta(\mu, \eta),
\end{align*}
as claimed.

\subsubsection{Bounding the Dudley integral of $\Net_\Delta$}
\label{subsubsec:apd:dudley_bound}
Recall that we need to bound $\int_0^\infty\sqrt{\log\Net(\Net_\Delta, \rho_2, \eps)}~\mathrm{d}\eps$, which amounts to bounding the entropy of the metric space $(\Net_\Delta, \rho_2)$. This will be achieved by embedding this metric space into $(\Lip_1, L^\infty([-1, 1]))$, which is again a space of Lipschitz functions yet with metric $L^\infty([-1, 1])$ instead of $\rho_\infty$. The entropy of this latter space has a well-known bound. 
The embedding is achieved by the following two lemmas.

\begin{lemma}
\label{lem:net_embed_discrete}
With the notations $f_\sigma, \sigma\in\Sigma$ as in Section~\ref{subsubsec:apd:net_construction}, we have, for all $\sigma,\sigma'\in\Sigma$, that
\[
\rho_2(f_{\sigma}, f_{\sigma'}) 
\lesssim  
\eta L^2 R^2 \sqrt d \left\|\sum_{i=1}^m \frac{1}{p_{j_i}} \bx_{j_i}\bx_{j_i}^\top\right\|^{1/2} 
\|\sigma - \sigma'\|_\infty + L^2 R^2 d \left(\sum_{i=1}^m \mathbf{1}_{\|\bx_{j_i}\|<\mu}\right)^{1/2} \rho_\delta(\sigma, \sigma').
\]
Here $\|\cdot\|_\infty$ denotes the $\ell^\infty$-norm of vectors, and $\rho_\delta$ is, by abuse of notation, the Dirac metric on $\Sigma$, defined as
\[
\rho_\delta(\sigma, \sigma') = \begin{cases}
    1, & \sigma=\sigma',\\
    0, & \sigma\ne\sigma'.
\end{cases}
\]
\end{lemma}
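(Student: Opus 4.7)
The plan is to invoke the decomposition of $\rho_2$ from Lemma~\ref{lem:D_bounds} with the same parameter $\mu$ used in the construction of $\Net_\Delta$, and then bound the two resulting terms separately. The Dirac term is immediate: by the construction \eqref{eqn:net-pl}, if $\sigma \neq \sigma'$ then $f_\sigma$ and $f_{\sigma'}$ disagree at some grid point $\pm z_k$, so $\rho_\delta(f_\sigma, f_{\sigma'}) = \rho_\delta(\sigma, \sigma')$, and the second contribution in Lemma~\ref{lem:D_bounds} matches the second term of the claimed bound verbatim. All the work therefore goes into the $L^\infty$-based term, for which it suffices to show that, for every $i \in [m]$ with $\|\bx_{j_i}\| \ge \mu$,
\[
\frac{\|f_\sigma - f_{\sigma'}\|_{L^\infty(I_{j_i})}}{\sqrt{p_{j_i}}} \lesssim \eta L R \sqrt{d}\, \|\sigma - \sigma'\|_\infty.
\]

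The key geometric observation is that $\|\bx_{j_i}\| \ge \mu$, combined with \eqref{eqn:z_crit}, gives $R\|\bx_{j_i}\| \ge R\mu \ge z_N$. Since also $R\|\bx_{j_i}\| \le R = z_{N+K}$ (using orthonormality of the columns of $\bX$), the geometric spacing \eqref{eqn:z_prop} yields an index $k^\star$ with $N \le k^\star \le N+K-1$ such that $z_{k^\star} \le R\|\bx_{j_i}\| \le z_{k^\star+1} = (1+\eta') z_{k^\star} \le 2R\|\bx_{j_i}\|$. In particular, $I_{j_i} \subset [-z_{k^\star+1}, z_{k^\star+1}]$.

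Now $f_\sigma - f_{\sigma'}$ is piecewise linear with breakpoints contained in $\{\pm z_k : 0 \le k \le N+K\}$, so its $L^\infty$ norm over any interval of the form $[-z_{k^\star+1}, z_{k^\star+1}]$ is attained at some breakpoint $\pm z_k$ with $0 \le k \le k^\star+1$ (the value at $0$ is zero for both functions). Using \eqref{eqn:net-pl}, at such a breakpoint
\[
|f_\sigma(\pm z_k) - f_{\sigma'}(\pm z_k)| = L \eta' \, |\sigma_{\pm k} - \sigma'_{\pm k}| \, \max(z_k, z_N) \le L \eta' \, \|\sigma - \sigma'\|_\infty \, z_{k^\star+1},
\]
where in the last step I use $\max(z_k, z_N) \le z_{k^\star+1}$ (valid for both $k \le N$, since $z_N \le z_{k^\star+1}$, and $k > N$, since $z_k$ is monotone). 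Combining this with $z_{k^\star+1} \le 2 R \|\bx_{j_i}\|$ and $\sqrt{p_{j_i}} = \|\bx_{j_i}\|/\sqrt{d}$ (which follows from orthonormality, so $p_{j_i} = \|\bx_{j_i}\|^2/d$) gives
\[
\frac{\|f_\sigma - f_{\sigma'}\|_{L^\infty(I_{j_i})}}{\sqrt{p_{j_i}}} \le \frac{2 L \eta' R \|\bx_{j_i}\| \, \|\sigma - \sigma'\|_\infty}{\|\bx_{j_i}\|/\sqrt{d}} = 2 L \eta' R \sqrt{d}\, \|\sigma - \sigma'\|_\infty,
\]
which is the desired bound after recalling $\eta' = \eta/4$. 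Substituting into the first summand of the $\rho_2$ decomposition from Lemma~\ref{lem:D_bounds} then yields the first term of the lemma.

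I do not expect a serious obstacle in this proof: the only mild subtlety is handling the two regimes $k \le N$ and $k > N$ in the definition of $f_\sigma$ uniformly through the envelope $\max(z_k, z_N) \le z_{k^\star+1}$, and recognizing that piecewise-linearity of the difference reduces the $L^\infty$ norm to a maximum over finitely many breakpoints. The indicator $\mathbf{1}_{\|\bx_{j_i}\| \ge \mu}$ in Lemma~\ref{lem:D_bounds} is precisely what allows us to restrict attention to the regime where the key geometric inequality $R\|\bx_{j_i}\| \ge z_N$ is available.
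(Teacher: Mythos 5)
Your proposal is correct and follows essentially the same route as the paper: the paper's proof is a one-liner that says the $L^\infty$-term bound "can be verified in a similar way as we proved \eqref{eqn:L_infty_bound_for_large_rows}", and your argument is precisely the worked-out version of that analogy, using $z_N \le \mu R$ from \eqref{eqn:z_crit} to place $R\|\bx_{j_i}\|$ in the geometric regime, locating $k^\star$ with $z_{k^\star} \le R\|\bx_{j_i}\| \le z_{k^\star+1} \le 2R\|\bx_{j_i}\|$, reducing the $L^\infty$ norm of a piecewise-linear function to its values at breakpoints, and invoking $\max(z_k, z_N) \le z_{k^\star+1}$ together with $p_{j_i} = \|\bx_{j_i}\|^2/d$. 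The reduction of the Dirac term via $\sigma \ne \sigma' \Rightarrow f_\sigma \ne f_{\sigma'}$ is also correct.
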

\begin{proof}
By the construction of $f_\sigma, f_{\sigma'}$ in~\eqref{eqn:net-pl}, we can verify in a similar way as we proved \eqref{eqn:L_infty_bound_for_large_rows} that
\begin{equation*}
\sup_{i\in[m]} \frac{ \|f_\sigma - f_{\sigma'}\|_{ L^\infty(I_{j_i}) } \mathbf{1}_{ \|\bx_{j_i}\|\ge \mu}}{\sqrt{p_{j_i}}} \lesssim \eta L R \sqrt{d} \|\sigma - \sigma'\|_{\infty}. 
\end{equation*}
The conclusion then follows from the definition of $\rho_2$.
\end{proof}

\begin{lemma}
\label{lem:discrete_embed_lip}
For $\sigma\in\Sigma$, define a piecewise linear function $g_\sigma$ on $[-1, 1]$ by
\[
g_\sigma\left(\frac{\pm k}{N+K}\right) = \frac{\sigma_{\pm k}}{2(N+K)}.
\]
At a point other than $\frac{\pm k}{N+K}$, the value of $g_\sigma$ is determined by linear interpolation. We have $g(0)=0$, $g$ is $1$-Lipschitz, and 
\[
\|\sigma-\sigma'\|_{\infty} =  2(N+K)\|g_\sigma - g_{\sigma'}\|_{L^\infty([-1, 1])},
\quad \sigma,\sigma'\in\Sigma.
\]
\end{lemma}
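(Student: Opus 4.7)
The lemma is essentially a direct verification from the definitions of $g_\sigma$ and the index set $\Sigma$, so the plan is to check the three claims in order without any heavy machinery.

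First, I would note that $g_\sigma(0) = g_\sigma(0/(N+K)) = \sigma_0/(2(N+K)) = 0$, since by the definition of $\Sigma$ in~\eqref{eqn:net-ind} every $\sigma \in \Sigma$ satisfies $\sigma_0 = 0$. This takes care of the anchoring at the origin.

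Next I would verify the $1$-Lipschitz property. Since $g_\sigma$ is piecewise linear with nodes at the uniformly spaced points $\pm k/(N+K)$, it suffices to bound the slope on each linear segment. On the segment between $k/(N+K)$ and $(k+1)/(N+K)$, the slope is
\[
\frac{g_\sigma\!\left(\frac{k+1}{N+K}\right) - g_\sigma\!\left(\frac{k}{N+K}\right)}{1/(N+K)} = \frac{\sigma_{k+1} - \sigma_k}{2},
\]
and by the defining constraint $|\sigma_{k+1}-\sigma_k|\le 2$ of $\Sigma$, this slope has absolute value at most $1$. The same argument applies to negative indices, so $g_\sigma$ is $1$-Lipschitz on $[-1,1]$.

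Finally, for the norm identity, the key observation is that $g_\sigma - g_{\sigma'}$ is itself piecewise linear with nodes at $\pm k/(N+K)$. Since the maximum of the absolute value of a piecewise linear function is attained at one of its nodes, we get
\[
\|g_\sigma - g_{\sigma'}\|_{L^\infty([-1,1])} = \max_{-(N+K)\le k \le N+K} \left|\frac{\sigma_k - \sigma'_k}{2(N+K)}\right| = \frac{\|\sigma - \sigma'\|_\infty}{2(N+K)},
\]
which after rearrangement is exactly the claimed identity. No step here is an obstacle; the entire lemma is a bookkeeping exercise designed to record that the map $\sigma \mapsto g_\sigma$ provides the desired isometric embedding (up to the rescaling factor $2(N+K)$) from $(\Sigma, \|\cdot\|_\infty)$ into $(\Lip_1, L^\infty([-1,1]))$, which is what will later let us invoke the standard entropy bound for $1$-Lipschitz functions on a compact interval to control the Dudley integral.
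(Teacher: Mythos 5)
Your proof is correct and follows essentially the same route as the paper: the paper also verifies the $1$-Lipschitz property by bounding the slope on each segment via $|\sigma_{\pm(k+1)}-\sigma_{\pm k}|\le 2$, and proves the $L^\infty$ identity by noting that the sup of an affine function on an interval is attained at an endpoint and then taking the max over all segments. The only cosmetic difference is that you state the node-maximum principle globally while the paper does it interval by interval; the content is the same.
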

\begin{proof}
The equality follows easily from the fact that the supremum of the absolute value of an affine-linear function in an interval can only be attained at the endpoints of the interval. In particular, for any $k\in\mathbb Z$, $-(N+K)\le k\le N+K-1$, we have
\begin{align*}
\sup_{z\in[\frac{k}{N+K}, \frac{k+1}{N+K}]} |g_\sigma(z) - g_{\sigma'}(z)|
= \frac{1}{2(N+K)} \max\left( |\sigma_{k+1}-\sigma'_{k+1}|, |\sigma_k - \sigma'_k|\right).
\end{align*}
Taking union over all such $k$ completes the proof for the equality. It remains to check $g$ is $1$-Lipschitz. As $g$ is piecewise linear, it suffices to check the slope on each of the defining interval lies in $[-1,1]$, i.e.,
\[
(N+K)\left| g_\sigma\left(\frac{\pm (k+1)}{N+K}\right) - g_\sigma\left(\frac{\pm k}{N+K}\right) \right| \le 1,
\]
which, by definition of $g_\sigma$, is equivalent to
\[
|\sigma_{\pm(k+1)} - \sigma_{\pm k}| \le 2.
\]
But this is certainly true as $\sigma\in\Sigma$. This completes the proof.
\end{proof}

Using the embeddings given by Lemma~\ref{lem:net_embed_discrete} and Lemma~\ref{lem:discrete_embed_lip}, we can reduce the goal of bounding the entropy of $(\Net_\Delta, \rho_2)$ to bounding the entropies of $(\Lip_1, L^\infty([-1, 1])$ and of $(\Sigma, \rho_\delta)$.
\begin{corollary}
\label{cor:net_dudley_tmp}
Assume $m\ge Cd\log d$. We have
\begin{align*}
\int_0^\infty\sqrt{\log\Net(\Net_\Delta, \rho_2, \eps)}~\mathrm{d}\eps
& \lesssim L^2 R^2 \sqrt{d}\log\left(\frac{1}{\mu}\right) \cdot\left\|\sum_{i=1}^m \frac{1}{p_{j_i}} \bx_{j_i}\bx_{j_i}^\top\right\|^{1/2} \int_0^\infty\sqrt{\log\Net(\Lip_1, L^\infty([-1, 1]), \eps)}~\mathrm{d}\eps
\\
& + L^2 R^2 d \left(\sum_{i=1}^m \mathbf{1}_{\|\bx_{j_i}\|<\mu}\right)^{1/2} \int_0^\infty\sqrt{\log\Net(\Sigma, \rho_\delta, \eps)}~\mathrm{d}\eps.
\end{align*}
\end{corollary}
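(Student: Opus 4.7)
The plan is to reduce the entropy bound on $(\Net_\Delta, \rho_2)$ to a well-studied entropy bound on $(\Lip_1, L^\infty([-1,1]))$, by chaining together the two embeddings supplied in Lemma~\ref{lem:net_embed_discrete} and Lemma~\ref{lem:discrete_embed_lip}. First I would identify $\Net_\Delta$ with the index set $\Sigma$ via the bijection $\sigma\mapsto f_\sigma$, and apply Lemma~\ref{lem:net_embed_discrete} to dominate $\rho_2$ on $\Net_\Delta$ by a sum of two pseudo-metrics on $\Sigma$: a rescaled $\ell^\infty$ metric with prefactor
\[
A \coloneqq \eta L^2 R^2\sqrt d\,\left\|\sum_{i=1}^m \tfrac{1}{p_{j_i}}\bx_{j_i}\bx_{j_i}^\top\right\|^{1/2},
\]
and a rescaled Dirac metric with prefactor $B \coloneqq L^2 R^2 d\,(\sum_i \mathbf{1}_{\|\bx_{j_i}\|<\mu})^{1/2}$. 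The sublinearity of Dudley's integral (Lemma~\ref{lem:sublinear-gamma2}) then splits the integral as
\[
\int_0^\infty\!\!\sqrt{\log\Net(\Net_\Delta,\rho_2,\eps)}\,\mathrm d\eps
\lesssim A \int_0^\infty\!\!\sqrt{\log\Net(\Sigma,\|\cdot\|_\infty,\eps)}\,\mathrm d\eps + B\int_0^\infty\!\!\sqrt{\log\Net(\Sigma,\rho_\delta,\eps)}\,\mathrm d\eps.
\]

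Next, for the first piece, I would invoke Lemma~\ref{lem:discrete_embed_lip}, which provides a $1$-Lipschitz embedding $\sigma\mapsto g_\sigma$ of $\Sigma$ into $\Lip_1$ satisfying the exact relation $\|\sigma-\sigma'\|_\infty = 2(N+K)\,\|g_\sigma-g_{\sigma'}\|_{L^\infty([-1,1])}$. This immediately gives
\[
\Net(\Sigma,\|\cdot\|_\infty,\eps) \le \Net\!\left(\Lip_1,L^\infty([-1,1]),\tfrac{\eps}{2(N+K)}\right),
\]
and a change of variable in the Dudley integral produces a factor $2(N+K)$ in front of $\int_0^\infty\sqrt{\log\Net(\Lip_1,L^\infty([-1,1]),\eps)}\,\mathrm d\eps$. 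Plugging in the estimate $N+K \lesssim \log(1/\mu)/\eta$ from the construction in Section~\ref{subsubsec:apd:net_construction}, the $\eta$ in $A$ cancels exactly against the $1/\eta$ from $N+K$, leaving the prefactor $L^2 R^2\sqrt d\,\log(1/\mu)\,\|\sum_i\tfrac{1}{p_{j_i}}\bx_{j_i}\bx_{j_i}^\top\|^{1/2}$, which matches the first term of the claim.

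Finally, the second integral is simply kept in its stated form (there is nothing further to do since $\rho_\delta$ is trivial and the prefactor $B$ is already the one appearing in the corollary). Assembling the two contributions yields the desired inequality. The only mildly delicate point is the change of variable in the first integral and verifying that the factors of $\eta$ in $A$ and in $1/(N+K)$ cancel cleanly; once that bookkeeping is done, the proof is essentially a two-line assembly of the two embedding lemmas together with the sublinearity of Dudley's integral.
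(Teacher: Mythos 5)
Your proposal is correct and matches the paper's argument: both combine Lemma~\ref{lem:net_embed_discrete} and Lemma~\ref{lem:discrete_embed_lip} with the sublinearity of Dudley's integral (Lemma~\ref{lem:sublinear-gamma2}), and both rely on $N+K \lesssim \eta^{-1}\log(1/\mu)$ so that the factor $\eta$ in the $\rho_2$ bound cancels against $N+K$. The only cosmetic difference is that you apply sublinearity first and then change variables in the Dudley integral for the $\|\cdot\|_\infty$ piece, whereas the paper first assembles the full metric comparison $\rho_2(f_\sigma,f_{\sigma'}) \lesssim (\cdots)\|g_\sigma-g_{\sigma'}\|_{L^\infty([-1,1])} + (\cdots)\rho_\delta(\sigma,\sigma')$ and applies sublinearity once at the end; the two orderings are equivalent.
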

\begin{proof}
Combining Lemma~\ref{lem:net_embed_discrete} and Lemma~\ref{lem:discrete_embed_lip}, we have
\begin{align*}
\rho_2(f_\sigma, f_{\sigma'})
& \lesssim 
\eta(N+K) L^2 R^2 \sqrt{d}\left\|\sum_{i=1}^m \frac{1}{p_{j_i}} \bx_{j_i}\bx_{j_i}^\top\right\|^{1/2} \|g_\sigma - g_{\sigma'}\|_{L^\infty([-1, 1])}
+ L^2 R^2 d \left(\sum_{i=1}^m \mathbf{1}_{\|\bx_{j_i}\|<\mu}\right)^{1/2} \rho_\delta(\sigma, \sigma')
\\
& \lesssim L^2 R^2 \sqrt{d}\log\left(\frac{1}{\mu}\right) \cdot\left\|\sum_{i=1}^m \frac{1}{p_{j_i}} \bx_{j_i}\bx_{j_i}^\top\right\|^{1/2} \|g_\sigma - g_{\sigma'}\|_{L^\infty([-1, 1])}
+ L^2 R^2 d \left(\sum_{i=1}^m \mathbf{1}_{\|\bx_{j_i}\|<\mu}\right)^{1/2} \rho_\delta(\sigma, \sigma'),
\end{align*}
where we used $N\lesssim 1/\eta$ and $K\lesssim \frac{1}{\eta}\log\frac{1}{\mu}$. If we regard all three metrics $\rho_2(f_\sigma, f_{\sigma'})$, $\|g_\sigma - g_{\sigma'}\|_{L^\infty([-1, 1])}$, $\rho_\delta(\sigma, \sigma')$ as metrics on $\Sigma$, the desired conclusion  readily follows from Lemma~\ref{lem:sublinear-gamma2}.
\end{proof}

It remains to bound the entropies of $(\Lip_1, L^\infty([-1, 1])$ and of $(\Sigma, \rho_\delta)$. 
The former is well-known, presented in the next lemma, while the latter can be computed easily.
\begin{lemma}[Entropy bound of Lipschitz function class, \cite{talagrand2022upper}]
\label{lem:lip_entropy}
We have
\[
\log\Net(\Lip_1, L^\infty([-1, 1]), \eps) \lesssim \frac{1}{\eps}, \quad \eps>0.
\]
\end{lemma}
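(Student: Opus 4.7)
The plan is to construct an explicit $\eps$-net for $\Lip_1$ in the $L^\infty([-1,1])$ norm with cardinality at most $\exp(O(1/\eps))$, by discretizing both the domain and the range in a way that is compatible with the Lipschitz constraint. This is essentially a much simpler version of the construction carried out in Section~\ref{subsubsec:apd:net_construction}, since here there are no sample-dependent weights and the functions live on a bounded interval.

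Concretely, let $K = \lceil 1/\eps \rceil$ and place grid points $x_k = k\eps$ for integers $-K \le k \le K$. To each integer sequence $\sigma = (\sigma_{-K}, \dots, \sigma_K)$ satisfying $\sigma_0 = 0$ and $|\sigma_{k+1} - \sigma_k| \le 2$, associate the piecewise linear function $f_\sigma$ with $f_\sigma(x_k) = \eps\sigma_k$ and linear interpolation on each subinterval $[x_k, x_{k+1}]$. Since each increment $\sigma_{k+1} - \sigma_k$ can take at most $5$ values, the collection $\Net_\eps \coloneqq \{f_\sigma\}$ satisfies $|\Net_\eps| \le 5^{2K}$, hence $\log|\Net_\eps| \lesssim 1/\eps$.

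To verify that $\Net_\eps$ is an $O(\eps)$-net, take any $f \in \Lip_1$ and set $\sigma_k = \lfloor f(x_k)/\eps\rfloor$. Since $f(0) = 0$ gives $\sigma_0 = 0$, and the $1$-Lipschitz bound $|f(x_{k+1})-f(x_k)|\le\eps$ combined with the integer rounding yields $|\sigma_{k+1}-\sigma_k|\le 2$, the sequence is admissible. On each interval $[x_k, x_{k+1}]$, the function $f_\sigma$ is affine with slope $\sigma_{k+1}-\sigma_k \in \{-2,\dots,2\}$ and hence $2$-Lipschitz, while $|f(x_k) - f_\sigma(x_k)| \le \eps$ by construction. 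The triangle inequality then gives $|f(x) - f_\sigma(x)| \le |f(x)-f(x_k)| + |f(x_k)-f_\sigma(x_k)| + |f_\sigma(x_k)-f_\sigma(x)| \le 4\eps$ uniformly on $[-1,1]$. Rescaling $\eps$ by a universal constant yields the claim.

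There is no real obstacle here: the argument is a textbook counting/chaining-style construction, and the only things to monitor are (i) that the combinatorial set of admissible difference sequences grows at the rate $\exp(O(1/\eps))$ and (ii) that the piecewise linear interpolant has Lipschitz constant bounded by a universal constant so that the $L^\infty$ approximation error is $O(\eps)$. The key structural insight — which is the reason the metric entropy is only $1/\eps$ rather than something like $n/\eps$ for a domain of size $n$ — is that the Lipschitz constraint forbids wild oscillations at scale below $\eps$, so the effective combinatorial degree of freedom at each grid point is $O(1)$ rather than $O(1/\eps)$.
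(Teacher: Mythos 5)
The paper does not actually prove this lemma: it is quoted as a known fact and attributed to \cite{talagrand2022upper}, so there is no internal proof to compare against. Your argument is the standard Kolmogorov--Tikhomirov construction, and it is correct in substance. One small point is worth tightening. The paper's definition of $\Net(K, d, \eps)$ in the appendix requires the net points $x_1,\dots,x_n$ to lie in $K$, i.e.\ in $\Lip_1$ itself, but the piecewise-linear $f_\sigma$ you build with $|\sigma_{k+1}-\sigma_k|\le 2$ can have slope $\pm 2$ and so are only $2$-Lipschitz, hence outside $\Lip_1$. The fix is already implicit in your own rounding: if $|a-b|\le 1$ then $|\lfloor a\rfloor - \lfloor b\rfloor| \le 1$, so setting $\sigma_k=\lfloor f(x_k)/\eps\rfloor$ actually gives $|\sigma_{k+1}-\sigma_k|\le 1$, not merely $\le 2$. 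Restricting the admissible sequences accordingly (three choices per step, so still $\log|\Net_\eps|\lesssim K\lesssim 1/\eps$) makes each $f_\sigma$ $1$-Lipschitz with $f_\sigma(0)=\eps\sigma_0=0$; extending $f_\sigma$ past $\pm x_K$ by a constant keeps it $1$-Lipschitz on all of $\bbR$, so the net genuinely sits inside $\Lip_1$. With that correction the error bound simplifies as well: $|f(x)-f(x_k)|\le\eps$, $|f(x_k)-f_\sigma(x_k)|\le\eps$, $|f_\sigma(x_k)-f_\sigma(x)|\le\eps$, giving a $3\eps$-net, and rescaling finishes the claim. Your remark that this mirrors the more elaborate, sampling-aware construction of $\Net_\Delta$ in Section~\ref{subsubsec:apd:net_construction} is apt; that construction is essentially the same discretize-domain-and-range recipe, complicated by the fact that the grid spacing there must adapt to the leverage scores.
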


We have now collected all the ingredients to prove the item (ii) of Lemma~\ref{lem:D-infty-net}.
\begin{proof}[Proof of the item (ii) of Lemma~\ref{lem:D-infty-net}]
Since the diameter of $\Lip_1$ with respect to $L^\infty([-1, 1])$ is $1$, we deduce immediately from the Lemma~\ref{lem:lip_entropy} that
\begin{equation*}
\int_0^\infty\sqrt{\log\Net(\Lip_1, L^\infty([-1, 1]), \eps)}~\mathrm{d}\eps
= \int_0^1\sqrt{\log\Net(\Lip_1, L^\infty([-1, 1]), \eps)}~\mathrm{d}\eps
\lesssim \int_0^1\sqrt{\frac1{\eps}}~\mathrm{d}\eps\lesssim 1.
\end{equation*}

Turning to bound the entropy of $(\Sigma,\rho_\delta)$, we simply note that $\rho_\delta\le 1$, hence
\begin{equation*}
\int_0^\infty\sqrt{\log\Net(\Sigma, \rho_\delta, \eps)}~\mathrm{d}\eps
= \int_0^1\sqrt{\log\Net(\Sigma, \rho_\delta, \eps)}~\mathrm{d}\eps \lesssim \int_0^1\sqrt{\log|\Sigma|}~\mathrm{d}\eps
\lesssim \sqrt{\frac{\log(1/\mu)}{\eta}},
\end{equation*}
where the last inequality used the conclusion of item (i) in Lemma~\ref{lem:D-infty-net}, proved in Section~\ref{subsubsec:apd:net_construction}.

Plug the above two inequalities into Corollary~\ref{cor:net_dudley_tmp} to obtain
\begin{align*}
\int_0^\infty\sqrt{\log\Net(\Net_\Delta, \rho_2, \eps)}~\mathrm{d}\eps
& \lesssim L^2 R^2 \sqrt{d}\log\left(\frac{1}{\mu}\right) \cdot\left\|\sum_{i=1}^m \frac{1}{p_{j_i}} \bx_{j_i}\bx_{j_i}^\top\right\|^{1/2} 
\\
& + L^2 R^2 d \left(\sum_{i=1}^m \mathbf{1}_{\|\bx_{j_i}\|<\mu}\right)^{1/2} \sqrt{\frac{\log(1/\mu)}{\eta}}.
\end{align*}
The desired conclusion follows taking expectations on both sides, using \eqref{eqn:matrix-chernoff} and Cauchy-Schwarz to bound the expectation of the first term on the right hand side, and using the following fact to bound the expectation of the second term on the right hand side:
\begin{align*}
\bbE \left(\sum_{i=1}^m \mathbf{1}_{\|\bx_{j_i}\|<\mu}\right)^{1/2} 
 \le \left(\bbE \sum_{i=1}^m \mathbf{1}_{\|\bx_{j_i}\|<\mu}\right)^{1/2} 
& = \left(m \sum_{j=1}^n p_j\mathbf{1}_{\|\bx_{j}\|<\mu}\right)^{1/2} \\
& = \left(m \sum_{j=1}^n \frac{\|\bx_j\|^2}{d}\mathbf{1}_{\|\bx_{j}\|<\mu}\right)^{1/2} \\
& \le \left(m \sum_{j=1}^n \frac{\mu^2}{d}\right)^{1/2} \\
& = \mu\sqrt{\frac{nm}{d}}.
\end{align*}
\end{proof}

\subsubsection{Bounding $\E\Delta$.}
\label{subsubsec:apd:Delta_expectation}
Recall the Chernoff bound \eqref{eqn:matrix-chernoff}, we have by Cauchy-Schwarz inequality that
\begin{align*}
\E\Delta \lesssim \eta L^2 R^2 \sqrt{m} \left(md + \E\sum_{i=1}^m \frac{\mu^2}{p_{j_i}}\right)^{1/2}.
\end{align*}
Note that
\begin{align*}
\E\frac{1}{p_{j_i}} = \sum_{j\in[n]} p_j \frac{1}{p_j} = n,
\end{align*}
thus 
\begin{align*}
\E\Delta \lesssim \eta L^2 R^2 \sqrt{m} \left(md + m\mu^2 n\right)^{1/2} \lesssim L^2 R^2 m \sqrt{d} \cdot \sqrt{1+\frac{\mu^2 n}{d}},
\end{align*}
as claimed.

\end{document}